\documentclass[journal,10pt,twocolumn]{IEEEtran}
\usepackage{amsmath,amssymb,amsthm}
\usepackage{mathtools}
\usepackage{bm}
\usepackage{algorithm}
\usepackage{algorithmic}
\usepackage{cite}
\usepackage{hyperref}
\usepackage{graphicx}
\usepackage{booktabs}
\usepackage{multirow}
\usepackage{array}
\usepackage{enumerate}
\usepackage{tabularx}
\usepackage{color}
\usepackage[capitalize]{cleveref}
\usepackage[T1]{fontenc}
\usepackage[utf8]{inputenc}
\usepackage{amsmath,amssymb,bm}
\usepackage{graphicx}
\usepackage{enumerate}
\usepackage{booktabs}
\usepackage{multirow}
\usepackage{makecell}
\usepackage[table]{xcolor}
\usepackage{siunitx}
\usepackage{url}
\usepackage{cite}
\usepackage{balance}
\graphicspath{{figures/}}
\usepackage{subcaption}

\newtheorem{theorem}{Theorem}
\newtheorem{lemma}[theorem]{Lemma}
\newtheorem{corollary}[theorem]{Corollary}
\newtheorem{proposition}[theorem]{Proposition}
\newtheorem{definition}{Definition}
\newtheorem{remark}{Remark}
\newtheorem{assumption}{Assumption}

\newcommand{\prel}{\boldsymbol{p}_{\mathrm{rel}}}
\newcommand{\vrel}{\boldsymbol{v}_{\mathrm{rel}}}

\newcommand{\F}{\mathcal{F}}
\newcommand{\Cset}{\mathcal{C}^{*}}

\definecolor{cellgreen}{RGB}{220,240,220}
\definecolor{hdrblue}{RGB}{30,77,120}
 
\newcommand{\Kmax}{\kappa_{\max}}

\newcommand{\nadv}{n_{\mathrm{adv}}}

\newcommand{\eobs}{\hat{\boldsymbol{e}}_{\mathrm{obs}}}
\newcommand{\eobsp}{\hat{\boldsymbol{e}}_{\mathrm{obs}}^{\perp}}
\newcommand{\erob}{\hat{\boldsymbol{e}}_{\mathrm{rob}}}
\newcommand{\erobp}{\hat{\boldsymbol{e}}_{\mathrm{rob}}^{\perp}}
\newcommand{\vrx}{\tilde v_{\mathrm{rel},x}}
\newcommand{\vry}{\tilde v_{\mathrm{rel},y}}
\newcommand{\nv}{\lVert\vrel\rVert}
\newcommand{\npr}{\lVert\prel\rVert}
\newcommand{\thr}{\tilde\theta}
\newcommand{\tho}{\tilde\theta_{\mathrm{obs}}}
\newcommand{\U}{\mathcal{U}}
\newcommand{\amax}{a_{\max}}
\newcommand{\aobm}{a_{\mathrm{obs,max}}}
\newcommand{\wobm}{\omega_{\mathrm{obs,max}}}
\newcommand{\vobm}{v_{\mathrm{obs,max}}}
\newcommand{\cmin}{c_{\min}}
\newcommand{\Dstar}{\mathcal{D}^{*}}
\newcommand{\Dmin}{\mathcal{D}_{\min}}
\newcommand{\Lmin}{\Lambda_{\min}}
\newcommand{\Lgh}{\boldsymbol{C}'(\boldsymbol{x})^{\!\top}}

\begin{document}

\title{Adversarially Robust Geometric Safety Certificates for Nonholonomic Robots Against Maneuvering  Obstacles

\author{[Author names omitted for blind review]}
\author{Chandan Kumar Sah, Bazeela Banday, and Jishnu Keshavan}
\thanks{The authors are with the Department of Mechanical Engineering, Indian Institute of Science, Bangalore, Karnataka~560012, India 
    ({\tt\footnotesize{email: chandanks@iisc.ac.in, bazeelab@iisc.ac.in, kjishnu@iisc.ac.in}}).
    }
}
\maketitle
\IEEEpeerreviewmaketitle

\begin{abstract}
Safe navigation against obstacles that can actively maneuver within bounded capabilities remains challenging: robust control barrier function methods typically treat obstacle actions as generic disturbances, while differential-game approaches are computationally expensive for online navigation. We propose an adversarially robust geometric certificate that accounts for the worst-case effect of admissible obstacle maneuvers directly in the safe-set geometry through a closed-form contraction of the certificate parameters. The construction exploits a structural property of line-of-sight (LoS) certificates: the robot and obstacle actions enter the certificate through a common state-dependent geometric gain. This gain cancels in the worst-case comparison, reducing the differential game to a direct comparison between obstacle maneuvering capability and the weaker of the robot's longitudinal and steering authorities. Instantiated on the parabolic certificate, the construction yields Adversarially Robust Dynamic Parabolic Control Barrier Functions (AR-DPCBF), for which we establish sufficient conditions for forward invariance of the contracted safe set against all admissible obstacle maneuvers under kinematic bicycle dynamics with bounded inputs. When the obstacle capability is unknown, a sliding-window estimator supplies a high-probability upper bound, allowing the guarantee to be retained with the corresponding coverage probability. We further formulate soft and buffered variants to recover feasibility in dense environments. Simulations across obstacle capabilities, densities, and capability mismatch show substantial reductions in barrier violations and collisions and demonstrate that pointwise robustification of the barrier derivative cannot substitute for contraction of its geometry.
\noindent
\href{https://cks0314.github.io/dummy_page/}{\color{red}[Project page]}
\href{https://github.com/cks0314/Adversarial-DPCBF.git}{\color{red}[Code]\color{black}}

\end{abstract}

\begin{IEEEkeywords}
Control barrier functions,
dynamic obstacle avoidance, differential games,
adversarial robustness.
\end{IEEEkeywords}

%
\section{Introduction}
\label{sec:intro}
 
Safe navigation in dynamic environments requires a robot to remain collision-free despite uncertainty in the future motion of the surrounding obstacles. When obstacles maneuver within bounded acceleration and turning capabilities, safety acquires an adversarial character: the robot must maintain a valid safety certificate against every admissible obstacle action. The difficulty is accentuated for nonholonomic robots, whose limited steering authority couples collision avoidance to the robot--obstacle relative geometry and to the heading dynamics~\cite{haraldsen2024safety, huang2025dynamic}.
 
Control Barrier Functions (CBFs)~\cite{ames2017cbf} provide a computationally efficient way to enforce forward-invariance requirements. For nonholonomic systems, however, control inputs affect position through the vehicle heading, causing distance-based safety constraints to have a higher relative degree. High-Order CBFs~\cite{xiao2019hocbf,xiao2022hocbf} recover explicit input dependence by repeated differentiation, but the resulting constraints can become conservative or jointly infeasible when multiple obstacles are considered. This has motivated approaches that reason directly about relative motion. Velocity Obstacle (VO) methods characterize relative velocities that inevitably lead to collision~\cite{fiorini1998vo}. Collision-cone CBFs incorporate this geometry into safety-critical control~\cite{tayal2024c3bf}. Meanwhile, Dynamic Parabolic CBFs (DPCBFs)~\cite{park2026dpcbf} reduce conservatism using a state-dependent parabolic boundary in the LoS relative-velocity frame. These geometric certificates are defined in terms of relative velocity and clearance and are evaluated using the obstacle's instantaneous velocity.

The premise is restrictive precisely when obstacles maneuver. An accelerating or turning obstacle changes the relative velocity after certification, so the certificate can be invalidated while the constraint the controller enforces remains satisfied and feasible. This failure is silent and it is a loss of soundness rather than of margin.
 
Robust CBF formulations account for bounded disturbances and modeling uncertainty. Worst-case constructions tighten the barrier condition by the largest admissible disturbance effect~\cite{jankovic2018robust,xu2015robustcbf,nguyen2022robust,breeden2023robust}; input-to-state-safe and parameterized formulations trade a quantified set inflation against disturbance magnitude~\cite{kolathaya2019issf,alan2023issf,alan2023paramcbf}; and observer-based schemes reduce conservatism by estimating the disturbance online~\cite{dacs2022robust,dacs2025robust,quan2025observer}. Closest to the present setting, robust barriers of high relative degree have been designed for unknown moving obstacles~\cite{kim2025robust}. Treating obstacle maneuvers as generic disturbances, however, places the margin on the barrier derivative without reflecting how acceleration and turning act on the relative motion, so the resulting margins need not preserve the relative-motion geometry on which the certificate rests.

Differential-game methods provide a principled formulation of this worst-case problem. The set of states from which an obstacle can force a collision can be characterized through Hamilton-Jacobi-Isaacs (HJI) reachability~\cite{mitchell2005hji,bansal2017hj}. However, the computational complexity of HJI methods grows rapidly with state dimension, making repeated online computation difficult for multi-obstacle navigation. The notion of an Inevitable Collision State (ICS)~\cite{fraichard2004ics} provides a corresponding characterization of states that must be excluded under adversarial obstacle behavior. These approaches therefore motivate a tractable approximation of the adversarial safe set that retains explicit worst-case reasoning without requiring online solution of a full differential game.

In this paper, we characterize the adversarial safe set geometrically. Modeling the obstacle as an agent with bounded acceleration and turning rate, we show that its entire set of admissible maneuvers acts on a LoS certificate via a single scalar capability, and that the robot's acceleration and steering act on the same certificate via the same geometric factor. The worst-case comparison, therefore, reduces to an inequality between the maneuvering capability of the obstacle and the actuation authority of the robot.

This reduction also identifies where robustness must be introduced. Rather than robustifying the instantaneous barrier rate, we preserve the worst-case margin in the barrier geometry by contracting its parameters, yielding an inner approximation of the adversarial safe set. We show that pointwise robustification constrains the instantaneous barrier rate, whereas contraction excludes states from which any admissible control cannot maintain the certificate. Instantiating the construction on the parabolic barrier family yields AR-DPCBF, for which we establish forward invariance of the contracted safe set with respect to all admissible obstacle maneuvers for a kinematic bicycle model subject to input constraints.

The framework also addresses uncertainty in the obstacle's capability. We develop an online sliding-window estimator that provides a high-probability upper bound on the true obstacle capability, allowing the estimated capability to be incorporated into the adversarial barrier while retaining the safety guarantee of the known-capability formulation. Since the contracted adversarial safe set can reduce QP feasibility in dense environments, we additionally introduce soft and buffered AR-DPCBF formulations that retain the nominal DPCBF constraint while penalizing violations of the adversarial condition, providing different safety-feasibility trade-offs.
 
The main contributions of the paper are as follows.
 
\begin{enumerate}
\item
We identify a silent failure mode of LoS-frame geometric safety certificates under maneuvering obstacles and formulate the corresponding adversarial CBF condition over the obstacle's structured capability set.

\item
We show that for LoS certificates that are affine in longitudinal and quadratic in lateral relative velocity, the robot and obstacle actions share a common geometric gain. Its cancellation reduces the adversarial condition to a comparison between obstacle-maneuvering capability and robot actuation authority, thereby yielding sufficient conditions for forward invariance.

\item
We introduce a closed-form contraction of the certificate parameters that reserves the worst-case margin in the safe-set geometry. The resulting AR-DPCBF is shown to be strictly stronger than pointwise robustification.

\item
We develop a sliding-window estimator that provides a high-probability upper bound on the unknown obstacle's capability, enabling an adversarial certificate without prior knowledge of its capability.

\item
We formulate hard, soft, and buffered variants to retain feasibility in dense environments, and evaluate them against other baselines using paired Monte Carlo trials.
\end{enumerate}

The remainder of the paper is organized as follows. Section~\ref{sec:problem} introduces the problem formulation and preliminaries. Sections~\ref{sec:params} and ~\ref{sec:validity} develop the proposed AR-DPCBF framework and establish its theoretical guarantees. Section~\ref{sec:estimator} extends the framework to account for unknown obstacle capabilities via online capability estimation, while Section~\ref{sec:controller} presents soft-constrained variants. Section~\ref{sec:results} evaluates the proposed approach through simulation studies, and Section~\ref{sec:conc} concludes the paper.

\section{Background and Problem Statement}
\label{sec:problem}
\subsection{Control Barrier Functions}
\label{subsec:cbf}
Consider the control-affine system
\begin{equation}
  \dot{\boldsymbol{x}} = \boldsymbol{f}(\boldsymbol{x}) + \boldsymbol{g}(\boldsymbol{x})\boldsymbol{u},
  \label{eq:sys}
\end{equation}
where $\boldsymbol{x}\in\mathcal{X}\subset\mathbb{R}^{n}$,
$\boldsymbol{u}\in\mathcal{U}\subset\mathbb{R}^{m}$,
and $\boldsymbol{f},\boldsymbol{g}$ are locally Lipschitz.
Let $h:\mathbb{R}^{n}\to\mathbb{R}$ be continuously
differentiable and define the \emph{safe set}
\begin{equation}
  \label{eq:safe_set}
  \mathcal{C} = \{\boldsymbol{x}\in\mathcal{X} \mid h(\boldsymbol{x})\ge 0\}.
\end{equation}
\begin{definition}[Control Barrier Function~{\cite{ames2017cbf}}]
\label{def:cbf}
A continuously differentiable function
$h:\mathbb{R}^{n}\to\mathbb{R}$ is a
\emph{control barrier function} (CBF) for~\eqref{eq:sys}
if there exists $\psi\in\mathcal{K}_{\infty}$
(a continuous, strictly increasing function with $\psi(0)=0$)
such that
\begin{equation}
  \label{eq:cbf_cond}
  \sup_{\boldsymbol{u}\in\mathcal{U}}
  \bigl[L_{\boldsymbol{f}}h(\boldsymbol{x}) + L_{\boldsymbol{g}}h(\boldsymbol{x})\,\boldsymbol{u}\bigr]
  \ge -\psi(h(\boldsymbol{x}))
  \quad \forall \boldsymbol{x}\in\mathcal{C},
\end{equation}
where $L_{\boldsymbol{f}}h = \nabla h(\boldsymbol{x})^{\top}\boldsymbol{f}(\boldsymbol{x})$ and
$L_{\boldsymbol{g}}h = \nabla h(\boldsymbol{x})^{\top}\boldsymbol{g}(\boldsymbol{x})$ denote Lie derivatives.
\end{definition}
Given a CBF $h$, define the set of safe control inputs
\begin{equation}
  \label{eq:Kcbf}
  \mathcal{K}_{\mathrm{cbf}}(\boldsymbol{x})
  = \bigl\{\boldsymbol{u}\in\mathcal{U}
    \mid L_{\boldsymbol{f}}h(\boldsymbol{x}) + L_{\boldsymbol{g}}h(\boldsymbol{x})\,\boldsymbol{u} \ge -\psi(h(\boldsymbol{x}))\bigr\}.
\end{equation}
Any locally Lipschitz controller $\boldsymbol{u}=\boldsymbol{\pi}(\boldsymbol{x})\in
\mathcal{K}_{\mathrm{cbf}}(\boldsymbol{x})$ renders $\mathcal{C}$
\emph{forward invariant}:   $\boldsymbol{x}(0)\in\mathcal{C}
  \;\Rightarrow\;
  \boldsymbol{x}(t)\in\mathcal{C}
  \quad \text{for all } t\ge 0.$
In practice, the safe control is computed at each time step
by solving the \emph{CBF-QP}
\begin{equation}
\label{eq:cbf_qp}
\begin{aligned}
\boldsymbol{u}^{*}
    &= \arg\min_{\boldsymbol{u}\in\mathcal{U}}
       \|\boldsymbol{u}-\boldsymbol{u}_{\mathrm{ref}}\|^{2} \\
\text{subject to}\quad
    &L_{\boldsymbol{f}}h(\boldsymbol{x})+L_{\boldsymbol{g}}h(\boldsymbol{x})\,\boldsymbol{u}
      \ge -\psi\!\left(h(\boldsymbol{x})\right).
\end{aligned}
\end{equation}
which minimally deviates from a reference control
$\boldsymbol{u}_{\mathrm{ref}}$ while enforcing the CBF constraint.

\subsection{Kinematic Bicycle Model}
\label{subsec:bicycle}

The robot is modeled as a kinematic bicycle~\cite{polack2017kinematic}, a
standard approximation for nonholonomic vehicles at low-to-moderate speeds. The state is $\boldsymbol{x} = [x,y,\theta,v]^{\top}$, comprising the CoM position $(x,y)$, heading $\theta$, and forward speed $v$
(Fig.~\ref{fig:frames}a). Under the small-slip approximations
$\tan\beta\approx\beta$ and $\sin\beta\approx\beta$, the dynamics take the
control-affine form~\eqref{eq:sys},
\begin{equation}
\label{eq:bicycle}
\boldsymbol{f}(\boldsymbol{x})
=
\begin{bmatrix}
v\cos\theta\\
v\sin\theta\\
0\\
0
\end{bmatrix},
\;
\boldsymbol{g}(\boldsymbol{x})
=
\begin{bmatrix}
0 & -v\sin\theta\\
0 &  v\cos\theta\\
0 &  v/\ell_r\\
1 & 0
\end{bmatrix},
\;
\boldsymbol{u}=
\begin{bmatrix}
a\\
\beta
\end{bmatrix},
\end{equation}
where $a$ is the longitudinal acceleration, $\beta$ the slip-angle input,
and $\ell_r$ the rear-axle-to-CoM distance. The admissible control set is
\begin{equation}
\label{eq:U}
\mathcal U=
\left\{
(a,\beta)\;
\middle|\;
|a|\le a_{\max},\;
|\beta|\le\beta_{\max}
\right\},
\end{equation}
where $a_{\max}, \beta_{\max}>0$ denote upper bounds on the magnitudes of the longitudinal acceleration and slip angle, respectively.
It is convenient to separate the two components of the robot velocity. Writing
$\hat{\boldsymbol{e}}_{\mathrm{rob}}=[\cos\theta,\sin\theta]^{\top}$ for the heading direction and
$\hat{\boldsymbol{e}}^{\perp}_{\mathrm{rob}}$ for its counterclockwise normal, \eqref{eq:bicycle} gives $\dot{\boldsymbol{p}}
= v\,\hat{\boldsymbol{e}}_{\mathrm{rob}}
+\;
v\beta\,\hat{\boldsymbol{e}}^{\perp}_{\mathrm{rob}},$
with $\boldsymbol{v}_{\mathrm{rob}}:= v\,\hat{\boldsymbol{e}}_{\mathrm{rob}}$. So the center-of-mass velocity is the sum of the \emph{no-slip} velocity $\boldsymbol{v}_{\mathrm{rob}}$ and a lateral component generated by the slip-angle input.

\begin{figure}[!t]
\centering
\includegraphics[width=\columnwidth]{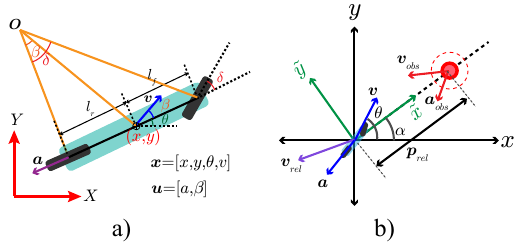}
\caption{Kinematic bicycle model and line-of-sight (LoS) relative-motion geometry. (a) Kinematic bicycle model with state $\boldsymbol{x}=[x,y,\theta,v]^\top$ and control input $\boldsymbol{u}=[a,\beta]^\top$. (b) Robot-obstacle relative parameters expressed in the LoS frame.}
\label{fig:frames}
\end{figure}

\subsection{DPCBF Review}
\label{subsec:dpcbf}

The DPCBF~\cite{park2026dpcbf} instantiates the CBF
condition~\eqref{eq:cbf_cond} for dynamic obstacle avoidance by constructing
a parabolic barrier in the Line-of-Sight (LoS) relative-velocity frame.

\paragraph*{LoS-frame geometry}

Figure~\ref{fig:frames} illustrates the coordinate transformation. The robot and obstacle positions and velocities, $\boldsymbol{p},\boldsymbol{v}_{\mathrm{rob}}$ and
$\boldsymbol{p}_{\mathrm{obs}},\boldsymbol{v}_{\mathrm{obs}}$, respectively, define the relative quantities
\begin{equation}
\label{eq:rel}
\boldsymbol{p}_{\mathrm{rel}}
=
\boldsymbol{p}_{\mathrm{obs}}
-
\boldsymbol{p},
\qquad
\boldsymbol{v}_{\mathrm{rel}}
=
\boldsymbol{v}_{\mathrm{obs}}
-
\boldsymbol{v}_{\mathrm{rob}}.
\end{equation}
The barrier constructed below depends on the obstacle state as well as on the robot state. Accordingly, $\mathcal{X}$ denotes the joint robot-obstacle state space and, with a slight abuse of notation, $\boldsymbol{x}$ denotes the joint state
$(x,y,\theta,v,x_{\mathrm{obs}},y_{\mathrm{obs}},\theta_{\mathrm{obs}},v_{\mathrm{obs}})$ whenever it appears as the argument of $d$, $\lambda$, $\mu$, $h$ or $h^{*}$; the robot dynamics~\eqref{eq:bicycle} act on its first four components and the obstacle dynamics~\eqref{eq:obs_dynamics} on the remaining four. Lie derivatives are taken with respect to~\eqref{eq:bicycle}, the obstacle inputs entering as the disturbance of Definition~\ref{def:acbf}.

The LoS frame (Fig.~\ref{fig:frames}b) is obtained by rotating the global
frame so that its $\tilde{x}$-axis aligns with
$\boldsymbol{p}_{\mathrm{rel}}$. The corresponding LoS angle is, $\alpha = \operatorname{atan2} (p_{\mathrm{rel},y},p_{\mathrm{rel},x}),$ with a rotation matrix

\begin{equation}
\label{eq:R}
\boldsymbol{R}(\alpha)=
\begin{bmatrix}
\cos\alpha & -\sin\alpha\\
\sin\alpha & \cos\alpha
\end{bmatrix}.
\end{equation}

The relative velocity in the LoS-frame is $\tilde{\boldsymbol{v}}_{\mathrm{rel}} = \boldsymbol{R}(-\alpha) \boldsymbol{v}_{\mathrm{rel}} $, with components $(\tilde v_{\mathrm{rel}, x}, \tilde v_{\mathrm{rel}, y})$. The tangential clearance from the combined safety circle of radius $r = r_{\mathrm{rob}} + r_{\mathrm{obs}}$ is

\begin{equation}
\label{eq:clearance}
d(\boldsymbol{x})=
\sqrt{\|\boldsymbol{p}_{\mathrm{rel}}\|^2-r^2}.
\end{equation}

\paragraph*{DPCBF candidate}

The DPCBF barrier function~\cite{park2026dpcbf} is
\begin{equation}
\label{eq:dpcbf_review}
h(\boldsymbol{x})=
\tilde v_{\mathrm{rel},x}
+\lambda(\boldsymbol{x})\tilde v_{\mathrm{rel},y}^2
+\mu(\boldsymbol{x}),
\end{equation}
with state-dependent parameters
\begin{equation}
\label{eq:dpcbf_params}
\lambda(\boldsymbol{x})
=
k_\lambda
\frac{d(\boldsymbol{x})}
{\|\boldsymbol{v}_{\mathrm{rel}}\|},
\qquad
\mu(\boldsymbol{x})
=
k_\mu d(\boldsymbol{x}),
\end{equation}
for tunable gains $k_\lambda,k_\mu>0$. The zero level set
$\{h=0\}$ defines a parabola in the LoS-frame velocity space: states with $h(\boldsymbol{x})\ge0$ are safe, whereas those with $h(\boldsymbol{x})<0$ are unsafe.

\subsection{Obstacle Capability Set and Adversarial CBF}
\label{subsec:capability_acbf}

We model the obstacle as a controlled agent whose Maneuvering  authority is bounded by a \emph{capability set}, defined as,

\begin{definition}[Obstacle Capability Set]
\label{def:F}
We model each obstacle as a kinematic agent whose motion is governed by unicycle-like dynamics, with longitudinal acceleration $a_{\mathrm{obs}}$ and angular rate $\omega_{\mathrm{obs}}$ as inputs.
The obstacle state $(x_{\mathrm{obs}},y_{\mathrm{obs}},\theta_{\mathrm{obs}},
v_{\mathrm{obs}})$ evolves as
\begin{equation}
  \label{eq:obs_dynamics}
  \begin{aligned}
    \dot{x}_{\mathrm{obs}}      &= v_{\mathrm{obs}}\cos\theta_{\mathrm{obs}}, &
    \dot{y}_{\mathrm{obs}}      &= v_{\mathrm{obs}}\sin\theta_{\mathrm{obs}},\\
    \dot{\theta}_{\mathrm{obs}} &= \omega_{\mathrm{obs}},                      &
    \dot{v}_{\mathrm{obs}}      &= a_{\mathrm{obs}}.
  \end{aligned}
\end{equation}
The obstacle's Maneuvering  authority is bounded by the
\emph{capability set}
\begin{equation}
\label{eq:F}
\mathcal{F}
=
\left\{
(a_{\mathrm{obs}},\omega_{\mathrm{obs}})\in\mathbb{R}^{2}
\;\middle|\;
\begin{aligned}
|a_{\mathrm{obs}}| &\le a_{\mathrm{obs,max}},\\
|\omega_{\mathrm{obs}}| &\le \omega_{\mathrm{obs,max}}
\end{aligned}
\right\}.
\end{equation}
where $a_{\mathrm{obs,max}}, \omega_{\mathrm{obs,max}}\ge0$ are known upper bounds on the obstacle's translational acceleration and turning rate, respectively. The obstacle speed
$v_{\mathrm{obs}}(t)\in[0,v_{\mathrm{obs,max}}]$ for all $t$, where $v_{\mathrm{obs,max}}$ is a physical speed limit.
\end{definition}

\begin{remark}
\label{rem:F_shape}
The rectangular shape of $\mathcal{F}$ decouples the two authority axes and is the key structural property that allows the supremum in Section~\ref{sec:validity} to separate into independent bounds on $|P(\phi)|$ and $|Q(\phi)|$, yielding closed-form adversarial parameters. The extension to a more physically realistic Kamm circle constraint $a_{\mathrm{obs}}^2 + (v_{\mathrm{obs}} \omega_{\mathrm{obs}})^2 \le \kappa_{\mathrm{Kamm}}^2$ would require solving a trust-region problem.
\end{remark}

\begin{definition}[Inevitable Collision State~{\cite{fraichard2004ics}}]
\label{def:ICS}
A state $\boldsymbol{x}$ is an \emph{inevitable collision state} (ICS) if, for every robot control $\boldsymbol{u}_{\mathrm{rob}}:[0,\infty)\to\mathcal{U}$, there exists a disturbance trajectory $(a_{\mathrm{obs}}(t), \omega_{\mathrm{obs}}(t))\in\mathcal{F}$ such that $\|\boldsymbol{p}(t)-\boldsymbol{p}_{\mathrm{obs}} (t)\|<r$ for some $t>0$.
The set of all ICS is $\mathcal{I}\subset\mathcal{X}$.
\end{definition}

\begin{remark}
\label{rem:ICS_openloop}
In Definition~\ref{def:ICS}, the obstacle may choose its maneuver knowing the robot's entire control signal. This is the \emph{open-loop} form of the ICS. It gives the obstacle at least as much power as the nonanticipative strategies used in differential games~\cite{mitchell2005hji,choi2021robust}, since every such strategy yields an admissible disturbance trajectory once the robot control is fixed.
\end{remark}

We seek a barrier function whose zero super-level set excludes the
ICS region. To make this precise, let $h^{*}:\mathcal{X}\to\mathbb{R}$ be a function to be designed and define: 
$  \mathcal{C}^{*} \;:=\; \{\boldsymbol{x}\in\mathcal{X} \mid h^{*}(\boldsymbol{x})\ge 0\}.$ 
Exactly characterizing $\mathcal{I}$ requires solving a HJI PDE, which is computationally intractable in general. In particular, choosing $h^{*}$ so that
$\mathcal{C}^{*}=\mathcal{X}\setminus\mathcal{I}$ exactly is not tractable. Our approach instead constructs $\mathcal{C}^{*}$ as a \emph{conservative inner approximation}, $\mathcal{C}^{*} \subseteq\mathcal{X} \setminus\mathcal{I}$, via a parabolic ansatz whose parameters are chosen to satisfy Nagumo's condition against all $\mathcal{F}$-admissible obstacle maneuvers.

\begin{definition}[Adversarial CBF (A-CBF)]
\label{def:acbf}
Let $h^{*}:\mathcal{X}\to\mathbb{R}$ be continuously
differentiable with $\nabla h^{*}(\boldsymbol{x})\neq 0$ on
$\partial\mathcal{C}^{*}=\{h^{*}=0\}$, and let
$\mathcal{B}\subseteq\partial\mathcal{C}^{*}$. Then $h^{*}$ is an \emph{adversarial control barrier function} for~\eqref{eq:bicycle} with respect to $\mathcal{F}$ on $\mathcal{B}$ if, for all $\boldsymbol{x}\in\mathcal{B}$ and all
$(a_{\mathrm{obs}},\omega_{\mathrm{obs}})\in\mathcal{F}$, there exists $\boldsymbol{u}\in\mathcal{U}$ satisfying
\begin{equation}
  \label{eq:acbf}
  \dot h^{*}(\boldsymbol{x},\boldsymbol{u},a_{\mathrm{obs}},\omega_{\mathrm{obs}})
  \;\ge\; 0,
\end{equation}
where $\dot h^{*}$ is evaluated along the robot dynamics~\eqref{eq:bicycle} and the obstacle dynamics~\eqref{eq:obs_dynamics}, and is therefore a function of the robot input and of the obstacle input alike.
\end{definition}

Condition~\eqref{eq:acbf} is the boundary (Nagumo) form of the standard CBF condition~\eqref{eq:cbf_cond}, and is equivalent to requiring $\dot h^{*}\ge0$ at $\boldsymbol{x}$ for every admissible obstacle input and some admissible robot input. The implemented controller enforces the relaxed inequality $\dot h^{*}\ge-\psi(h^{*})$, which coincides with \eqref{eq:acbf} on $\partial\mathcal C^{*}$.

\begin{remark}
The standard CBF condition~\eqref{eq:cbf_cond} is a single-agent maximization,
$
\sup_{\boldsymbol{u}\in\mathcal{U}}
\left(
L_{\boldsymbol{f}} h + L_{\boldsymbol{g}} h\,\boldsymbol{u}
\right)
\geq
-\psi(h),
$
where the robot is the only decision maker and the obstacle dynamics is fixed. In contrast, the A-CBF condition~\eqref{eq:acbf} introduces a universal quantification over the obstacle capability set,
$
\forall\,(a_{\rm obs},\omega_{\rm obs})\in\mathcal{F}, \;
\exists\,\boldsymbol{u}\in\mathcal{U},
$
such that the barrier inequality holds. Thus, the safety certificate is required to hold for every admissible obstacle maneuver rather than a single nominal obstacle model. This yields the minimax logical structure characteristic of differential games.
\end{remark}



\subsection{LoS-Frame Relative Dynamics}
\label{subsec:LoS_reducibility}

A key step in the adversarial formulation is transferring the robot-obstacle interaction to the LoS frame. The following lemma derives the LoS-frame relative dynamics exactly and isolates the frame-rotation terms.

\begin{lemma}\label{lem:los}
Under the robot model~\eqref{eq:bicycle} and obstacle dynamics~\eqref{eq:obs_dynamics}, the LoS-frame relative velocity evolves \emph{exactly} as
\begin{align}
\dot{\tilde{v}}_{\mathrm{rel},x}
  &= a_{\mathrm{obs}}\cos\tilde{\theta}_{\mathrm{obs}}
   - v_{\mathrm{obs}}\omega_{\mathrm{obs}}\sin\tilde{\theta}_{\mathrm{obs}}
   \nonumber\\
  &\quad - a\cos\tilde{\theta}
   + \frac{v^{2}}{\ell_r}\,\beta\sin\tilde{\theta}
   + R_x, \label{eq:losx}\\
\dot{\tilde{v}}_{\mathrm{rel},y}
  &= a_{\mathrm{obs}}\sin\tilde{\theta}_{\mathrm{obs}}
   + v_{\mathrm{obs}}\omega_{\mathrm{obs}}\cos\tilde{\theta}_{\mathrm{obs}}
   \nonumber\\
  &\quad - a\sin\tilde{\theta}
   - \frac{v^{2}}{\ell_r}\,\beta\cos\tilde{\theta}
   + R_y, \label{eq:losy}
\end{align}
where $\tilde{\theta} = \theta - \alpha$ and $\tilde{\theta}_{\mathrm{obs}} = \theta_{\mathrm{obs}} - \alpha$ are the
robot and obstacle headings in the LoS frame, respectively, and the frame-rotation terms admit the exact closed form: 
\begin{equation}\label{eq:rot-exact}
R_x = \dot{\alpha}\,\tilde{v}_{\mathrm{rel},y}, \qquad
R_y = -\dot{\alpha}\,\tilde{v}_{\mathrm{rel},x},
\end{equation}
with
\begin{equation}\label{eq:alphadot-split}
\dot{\alpha}
 = \frac{\boldsymbol{p}_{\mathrm{rel}} \times \dot{\boldsymbol{p}}_{\mathrm{rel}}}
        {\|\boldsymbol{p}_{\mathrm{rel}}\|^{2}}
 = \underbrace{\frac{\boldsymbol{p}_{\mathrm{rel}} \times \boldsymbol{v}_{\mathrm{rel}}}
        {\|\boldsymbol{p}_{\mathrm{rel}}\|^{2}}}_{\dot{\alpha}_0}
 \;\underbrace{-\;\frac{v\beta\,
        (\boldsymbol{p}_{\mathrm{rel}} \times \hat{\boldsymbol{e}}^{\perp}_{\mathrm{rob}})}
        {\|\boldsymbol{p}_{\mathrm{rel}}\|^{2}}}_{\dot{\alpha}_{\mathrm{slip}}},
\end{equation}
where $\dot{\boldsymbol{p}}_{\mathrm{rel}}
 = \boldsymbol{v}_{\mathrm{rel}} - v\beta\,\hat{\boldsymbol{e}}^{\perp}_{\mathrm{rob}}$
follows from~\eqref{eq:bicycle}. Consequently,
\begin{equation}\label{eq:rot-bound}
|R_x|,\,|R_y|
 \;\le\; \frac{\|\boldsymbol{v}_{\mathrm{rel}}\|
   \bigl(\|\boldsymbol{v}_{\mathrm{rel}}\| + v\,|\beta|\bigr)}
   {\|\boldsymbol{p}_{\mathrm{rel}}\|}
 \;\le\; \frac{C_R}{\|\boldsymbol{p}_{\mathrm{rel}}\|},
\end{equation}
with
$C_R := v_{\mathrm{rel,max}}\bigl(v_{\mathrm{rel,max}}
        + v_{\max}\beta_{\max}\bigr)$,
and the rotation terms vanish as $\|\boldsymbol{p}_{\mathrm{rel}}\| \to \infty$.
\end{lemma}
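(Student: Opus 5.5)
The plan is a direct computation in two stages: first differentiate the global-frame relative velocity $\boldsymbol{v}_{\mathrm{rel}}=\boldsymbol{v}_{\mathrm{obs}}-\boldsymbol{v}_{\mathrm{rob}}$, then account for the rotation $\boldsymbol{R}(-\alpha)$ to the LoS frame.

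For the obstacle, \eqref{eq:obs_dynamics} gives $\boldsymbol{v}_{\mathrm{obs}}=v_{\mathrm{obs}}\eobs$ with $\eobs=[\cos\theta_{\mathrm{obs}},\sin\theta_{\mathrm{obs}}]^\top$. Hence
\[
\dot{\boldsymbol{v}}_{\mathrm{obs}}=a_{\mathrm{obs}}\eobs+v_{\mathrm{obs}}\omega_{\mathrm{obs}}\eobsp .
\]
For the robot, I will use the \emph{no-slip} velocity $\boldsymbol{v}_{\mathrm{rob}}=v\,\erob$ that enters \eqref{eq:rel}. By \eqref{eq:bicycle}, $\dot\theta=v\beta/\ell_r$ and $\dot v=a$, so
\[
\dot{\boldsymbol{v}}_{\mathrm{rob}}=a\,\erob+\frac{v^2}{\ell_r}\beta\,\erobp .
\]
Note that this is not the derivative of the CoM velocity $\dot{\boldsymbol{p}}$. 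That distinction is exactly why a slip contribution appears later in $\dot\alpha$ rather than in the acceleration terms.

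Next, differentiate $\tilde{\boldsymbol{v}}_{\mathrm{rel}}=\boldsymbol{R}(-\alpha)\boldsymbol{v}_{\mathrm{rel}}$ with the product rule:
\[
\dot{\tilde{\boldsymbol{v}}}_{\mathrm{rel}}=\boldsymbol{R}(-\alpha)\dot{\boldsymbol{v}}_{\mathrm{rel}}+\tfrac{d}{dt}\boldsymbol{R}(-\alpha)\,\boldsymbol{v}_{\mathrm{rel}} .
\]
For the first term, I use $\boldsymbol{R}(-\alpha)\eobs=[\cos\tho,\sin\tho]^\top$ and $\boldsymbol{R}(-\alpha)\eobsp=[-\sin\tho,\cos\tho]^\top$, and analogously for the robot with $\thr$. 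Substituting these and collecting components reproduces the $a_{\mathrm{obs}}$, $\omega_{\mathrm{obs}}$, $a$ and $\beta$ terms of \eqref{eq:losx}--\eqref{eq:losy}. The robot terms pick up the minus sign from $-\boldsymbol{v}_{\mathrm{rob}}$, giving $+\tfrac{v^2}{\ell_r}\beta\sin\thr$ in $x$ and $-\tfrac{v^2}{\ell_r}\beta\cos\thr$ in $y$.

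For the second term, $\tfrac{d}{dt}\boldsymbol{R}(-\alpha)=-\dot\alpha\,\boldsymbol{R}(-\alpha)\boldsymbol{J}$ with $\boldsymbol{J}$ the $90^\circ$ rotation. Since $\boldsymbol{J}$ commutes with $\boldsymbol{R}(-\alpha)$, this term equals $-\dot\alpha\boldsymbol{J}\tilde{\boldsymbol{v}}_{\mathrm{rel}}=(\dot\alpha\vry,\,-\dot\alpha\vrx)$, which is \eqref{eq:rot-exact}.

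For $\dot\alpha$, I differentiate $\alpha=\operatorname{atan2}(p_{\mathrm{rel},y},p_{\mathrm{rel},x})$, which gives the standard formula $\dot\alpha=(\prel\times\dot{\boldsymbol{p}}_{\mathrm{rel}})/\npr^2$. This is valid because $\npr\ge r>0$ wherever $d$ is defined. Then I substitute the true position rate
\[
\dot{\boldsymbol{p}}_{\mathrm{rel}}=\boldsymbol{v}_{\mathrm{obs}}-v\erob-v\beta\erobp=\vrel-v\beta\erobp ,
\]
which follows from the decomposition of $\dot{\boldsymbol{p}}$ stated after \eqref{eq:U}. Linearity of the cross product then yields the split \eqref{eq:alphadot-split}.

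For the bound \eqref{eq:rot-bound}, I use $|\prel\times\boldsymbol{w}|\le\npr\|\boldsymbol{w}\|$ and $\|\erobp\|=1$ to get $|\dot\alpha|\le(\nv+v|\beta|)/\npr$. Combining this with $|\vrx|,|\vry|\le\nv$ (rotation preserves norm) gives the first inequality. The second follows from $v\le v_{\max}$, $|\beta|\le\beta_{\max}$ and $\nv\le v_{\mathrm{rel,max}}$, where $v_{\mathrm{rel,max}}$ can be taken as $v_{\max}+\vobm$. The decay as $\npr\to\infty$ is then immediate.

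I expect the main difficulty to be sign and frame bookkeeping rather than any analytical step. Specifically, the orientation of $\tfrac{d}{dt}\boldsymbol{R}(-\alpha)$ must be tracked carefully. In addition, the relative velocity is built from the no-slip velocity $\boldsymbol{v}_{\mathrm{rob}}$, while $\dot\alpha$ must use the true CoM velocity. I would verify the signs on the special case $\alpha\equiv0$ and on the case of pure rotation of $\prel$.
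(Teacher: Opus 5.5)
Your proposal is correct and takes essentially the same route as the paper. It differentiates $\boldsymbol{R}(-\alpha)\boldsymbol{v}_{\mathrm{rel}}$ with the product rule using $\dot{\boldsymbol{R}}(-\alpha)=-\dot\alpha\,\boldsymbol{J}\boldsymbol{R}(-\alpha)$, rotates the obstacle and no-slip robot accelerations into the LoS frame, and obtains $\dot\alpha$ from the $\operatorname{atan2}$ formula with the CoM rate $\dot{\boldsymbol{p}}_{\mathrm{rel}}=\boldsymbol{v}_{\mathrm{rel}}-v\beta\,\hat{\boldsymbol{e}}^{\perp}_{\mathrm{rob}}$, closing with $|R_x|,|R_y|\le|\dot\alpha|\,\|\boldsymbol{v}_{\mathrm{rel}}\|$ exactly as the paper does; your sign bookkeeping and the no-slip versus CoM-velocity distinction are both handled correctly.
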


\begin{proof}
\emph{Frame rotation.}
Differentiating
$\tilde{\boldsymbol{v}}_{\mathrm{rel}} = \boldsymbol{R}(-\alpha)\boldsymbol{v}_{\mathrm{rel}}$ gives
$\dot{\tilde{\boldsymbol{v}}}_{\mathrm{rel}}
 = \boldsymbol{R}(-\alpha)\dot{\boldsymbol{v}}_{\mathrm{rel}}
 + \dot{\boldsymbol{R}}(-\alpha)\boldsymbol{v}_{\mathrm{rel}}$.
Since $\tfrac{\mathrm{d}}{\mathrm{d}\varphi}\boldsymbol{R}(\varphi) = \boldsymbol{J}\,\boldsymbol{R}(\varphi)$ with
$\boldsymbol{J} = \begin{bsmallmatrix} 0 & -1 \\ 1 & 0 \end{bsmallmatrix}$
the $90^{\circ}$ counterclockwise rotation generator,
$\dot{\boldsymbol{R}}(-\alpha) = -\dot{\alpha}\,\boldsymbol{J}\,\boldsymbol{R}(-\alpha)$, and hence,
$\dot{\boldsymbol{R}}(-\alpha)\boldsymbol{v}_{\mathrm{rel}}
 = -\dot{\alpha}\,\boldsymbol{J}\,\tilde{\boldsymbol{v}}_{\mathrm{rel}}
 = \dot{\alpha}
   \begin{bmatrix} \tilde{v}_{\mathrm{rel},y} &
                   -\tilde{v}_{\mathrm{rel},x} \end{bmatrix}^{\top},$
which is~\eqref{eq:rot-exact}. Since
$\alpha = \operatorname{atan2}(p_{\mathrm{rel},y}, p_{\mathrm{rel},x})$,
$\dot{\alpha} = (\boldsymbol{p}_{\mathrm{rel}} \times \dot{\boldsymbol{p}}_{\mathrm{rel}})
/\|\boldsymbol{p}_{\mathrm{rel}}\|^{2}$. Substituting
$\dot{\boldsymbol{p}}_{\mathrm{rel}} = \boldsymbol{v}_{\mathrm{rel}}
 - v\beta\,\hat{\boldsymbol{e}}^{\perp}_{\mathrm{rob}}$
yields~\eqref{eq:alphadot-split}.

\emph{Translational accelerations.}
The obstacle velocity is
$\boldsymbol{v}_{\mathrm{obs}} = v_{\mathrm{obs}}\hat{\boldsymbol{e}}_{\mathrm{obs}}$ with
$\dot{\hat{\boldsymbol{e}}}_{\mathrm{obs}}
 = \dot{\theta}_{\mathrm{obs}}\hat{\boldsymbol{e}}^{\perp}_{\mathrm{obs}}$, so
by~\eqref{eq:obs_dynamics},
$\dot{\boldsymbol{v}}_{\mathrm{obs}}
 = a_{\mathrm{obs}}\hat{\boldsymbol{e}}_{\mathrm{obs}}
 + v_{\mathrm{obs}}\omega_{\mathrm{obs}}\hat{\boldsymbol{e}}^{\perp}_{\mathrm{obs}}$.
Similarly, the no-slip robot velocity
$\boldsymbol{v}_{\mathrm{rob}} = v\,\hat{\boldsymbol{e}}_{\mathrm{rob}}$ satisfies, $\dot{\boldsymbol{v}}_{\mathrm{rob}}
 = \dot{v}\,\hat{\boldsymbol{e}}_{\mathrm{rob}}
 + v\,\dot{\theta}\,\hat{\boldsymbol{e}}^{\perp}_{\mathrm{rob}}
 = a\,\hat{\boldsymbol{e}}_{\mathrm{rob}}
 + \frac{v^{2}}{\ell_r}\,\beta\,\hat{\boldsymbol{e}}^{\perp}_{\mathrm{rob}},$ using $\dot{v} = a$ and $\dot{\theta} = (v/\ell_r)\beta$
from~\eqref{eq:bicycle}. Using $\boldsymbol{R}(-\alpha)$,
\begin{equation*}
\boldsymbol{R}(-\alpha)\hat{\boldsymbol{e}}_{\mathrm{obs}}
 = \begin{bmatrix} \cos\tilde{\theta}_{\mathrm{obs}} \\
                   \sin\tilde{\theta}_{\mathrm{obs}} \end{bmatrix},
\quad
\boldsymbol{R}(-\alpha)\hat{\boldsymbol{e}}^{\perp}_{\mathrm{obs}}
 = \begin{bmatrix} -\sin\tilde{\theta}_{\mathrm{obs}} \\
                   \phantom{-}\cos\tilde{\theta}_{\mathrm{obs}} \end{bmatrix},
\end{equation*}
and analogously for
$\hat{\boldsymbol{e}}_{\mathrm{rob}}, \hat{\boldsymbol{e}}^{\perp}_{\mathrm{rob}}$ with
$\tilde{\theta}$. Substituting
$\dot{\boldsymbol{v}}_{\mathrm{rel}}
 = \dot{\boldsymbol{v}}_{\mathrm{obs}} - \dot{\boldsymbol{v}}_{\mathrm{rob}}$ and separating
the $\tilde{x}$- and $\tilde{y}$-components
yields~\eqref{eq:losx}--\eqref{eq:losy}.

\emph{Residual bound.}
From~\eqref{eq:rot-exact},
$|R_x|, |R_y| \le |\dot{\alpha}|\,\|\boldsymbol{v}_{\mathrm{rel}}\|$, and
from~\eqref{eq:alphadot-split},
$|\dot{\alpha}| \le \|\dot{\boldsymbol{p}}_{\mathrm{rel}}\|
/\|\boldsymbol{p}_{\mathrm{rel}}\|
\le (\|\boldsymbol{v}_{\mathrm{rel}}\| + v|\beta|)/\|\boldsymbol{p}_{\mathrm{rel}}\|$,
giving~\eqref{eq:rot-bound}.
\end{proof}


\section{Adversarially Robust Barrier Parameters}
\label{sec:params}

This section constructs the adversarial barrier $h^{*}$ and establishes that its safe set is a subset of the nominal DPCBF safe set, thus inheriting the collision-avoidance guarantee of~\cite{park2026dpcbf}.

\subsection{Setup and the maneuver gap}

We build on the DPCBF formulation. Under the no-slip assumption ($\beta=0$), differentiating the clearance $d(\boldsymbol{x})=\sqrt{\|\prel\|^{2}-r^{2}}$ along the robot dynamics gives
\begin{equation}
\dot d_{0}
= {\prel^{\!\top}\vrel}/{d}
= {\|\prel\|\,\vrx}/{d},
\label{eq:ddot}
\end{equation}
so the clearance rate and longitudinal LoS velocity $\vrx$ have the same sign. To account for maneuvering obstacles, we contract the nominal DPCBF barrier~\eqref{eq:dpcbf_review} to reserve a margin against worst-case admissible maneuvers. The following subsections derive the contraction from obstacle capability and establish the resulting safety guarantee.

\subsection{Standing assumptions}

\begin{assumption}\label{as:obs}
The obstacle inputs lie in the rectangle $\F=\{(a_{\mathrm{obs}}, \omega_{\mathrm{obs}}):|a_{\mathrm{obs}}|\le\aobm,\,
|\omega_{\mathrm{obs}}|\le\wobm\}$, and, per
Definition~\ref{def:F}, $v_{\mathrm{obs}}\le\vobm$. Since
$\dot{\boldsymbol{v}}_{\mathrm{obs}}=a_{\mathrm{obs}}\eobs+v_{\mathrm{obs}}\omega_{\mathrm{obs}}\eobsp$, the
scalar, obstacle capability,
\begin{equation}
\kappa:=\aobm+\vobm\wobm\quad[\mathrm{m\,s^{-2}}]
\label{eq:kappa}
\end{equation}
bounds $\lVert\dot{\boldsymbol{v}}_{\mathrm{obs}}\rVert$ over $\F$. It is the single number summarizing how hard the obstacle can maneuver.
\end{assumption}

\begin{assumption}\label{as:rob}
The robot holds a forward speed $v\in[v_{\min},v_{\max}]$ with $v_{\min}>0$, and its inputs $\boldsymbol{u}=(a,\beta)$ obey $|a|\le\amax$, $|\beta|\le\beta_{\max}$, i.e.\ $\boldsymbol{u}\in\U$. The lower bound $v_{\min}>0$ is essential because the bicycle's steering authority scales as $v^{2}/\ell_{r}$ and vanishes at rest, so a stationary robot cannot steer to defend the barrier.
\end{assumption}

\begin{assumption}
\label{as:geo}
Let $v_{\mathrm{rel,min}}>0$ and $d_{\max}>r$ be design parameters and define the \emph{engagement envelope} \begin{equation}
\Omega:=\bigl\{\boldsymbol{x}\in\mathcal{X}\;:\; d(\boldsymbol{x})\le d_{\max},\;\;\nv\ge v_{\mathrm{rel,min}}\bigr\}.
\label{eq:envelope} 
\end{equation}
The parameters $v_{\mathrm{rel,min}}$ and $d_{\max}$ are design parameters specified in Corollary~\ref{cor:floor} and Lemma~\ref{lem:Lmin} (Appendix~\ref{app:proofs}).
\end{assumption}

\begin{assumption}[DPCBF clearance property~\cite{park2026dpcbf}]\label{as:clear}
For every $t\ge0$ and every closed-loop trajectory with $d(0)>r$ satisfying $h(\boldsymbol{x}(\tau))\ge0$ for all $\tau\in[0,t]$, the clearance satisfies $d(\tau)\ge r$ for all $\tau\in[0,t]$.
\end{assumption}

\begin{proposition}
\label{prop:bounds}
Under Assumptions~\ref{as:obs}-\ref{as:geo}:
$v_{\mathrm{rel,min}}
\le\nv
\le v_{\mathrm{rel,max}},
\;
\sqrt{2}\,r
\le\npr
\le p_{\max},$
throughout, $\mathcal{B}:=\partial\Cset\cap\Omega\cap\{d\ge r\},$ where, 
$v_{\mathrm{rel,max}}:=v_{\max}+\vobm,
\; p_{\max}:=\sqrt{d_{\max}^2+r^2}.$
Consequently, every denominator appearing below is uniformly bounded
away from zero on $\mathcal{B}$, and $\mathcal{B}$ is compact.
\end{proposition}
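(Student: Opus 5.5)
The plan is to read off each inequality directly from the defining constraints of $\mathcal{B}=\partial\Cset\cap\Omega\cap\{d\ge r\}$ together with Assumptions~\ref{as:obs}--\ref{as:rob}, and then to deduce compactness from closedness and boundedness.

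\emph{Velocity bounds.} The lower bound $\nv\ge v_{\mathrm{rel,min}}$ is part of the definition~\eqref{eq:envelope} of $\Omega$. For the upper bound, I will use $\vrel=\boldsymbol{v}_{\mathrm{obs}}-\boldsymbol{v}_{\mathrm{rob}}$ with $\boldsymbol{v}_{\mathrm{rob}}=v\,\erob$ and $\boldsymbol{v}_{\mathrm{obs}}=v_{\mathrm{obs}}\eobs$. The triangle inequality then gives $\nv\le |v_{\mathrm{obs}}|+|v|\le \vobm+v_{\max}=v_{\mathrm{rel,max}}$, by Definition~\ref{def:F} and Assumption~\ref{as:rob}.

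\emph{Position bounds.} From~\eqref{eq:clearance} we have $\npr^2=d^2+r^2$. On $\{d\ge r\}$ this gives $\npr^2\ge 2r^2$, so $\npr\ge\sqrt2\,r$. On $\Omega$ we have $d\le d_{\max}$, which gives $\npr\le\sqrt{d_{\max}^2+r^2}=p_{\max}$.

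\emph{Denominators.} The denominators that occur in the construction are $\nv$ (in $\lambda$), $d$ (in $\dot d_0$), and $\npr$ or $\npr^2$ (in $\dot\alpha$ and $R_x,R_y$). On $\mathcal{B}$ these satisfy $\nv\ge v_{\mathrm{rel,min}}>0$, $d\ge r>0$ and $\npr\ge\sqrt2\,r>0$. Hence each of them is uniformly bounded away from zero on $\mathcal{B}$.

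\emph{Compactness.} This is the step needing care. Closedness is straightforward: $h^{*}$, $d$ and $\nv$ are continuous on the region where $\npr>r$ and $\nv>0$, and that region contains a neighborhood of $\mathcal{B}$ by the bounds just established. Therefore $\{h^{*}=0\}$, $\{d\le d_{\max}\}$, $\{d\ge r\}$ and $\{\nv\ge v_{\mathrm{rel,min}}\}$ are relatively closed, and so is their intersection. For boundedness, the speeds lie in $[v_{\min},v_{\max}]\times[0,\vobm]$, the headings live on the circle, and $\prel$ lies in the annulus $\sqrt2 r\le\npr\le p_{\max}$.

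The obstacle is that the absolute positions $(x,y)$ are unconstrained. I will handle this through translation invariance: $d,\lambda,\mu,h,h^{*}$ and both dynamics depend on positions only through $\prel$. I will therefore identify states differing by a common translation and work in the reduced coordinates $(\prel,\theta,v,\theta_{\mathrm{obs}},v_{\mathrm{obs}})$. In these coordinates $\mathcal{B}$ is a closed subset of the compact set given by the annulus $\times\,\mathbb{S}^1\times[v_{\min},v_{\max}]\times\mathbb{S}^1\times[0,\vobm]$, so it is compact. Every subsequent supremum or infimum over $\mathcal{B}$ depends only on these coordinates, so this notion of compactness is exactly what is needed to attain the extrema and obtain finite uniform constants later.
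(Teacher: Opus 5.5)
Your proposal is correct. For the inequalities it matches the paper's proof: the triangle inequality on $\vrel=\boldsymbol{v}_{\mathrm{obs}}-v\,\erob$, the envelope constraints, and the identity $\npr^{2}=d^{2}+r^{2}$. The difference is in the compactness step.

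The paper simply asserts that $\mathcal{B}$ is ``a closed subset of a compact set'' without naming that set. You observe that in the joint state space the absolute positions are unconstrained on $\mathcal{B}$: translating robot and obstacle together preserves every defining condition. The same holds for the headings if they are taken in $\mathbb{R}$. So $\mathcal{B}$ is not literally bounded unless $\mathcal{X}$ is itself assumed bounded, and the paper's one-liner silently relies on that.

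Your quotient by common translations, with headings on $\mathbb{S}^1$, is what actually makes the claim true. It costs nothing downstream, because $d$, $\nv$, $h^{*}$, $\Lambda$, $\Dstar$, $\boldsymbol{C}'$ and the relative dynamics depend only on $(\prel,\theta,v,\theta_{\mathrm{obs}},v_{\mathrm{obs}})$. That is exactly what Lemma~\ref{lem:drift} needs to attain its infimum. The paper's version is shorter; yours is the one that is correct as written.

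One small tightening. Saying that the continuity region ``contains a neighborhood of $\mathcal{B}$'' does not by itself turn relative closedness into closedness; for example, $(0,1]$ is relatively closed in $(0,2)$ but not closed. Argue instead that $K:=\{\npr\ge\sqrt{2}\,r,\ \nv\ge v_{\mathrm{rel,min}},\ d\le d_{\max}\}$ is closed in the ambient space and lies inside the region where $h^{*}$ is continuous. Then $\mathcal{B}$ is the zero set of a continuous function on the closed set $K$, hence closed.
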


\begin{proof}
The bounds follow directly from Assumptions~\ref{as:obs}-\ref{as:geo}, the triangle inequality, and the identity $\npr^2=d^2+r^2$. Compactness of $\mathcal{B}$ is immediate from its definition as a closed subset of a compact set.
\end{proof}

\begin{remark}
\label{rem:envelope}
The engagement envelope $\Omega$ specifies the domain over which the adversarial certificate is established. It is not enforced as a state constraint by the QP. If $\nv<v_{\mathrm{rel,min}}$, the parameter $\lambda=k_\lambda d/\nv$ becomes ill-conditioned, and the certificate we establish later (Theorem~\ref{thm:valid}) no longer applies, as in the nominal DPCBF formulation~\cite{park2026dpcbf}.
\end{remark}

\begin{figure*}[t]
  \centering
  \includegraphics[width=0.95\textwidth]{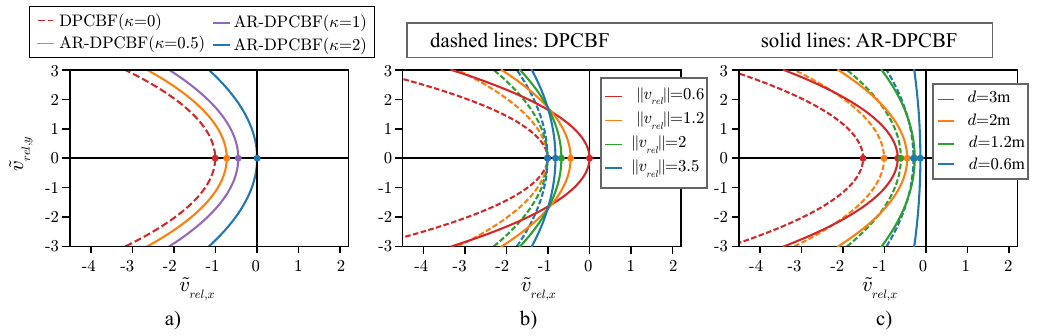}
\caption{Adversarial contraction of the safe-set boundary in the line-of-sight velocity frame. Dashed: DPCBF $\{h=0\}$; solid: AR-DPCBF $\{h^{*}=0\}$. The safe set lies to the right. Since $\lambda^{*}\leq\lambda$ and $\mu^{*}\leq\mu$, $\{h^{*}\geq0\}\subseteq\{h\geq0\}$. (a) Increasing $\kappa$ increases contraction, recovering DPCBF at $\kappa=0$. (b) Lower $\|\boldsymbol{v}_{\mathrm{rel}}\|$ increases contraction through the $1/\|\boldsymbol{v}_{\mathrm{rel}}\|$ reaction-time scaling. (c) Contraction increases with clearance $d$.}
  \label{fig:variation}
\end{figure*}

\subsection{The contracted parameters}
Motivated by the maneuver gap identified above, we derive contracted barrier parameters that allocate part of the nominal DPCBF certificate as a robustness margin against worst-case admissible obstacle maneuvers.

\begin{theorem}\label{thm:params}
Let $\gamma>0$ be a dimensionless design gain and $\amax$ the reference
acceleration of Assumption~\ref{as:rob}. Define
\begin{align}
\lambda^{*}(\boldsymbol{x})&=\frac{k_{\lambda}d}{\nv}-\Delta\lambda,
& \Delta\lambda&=\frac{\kappa}{\gamma\,\amax\,\nv}\ \ge0,
\label{eq:lamstar}\\
\mu^{*}(\boldsymbol{x})&=k_{\mu}d-\Delta\mu,
& \Delta\mu&=\frac{\kappa\,d}{\gamma\,\nv}\ \ge0,
\label{eq:mustar}
\end{align}
and the adversarial barrier,
\begin{eqnarray}
\label{eq:h_star}
h^{*}(\boldsymbol{x})=\vrx+\lambda^{*}\vry^{2}+\mu^{*}.
\end{eqnarray}
Writing the discarded margin as the \emph{buffer}
\begin{equation}
b(\boldsymbol{x}):=\Delta\lambda\,\vry^{2}+\Delta\mu\ \ge0,\qquad h^{*}=h-b,
\label{eq:buffer}
\end{equation}
the construction has three properties:
\begin{enumerate}
\item[(i)] \emph{Contraction.} $\lambda^{*}\le\lambda$ and $\mu^{*}\le\mu$, with $b$ increasing in $\kappa$; thus the unsafe set $\{h^{*}<0\}=\{h<b\}$ grows as the obstacle becomes more capable.
\item[(ii)] \emph{Recovery.} At $\kappa=0$, $b\equiv0$ and $h^{*}\equiv h$: the nominal DPCBF is recovered exactly.
\item[(iii)] \emph{Validity floor.} $\lambda^{*}\ge0$ and $\mu^{*}\ge0$ hold iff $\kappa\le\gamma\amax k_{\lambda}d$ and $\kappa\le\gamma k_{\mu}\nv$. Past these the parabola would invert and no certificate of this form exists at $(d,\nv)$.
\end{enumerate}
\end{theorem}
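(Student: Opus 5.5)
The statement is essentially algebraic, so I will verify the three properties directly from the definitions \eqref{eq:lamstar}--\eqref{eq:buffer}, working on the engagement envelope where Proposition~\ref{prop:bounds} guarantees $\nv\ge v_{\mathrm{rel,min}}>0$. On this envelope every quotient is well defined. I will also use $d\ge0$, which holds by definition of the clearance \eqref{eq:clearance}, and $\gamma,\amax>0$.

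\textbf{Buffer identity.} I first record the sign of the margins. Since $\kappa=\aobm+\vobm\wobm\ge0$ by Assumption~\ref{as:obs}, we get $\Delta\lambda\ge0$ and $\Delta\mu\ge0$. Next, subtracting \eqref{eq:h_star} from \eqref{eq:dpcbf_review} with $\lambda=k_\lambda d/\nv$ and $\mu=k_\mu d$ gives $h-h^{*}=(\lambda-\lambda^{*})\vry^{2}+(\mu-\mu^{*})=\Delta\lambda\,\vry^{2}+\Delta\mu=b$. Because $\vry^2\ge0$, it follows that $b\ge0$, which establishes \eqref{eq:buffer}.

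\textbf{Properties (i) and (ii).} The inequalities $\lambda^{*}\le\lambda$ and $\mu^{*}\le\mu$ are just $\Delta\lambda,\Delta\mu\ge0$. The buffer $b$ is affine in $\kappa$ with nonnegative slope $\vry^{2}/(\gamma\amax\nv)+d/(\gamma\nv)$, so it is nondecreasing in $\kappa$. The identity $h^{*}=h-b$ gives $\{h^{*}<0\}=\{h<b\}$. Since $b$ grows pointwise with $\kappa$, these sets are nested increasing in $\kappa$, and in particular $\{h^{*}\ge0\}\subseteq\{h\ge0\}$. For (ii), setting $\kappa=0$ gives $\Delta\lambda=\Delta\mu=0$, hence $b\equiv0$ and $h^{*}\equiv h$.

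\textbf{Property (iii).} This is the only step that needs some care. Multiplying $\lambda^{*}\ge0$ by the positive quantity $\gamma\amax\nv$ gives the equivalent condition $\gamma\amax k_\lambda d\ge\kappa$. Similarly, $\mu^{*}\ge0$ reads $d(k_\mu-\kappa/(\gamma\nv))\ge0$. Multiplying by $\gamma\nv>0$ gives $d(\gamma k_\mu\nv-\kappa)\ge0$, which is equivalent to $\kappa\le\gamma k_\mu\nv$ provided $d>0$. On $\mathcal{B}$ we have $d\ge r>0$, so I will state the equivalence there; the degenerate case $d=0$, where $\mu^{*}=0$ trivially, will be noted in a remark.

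For the ``inversion'' claim, I will observe that if $\lambda^{*}<0$, then the level set $\{h^{*}=0\}$ in the $(\vrx,\vry)$ plane is a parabola opening toward positive $\vrx$ rather than away from it. In that case the safe set becomes bounded in $\vry$ and no longer contains the large-lateral-velocity region. This contradicts the parabolic ansatz, so no certificate of the form \eqref{eq:h_star} with nonnegative parameters exists at that $(d,\nv)$. I expect this interpretive part of (iii) to be the main point needing careful wording, since the algebra itself is routine.
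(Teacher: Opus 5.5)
Your proposal is correct and follows the paper's proof: the paper likewise obtains $h^{*}=h-b$ by subtracting the parameters, reads off (i) and (ii) from $\Delta\lambda,\Delta\mu\ge0$ and their linearity in $\kappa$, and proves (iii) by clearing the positive factors $\gamma\amax\nv$ and $\gamma\nv$. Your extra care is harmless but not needed. The $d>0$ caveat matters only for the separate $\mu^{*}$ equivalence, and the joint equivalence still holds at $d=0$ because $\lambda^{*}\ge0$ already forces $\kappa=0$ there. The paper does not try to prove the interpretive ``inversion'' sentence, so your parabola-orientation argument is commentary rather than a required step; if you keep it, note that the safe set is bounded in $\vry$ only for each fixed $\vrx$.
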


\begin{proof}
All three claims follow directly from \eqref{eq:lamstar}-\eqref{eq:mustar}. Since $\kappa, d,\gamma,\amax,\nv$ are all positive, $\Delta\lambda$ and $\Delta\mu$ are non-negative and strictly increasing in $\kappa$. Subtracting them gives $\lambda^{*}\le\lambda$, $\mu^{*}\le\mu$, which is (i). Substituting into $h^{*}=\vrx+\lambda^{*}\vry^{2}+\mu^{*}$ and comparing with \eqref{eq:dpcbf_review} yields $h^{*}=h-(\Delta\lambda\,\vry^{2}+\Delta\mu)=h-b$, and $b\ge0$ because both increments are; hence $\{h^{*}<0\}=\{h<b\}$, which contains $\{h<0\}$ and expands
as $b$ (equivalently $\kappa$) grows. Setting $\kappa=0$ makes both increments vanish, so $b\equiv0$ and $h^{*}\equiv h$, proving (ii). For (iii), $\lambda^{*}\ge0\Leftrightarrow k_{\lambda}d/\nv\ge\kappa/(\gamma\amax\nv) \Leftrightarrow\kappa\le\gamma\amax k_{\lambda}d$, and likewise $\mu^{*}\ge0\Leftrightarrow\kappa\le\gamma k_{\mu}\nv$.
\end{proof}

\begin{corollary}
\label{cor:floor}
On $\Omega\cap\{d\ge r\}$ we have $d\ge r$ and $\nv\ge v_{\mathrm{rel,min}}$. Hence, by Theorem~\ref{thm:params}(iii), the floor $\lambda^{*},\mu^{*}\ge0$ holds throughout $\Omega\cap\{d\ge r\}$ if and only if
\begin{equation}
\kappa\le\gamma\,\amax k_{\lambda}\,r,
\qquad
\kappa\le\gamma k_{\mu}v_{\mathrm{rel,min}}.
\label{eq:floor}
\end{equation}
\end{corollary}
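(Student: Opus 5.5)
The plan is to reduce the corollary to the pointwise characterization in Theorem~\ref{thm:params}(iii). That result says that at a state with clearance $d$ and relative speed $\nv$ the floor holds if and only if $\kappa\le\gamma\amax k_{\lambda}d$ and $\kappa\le\gamma k_{\mu}\nv$. Requiring these two inequalities at every point of $\Omega\cap\{d\ge r\}$ is therefore the same as requiring
\[
\kappa\le\gamma\amax k_{\lambda}\inf_{\Omega\cap\{d\ge r\}} d
\qquad\text{and}\qquad
\kappa\le\gamma k_{\mu}\inf_{\Omega\cap\{d\ge r\}}\nv .
\]
Two facts make this reduction legitimate. First, each right-hand side is linear in a single state variable with a positive coefficient, since $\gamma,\amax,k_\lambda,k_\mu>0$. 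Second, the two constraints involve separate variables, so each can be handled on its own. The proof then consists of computing the two infima, or rather verifying that they are attained and equal $r$ and $v_{\mathrm{rel,min}}$.

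For the bound direction (``if''), we use the definitions directly. On $\{d\ge r\}$ we have $d\ge r$, and the definition of $\Omega$ in \eqref{eq:envelope} gives $\nv\ge v_{\mathrm{rel,min}}$. Hence \eqref{eq:floor} implies $\kappa\le\gamma\amax k_\lambda r\le\gamma\amax k_\lambda d$ and $\kappa\le\gamma k_\mu v_{\mathrm{rel,min}}\le\gamma k_\mu\nv$ at every point. Theorem~\ref{thm:params}(iii) then yields $\lambda^*,\mu^*\ge0$ throughout.

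For the converse (``only if''), we exhibit states in $\Omega\cap\{d\ge r\}$ where the infima are attained. The first is a joint state with $d=r$ and $\nv\ge v_{\mathrm{rel,min}}$, realizable since $r<d_{\max}$. The second is a joint state with $\nv=v_{\mathrm{rel,min}}$ and $d\in[r,d_{\max}]$. Applying Theorem~\ref{thm:params}(iii) at these states forces the two inequalities in \eqref{eq:floor}.

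The main obstacle is the realizability of these extremal states. We must check that the chosen joint configurations are admissible. This means there must be robot and obstacle speeds within $[v_{\min},v_{\max}]$ and $[0,\vobm]$ whose relative velocity has norm exactly $v_{\mathrm{rel,min}}$. It also means a relative position with $\npr=\sqrt{2}\,r$ must exist. We will handle this by noting that $\mathcal{X}$ places no restriction on the relative position or headings. Continuity of $\nv$ in the headings, for instance taking aligned versus opposed velocities, then lets us attain any value between the minimal and maximal relative speeds by the intermediate value theorem. This step implicitly requires $v_{\mathrm{rel,min}}$ to lie in the achievable range, which we would state as a standing consistency condition on the design parameter.
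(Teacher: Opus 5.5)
Your proposal is correct and follows the paper's route. The ``if'' direction is exactly the paper's observation that $d\ge r$ and $\nv\ge v_{\mathrm{rel,min}}$ hold on $\Omega\cap\{d\ge r\}$, combined with the pointwise characterization of Theorem~\ref{thm:params}(iii). For the ``only if'' direction you go further than the paper, which leaves implicit that these lower bounds are tight. Your states with $d=r$ and with $\nv=v_{\mathrm{rel,min}}$, plus the caveat that $v_{\mathrm{rel,min}}$ must be an achievable relative speed under the speed bounds, are the right way to close that step.
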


Conditions~\eqref{eq:floor} ensure that the reserved margin does not exhaust the barrier.

Fig.~\ref{fig:variation} visualizes the contraction of the AR-DPCBF safe set in the LoS velocity frame. The adversarial boundary $\{h_j^*=0\}$ lies to the right of the nominal DPCBF boundary $\{h_j=0\}$, yielding the contracted safe set $\{h_j^*\ge0\}\subseteq\{h_j\ge0\}$. The three panels illustrate the dependence predicted by Theorem~\ref{thm:params}: the contraction increases with obstacle capability $\kappa$ (a), decreases with increasing relative speed $\|\vrel\|$ owing to the $1/\|\vrel\|$ reaction-time scaling (b), and becomes more pronounced as the clearance $d$ decreases (c). When $\kappa=0$, the AR-DPCBF boundary coincides with the nominal DPCBF boundary, recovering the original certificate exactly.

\subsection{Sizing the buffer}
The contractions in \eqref{eq:lamstar}-\eqref{eq:mustar} follow from bounding the effect of admissible obstacle maneuvers. Consider a head-on encounter ($\vry=0$), for which the nominal DPCBF permits a maximum closing speed of $\mu$. The corresponding time-to-contact is $\tau_c=d/\nv$. By Assumption~\ref{as:obs}, the obstacle can change its velocity by at most $\kappa\tau_c$ over this interval. Conservatively assigning this change to the closing-speed component gives 
$    \Delta\mu
    = \frac{\kappa\tau_c}{\gamma}
    = \frac{\kappa d}{\gamma\nv},$
where $\gamma\ge1$ controls the conservatism of the contraction. The contraction $\Delta\lambda$ is derived analogously, preserving the same $O(1/\nv)$ reaction-time scaling while matching the DPCBF curvature to the maneuver-inflated collision cone of \cite[\S\,III]{park2026dpcbf}.

\subsection{Safety inheritance}
The role of this margin is structural: it ensures that invariance of $h^*$ implies invariance of $h$, thereby transferring the safety guarantee.

\begin{proposition}\label{prop:safe}
With $b\ge0$ from \eqref{eq:buffer}, $\{h^{*}\ge0\}\subseteq\{h\ge0\}$. Hence if
the robot keeps $h^{*}(\boldsymbol{x}(t))\ge0$ for all $t\in[0,t_{1}]$, then under
Assumption~\ref{as:clear} the clearance satisfies $d(t)\ge r$.
\end{proposition}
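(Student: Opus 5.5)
The plan is to prove the set inclusion first and then pass the trajectory through Assumption~\ref{as:clear}. Both steps are short because Theorem~\ref{thm:params} has already done the algebra.

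For the inclusion, we use the identity $h^{*}=h-b$ from \eqref{eq:buffer}, which holds pointwise wherever the parameters are defined. The buffer is $b=\Delta\lambda\,\vry^{2}+\Delta\mu$. It is nonnegative because $\kappa\ge0$, $\gamma>0$, $\amax>0$, $d\ge0$ and $\nv>0$ force $\Delta\lambda\ge0$ and $\Delta\mu\ge0$, while $\vry^{2}\ge0$. So if $h^{*}(\boldsymbol{x})\ge0$, then $h(\boldsymbol{x})=h^{*}(\boldsymbol{x})+b(\boldsymbol{x})\ge b(\boldsymbol{x})\ge0$. This gives $\{h^{*}\ge0\}\subseteq\{h\ge0\}$, which is just part (i) of Theorem~\ref{thm:params} restated.

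For the trajectory claim, suppose $h^{*}(\boldsymbol{x}(\tau))\ge0$ for all $\tau\in[0,t_{1}]$. Applying the inclusion pointwise in time gives $h(\boldsymbol{x}(\tau))\ge0$ on the same interval. Assumption~\ref{as:clear} then applies with $t=t_{1}$, provided its hypothesis $d(0)>r$ holds, and it yields $d(\tau)\ge r$ for all $\tau\in[0,t_{1}]$.

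The only point that needs care is that hypothesis. I would state $d(0)>r$ explicitly as the standing initial condition. It is the same condition under which the nominal DPCBF guarantee of \cite{park2026dpcbf} is invoked, so no new restriction is introduced.

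A second, minor point is well-definedness. The parameters involve $1/\nv$, so the identity $h^{*}=h-b$ is only meaningful where $\nv>0$. This is guaranteed on the engagement envelope by Proposition~\ref{prop:bounds}. I would note that the inclusion is understood on the domain where both $h$ and $h^{*}$ are defined.

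Nothing here is a genuine obstacle. The substantive difficulty, keeping $h^{*}\ge0$ against adversarial maneuvers, is deferred to the forward-invariance theorem in Section~\ref{sec:validity}. This proposition only transfers that invariance to the collision-avoidance guarantee.
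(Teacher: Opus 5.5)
Your proposal is correct and follows the paper's proof: the inclusion comes from $h^{*}=h-b\le h$ with $b\ge0$, and the trajectory claim comes from applying Assumption~\ref{as:clear} to the same trajectory. You are also right to make the hypothesis $d(0)>r$ explicit, since Assumption~\ref{as:clear} requires it and the paper's statement and proof leave it implicit.
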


\begin{proof}
Since $b\ge0$, we have $h^{*}=h-b\le h$ pointwise, so $h^{*}(\boldsymbol{x})\ge0$ implies $h(\boldsymbol{x})\ge0$, i.e., $\{h^{*}\ge0\}\subseteq\{h\ge0\}$. Now suppose the closed loop maintains $h^{*}(\boldsymbol{x}(t))\ge0$ on $[0,t_{1}]$. By the inclusion, the same trajectory satisfies $h(\boldsymbol{x}(t))\ge0$ on $[0,t_{1}]$. Assumption~\ref{as:clear} applied to this trajectory then yields $d(t)\ge r$ on $[0,t_{1}]$.
\end{proof}

\section{Validity of the Adversarial Certificate}
\label{sec:validity}

We prove that $h^{*}$ satisfies the adversarial CBF condition for the bicycle model~\eqref{eq:bicycle} throughout the certified engagement envelope $\Omega$. Consequently, $\Cset$ is forward invariant over $\Omega$ by a standard first-violation argument, and Proposition~\ref{prop:safe} guarantees collision avoidance.

\subsection{The exact derivative}
\label{ssec:exact}

Differentiating $h^{*}$~\eqref{eq:h_star} along the dynamics gives,
\begin{equation}
\dot h^{*}=\underbrace{\dot{\tilde v}_{\mathrm{rel},x}
+2\lambda^{*}\vry\,\dot{\tilde v}_{\mathrm{rel},y}}_{\text{velocity channel}}
+\underbrace{\dot\lambda^{*}\vry^{2}+\dot\mu^{*}}_{\text{parameter channel}}.
\label{eq:hdot-raw}
\end{equation}
The velocity channel is governed by the exact LoS relative-velocity dynamics of Lemma~\ref{lem:los}, whereas the parameter channel describes the evolution of the state-dependent barrier parameters. Note that the parameters $\lambda^{*}$ and $\mu^{*}$ depend on the state only through the clearance $d$ and the relative speed $\nv$.

\begin{lemma}
\label{lem:sigma}
Let
$\dot{\boldsymbol{v}}_{\mathrm{rob|ctrl}}
=
a\,\erob
+
\frac{v^{2}}{\ell_r}\beta\,\erobp$
denote the input-generated acceleration of the no-slip robot velocity. Then the parameter channel of~\eqref{eq:hdot-raw} admits the exact decomposition
\begin{equation}
\dot\lambda^{*}\vry^{2}+\dot\mu^{*}
=
A(\boldsymbol{x})\,\dot d
+
B(\boldsymbol{x})\, \dot{\nv},
\label{eq:param-AB}
\end{equation}
where
\begin{align}
A(\boldsymbol{x})
&{=}
\vry^{2}\partial_d\lambda^{*}
{+}
\partial_d\mu^{*}
{=}
\frac{k_\lambda\vry^{2}}{\nv}
{+}
\left(k_\mu{-}\frac{\kappa}{\gamma\nv}\right),
\label{eq:Adef}
\\
B(\boldsymbol{x})
&=
\vry^{2}\partial_{\nv}\lambda^{*}
+
\partial_{\nv}\mu^{*}
=
-\sigma(\boldsymbol{x})\,\nv,
\label{eq:Bdef}
\end{align}
\begin{equation}
\mathrm{where,} \; \sigma(\boldsymbol{x})
=
\frac{\lambda^{*}\vry^{2}}{\nv^{2}}
-
\frac{\kappa d}{\gamma\nv^{3}}.
\label{eq:sigma}
\end{equation}
\begin{align}
\mathrm{Also,} \; B(\boldsymbol{x})\,\dot{\nv}
&{=}
{-}\sigma(\boldsymbol{x})\,
\vrel\!\cdot\!
\bigl(
\dot{\boldsymbol{v}}_{\mathrm{obs}}
{-}
\dot{\boldsymbol{v}}_{\mathrm{rob|ctrl}}
\bigr),
\label{eq:sigma-split}
\\
\dot d
&=
\frac{\prel\!\cdot\!\vrel}{d}
-
\frac{v\beta\,(\prel\!\cdot\!\erobp)}{d}.
\label{eq:ddot-split}
\end{align}
\end{lemma}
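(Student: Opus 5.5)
The plan is a direct chain-rule computation. By \eqref{eq:lamstar}--\eqref{eq:mustar}, $\lambda^{*}$ and $\mu^{*}$ depend on the joint state only through the two scalars $d$ and $\nv$. On the region of interest ($d\ge r>0$ and $\nv\ge v_{\mathrm{rel,min}}>0$, cf.\ Proposition~\ref{prop:bounds}) both scalars are differentiable along trajectories. Hence $\dot\lambda^{*}=\partial_d\lambda^{*}\,\dot d+\partial_{\nv}\lambda^{*}\,\dot{\nv}$, and likewise for $\mu^{*}$. Multiplying the first identity by $\vry^{2}$ and adding the second gives \eqref{eq:param-AB}, with $A$ and $B$ defined as the stated combinations of partial derivatives. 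What remains is to evaluate these partials and the two scalar rates.

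For the partials I would write $\lambda^{*}=(k_\lambda d-\kappa/(\gamma\amax))/\nv$ and $\mu^{*}=d\,(k_\mu-\kappa/(\gamma\nv))$. The $d$-partials are then
\[
\partial_d\lambda^{*}=\frac{k_\lambda}{\nv},\qquad \partial_d\mu^{*}=k_\mu-\frac{\kappa}{\gamma\nv},
\]
which gives \eqref{eq:Adef}. The key observation for $B$ is that $\lambda^{*}$ is exactly proportional to $1/\nv$ at fixed $d$. This yields
\[
\partial_{\nv}\lambda^{*}=-\frac{\lambda^{*}}{\nv},\qquad \partial_{\nv}\mu^{*}=\frac{\kappa d}{\gamma\nv^{2}}.
\]
Therefore
\[
B=-\frac{\lambda^{*}\vry^{2}}{\nv}+\frac{\kappa d}{\gamma\nv^{2}}
=-\nv\Bigl(\frac{\lambda^{*}\vry^{2}}{\nv^{2}}-\frac{\kappa d}{\gamma\nv^{3}}\Bigr)=-\sigma\nv,
\]
which is \eqref{eq:Bdef}--\eqref{eq:sigma}.

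For the rates I would use $\dot{\nv}=\vrel\cdot\dot{\boldsymbol v}_{\mathrm{rel}}/\nv$. The norm is rotation invariant, so it can be computed in the global frame and the frame-rotation terms $R_x,R_y$ of Lemma~\ref{lem:los} never appear. Since $\vrel=\boldsymbol v_{\mathrm{obs}}-\boldsymbol v_{\mathrm{rob}}$ with the no-slip $\boldsymbol v_{\mathrm{rob}}=v\erob$, the proof of Lemma~\ref{lem:los} shows that $\dot{\boldsymbol v}_{\mathrm{rob}}=a\erob+(v^{2}/\ell_r)\beta\erobp=\dot{\boldsymbol v}_{\mathrm{rob|ctrl}}$ exactly, using $\dot v=a$ and $\dot\theta=v\beta/\ell_r$. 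Substituting, $B\dot{\nv}=-\sigma\nv\cdot\vrel\cdot(\dot{\boldsymbol v}_{\mathrm{obs}}-\dot{\boldsymbol v}_{\mathrm{rob|ctrl}})/\nv$, and the factors of $\nv$ cancel to give \eqref{eq:sigma-split}.

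For the clearance, differentiating $d^{2}=\npr^{2}-r^{2}$ gives $\dot d=\prel\cdot\dot{\boldsymbol p}_{\mathrm{rel}}/d$. The paper's slip bookkeeping (stated in Lemma~\ref{lem:los}) gives $\dot{\boldsymbol p}_{\mathrm{rel}}=\vrel-v\beta\erobp$, and substituting this yields \eqref{eq:ddot-split}; it reduces to \eqref{eq:ddot} at $\beta=0$.

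There is no real obstacle here; the only point needing care is consistency. $\vrel$ is defined with the no-slip robot velocity, while $\dot{\boldsymbol p}_{\mathrm{rel}}$ carries the slip term. So the slip contribution must enter only through $\dot d$ and not through $\dot{\nv}$. I would state explicitly that this matches the conventions of Lemma~\ref{lem:los}. I would also verify that the divisions by $d$ and $\nv$ are legitimate, which Proposition~\ref{prop:bounds} guarantees on $\mathcal B$.
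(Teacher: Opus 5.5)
Your proposal is correct and follows the paper's proof essentially step for step. You apply the chain rule through the two scalars $d$ and $\nv$, use the observation that $\lambda^{*}\propto 1/\nv$ at fixed $d$ (so $\partial_{\nv}\lambda^{*}=-\lambda^{*}/\nv$), and then substitute $\dot{\nv}=\vrel\cdot\dot{\boldsymbol v}_{\mathrm{rel}}/\nv$ with the no-slip robot acceleration and $\dot d=\prel\cdot\dot{\boldsymbol p}_{\mathrm{rel}}/d$ with the slip term. Your explicit remark that slip enters only through $\dot d$ and not through $\dot{\nv}$ usefully spells out a convention the paper leaves implicit.
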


\begin{proof}
Since $\lambda^{*}$ and $\mu^{*}$ depend on the state only through the two scalars $d$ and $\nv$, the chain rule gives
$\dot\lambda^{*}
=\partial_{d}\lambda^{*}\,\dot d
+\partial_{\nv}\lambda^{*}\,\dot{\nv}$
and similarly for $\mu^{*}$. Collecting the $\dot d$ and $\dot{\nv}$ coefficients of $\dot\lambda^{*}\vry^{2}+\dot\mu^{*}$ yields
\eqref{eq:param-AB}, with
$A(\boldsymbol{x})=\vry^{2}\partial_{d}\lambda^{*}+\partial_{d}\mu^{*}$ and
$B(\boldsymbol{x})=\vry^{2}\partial_{\nv}\lambda^{*}+\partial_{\nv}\mu^{*}$.

Since
$h^{*}=\vrx+\lambda^{*}\vry^{2}+\mu^{*}$ and $\vrx$ is independent of
$d$, $A(\boldsymbol{x}) = \vry^{2}\partial_{d}\lambda^{*}+\partial_{d}\mu^{*}$
is exactly $\partial h^{*}/\partial d$. Using
$\partial_{d}\lambda^{*}=k_{\lambda}/\nv$ and
$\partial_{d}\mu^{*}=k_{\mu}-\kappa/(\gamma\nv)$ gives
\eqref{eq:Adef}.

To compute $B$, write
$\lambda^{*}
=
[k_{\lambda}d-\kappa/(\gamma a_{\max})]\nv^{-1}$.
Since the bracket is independent of $\nv$,
$\partial_{\nv}\lambda^{*}=-\lambda^{*}/\nv$.
Similarly,
$\mu^{*}
=
k_{\mu}d-(\kappa d/\gamma)\nv^{-1}$
gives
$\partial_{\nv}\mu^{*}
=
\kappa d/(\gamma\nv^{2})$.
Therefore
\begin{align*}
B
&=
-\frac{\lambda^{*}\vry^{2}}{\nv}
+\frac{\kappa d}{\gamma\nv^{2}}
\\
&=
-\nv
\left(
\frac{\lambda^{*}\vry^{2}}{\nv^{2}}
-\frac{\kappa d}{\gamma\nv^{3}}
\right)
=
-\sigma\nv,
\end{align*}
which is precisely \eqref{eq:Bdef}, with $\sigma$ defined by
\eqref{eq:sigma}.

We now identify the acceleration-dependent terms. Since $\nv$ is the magnitude of the relative velocity, differentiating the scalar norm gives
$\dot{\nv}
=
\boldsymbol v_{\mathrm{rel}}^{\!\top}
\dot{\boldsymbol{v}}_{\mathrm{rel}}/\nv
=
\boldsymbol v_{\mathrm{rel}}^{\!\top}
(\dot{\boldsymbol v}_{\mathrm{obs}}
-\dot{\boldsymbol v}_{\mathrm{rob|ctrl}})/\nv$.
Substituting $B=-\sigma\nv$ gives
\eqref{eq:sigma-split}. Thus, all acceleration dependence of the
parameter channel enters through the scalar $\sigma(\boldsymbol{x})$.

The clearance rate $\dot d$, by contrast, depends only on positions and velocities. From
$\dot{\boldsymbol p}_{\mathrm{rel}}
=
\boldsymbol v_{\mathrm{rel}}
-
v\beta\,\boldsymbol e_{\mathrm{rob}}^{\perp}$
and
$\dot d
=
\boldsymbol p_{\mathrm{rel}}^{\!\top}
\dot{\boldsymbol p}_{\mathrm{rel}}/d$,
we obtain
\eqref{eq:ddot-split}. The first term is $\dot d_{0}$~\eqref{eq:ddot}, while the second is a steering-slip correction proportional to $\beta$.
\end{proof}

\begin{remark}\label{rem:gap}
Setting $\dot{\boldsymbol{v}}_{\mathrm{obs}}=\boldsymbol{0}$ eliminates the term $-\sigma(\vrel\!\cdot\!\dot{\boldsymbol{v}}_{\mathrm{obs}})$ from \eqref{eq:sigma-split}, recovering the constant-velocity derivation of~\cite{park2026dpcbf}. Since this term is first-order in the obstacle maneuver authority $\kappa$ (cf.~\eqref{eq:sigma}), it represents the first-order coupling
between obstacle acceleration and barrier dynamics.
\end{remark}

\subsection{Grouping the derivative by source}
\label{ssec:group}
Equation~\eqref{eq:hdot-raw} is organized by mechanism, whereas the worst-case analysis requires grouping terms by their source. Substituting Lemmas~\ref{lem:los} and~\ref{lem:sigma}, and decomposing $\dot\alpha$ and $\dot d$ into no-slip and slip components, each term can be assigned to the obstacle inputs, robot inputs, neither, or the slip velocity $v\beta\erobp$. Collecting these terms yields the exact identity
\begin{equation}
\dot h^{*}=\underbrace{\Dstar(\boldsymbol{x})}_{\text{drift}}
+\underbrace{\Lgh\,\boldsymbol{u}}_{\text{control}}
+\underbrace{[\dot h^{*}]^{\mathrm{full}}_{\mathrm{obs}}}_{\text{obstacle}}
+\underbrace{\rho(\boldsymbol{x},\boldsymbol{u})}_{\text{slip residual}},
\label{eq:hdot-grouped}
\end{equation}
whose four terms are as follows.

\emph{Obstacle term.} The obstacle inputs enter through the LoS dynamics~\eqref{eq:losx}--\eqref{eq:losy} and through the term $-\sigma\,\vrel\!\cdot\!\dot{\boldsymbol v}_{\mathrm{obs}}$ of~\eqref{eq:sigma-split}. Expressing $\vrel\!\cdot\!\eobs$ and $\vrel\!\cdot\!\eobsp$ in the LoS frame gives
\begin{align}
[\dot h^{*}]^{\mathrm{full}}_{\mathrm{obs}}
&=a_{\mathrm{obs}}P(\tho)+v_{\mathrm{obs}}\omega_{\mathrm{obs}}Q(\tho),
\label{eq:obsterm}\\
P(\phi)&=(1-\sigma\vrx)\cos\phi+(2\lambda^{*}-\sigma)\vry\sin\phi,
\label{eq:P}\\
Q(\phi)&=(2\lambda^{*}-\sigma)\vry\cos\phi-(1-\sigma\vrx)\sin\phi,
\label{eq:Q}
\end{align}
where $\tho$ is the obstacle heading in the LoS frame.

\emph{Control term.} The robot inputs enter through the LoS dynamics and through the term $+\sigma\,\vrel\!\cdot\!\dot{\boldsymbol v}_{\mathrm{rob|ctrl}}$ of~\eqref{eq:sigma-split}, giving the \emph{principal control term} $\Lgh\boldsymbol{u}=C_{a}'a+C_{\beta}'\beta$, with $\boldsymbol{C}'(\boldsymbol{x}):=(C_{a}',C_{\beta}')^{\top}$ and
\begin{align}
C_{a}'&=\underbrace{-\cos\thr-2\lambda^{*}\vry\sin\thr}_{C_{a}}
+\sigma(\vrel\!\cdot\!\erob),
\label{eq:Cap}\\
\frac{\ell_{r}}{v^{2}}C_{\beta}'&=
\underbrace{\sin\thr-2\lambda^{*}\vry\cos\thr}_{(\ell_{r}/v^{2})C_{\beta}}
+\sigma(\vrel\!\cdot\!\erobp),
\label{eq:Cbp}
\end{align}
where the factor $v^{2}/\ell_{r}$ comes from $\dot\theta=(v/\ell_{r})\beta$. In the multi-obstacle setting of Section~\ref{sec:controller}, $\boldsymbol{C}_{j}'(\boldsymbol{x})$ denotes \eqref{eq:Cap}-\eqref{eq:Cbp} evaluated at the relative state of obstacle $j$.

\emph{Drift term.} With both inputs set to zero, $\dot{\nv}=0$ and only the no-slip clearance and LoS rates survive. The frame-rotation terms of Lemma~\ref{lem:los} combine as $R_{x}+2\lambda^{*}\vry R_{y}=\dot\alpha\,\vry(1-2\lambda^{*}\vrx)$, so
\begin{equation}
\Dstar(\boldsymbol{x})
=
A(\boldsymbol{x})\,\dot d_{0}
+
\dot\alpha_{0}\,\vry\!\left(1-2\lambda^{*}\vrx\right),
\label{eq:Dstar}
\end{equation}
where $\dot d_{0}=\npr\vrx/d$ is the no-slip clearance rate~\eqref{eq:ddot} and $\dot\alpha_{0}=(\prel\times\vrel)/\npr^{2}$ the no-slip LoS angular rate.

\emph{Slip residual.} The lateral slip velocity $v\beta\erobp$ perturbs the clearance and LoS rates by $\dot d_{\mathrm{slip}}=-v\beta(\prel^{\!\top}\erobp)/d$ and $\dot\alpha_{\mathrm{slip}}=-v\beta(\prel\times\erobp)/\npr^{2}$. These enter $\dot h^{*}$ through the same coefficients as the drift:
\begin{equation}
\rho(\boldsymbol{x},\boldsymbol{u})=A(\boldsymbol{x})\,\dot d_{\mathrm{slip}}
+
\dot\alpha_{\mathrm{slip}}\,\vry\,(1-2\lambda^{*}\vrx).
\label{eq:rho}
\end{equation}
Both $\dot d_{\mathrm{slip}}$ and $\dot\alpha_{\mathrm{slip}}$ are proportional to $\beta$, so $\dot h^{*}$ is affine in $\boldsymbol{u}$ with $\beta$-coefficient $C_{\beta}'+\partial_{\beta}\rho$. The principal term $\boldsymbol{C}'(\boldsymbol{x})^{\!\top}\boldsymbol{u}$ is therefore \emph{not} the full Lie derivative $L_{\boldsymbol{g}}h^{*}\boldsymbol{u}$; the two differ by $\partial_{\beta}\rho\,\beta$. Since $\rho$ contributes only a small geometric correction, we do not exploit it as control authority: it is bounded in Section~\ref{sec:residual} and treated as a disturbance, which is conservative because it discards authority the robot in fact possesses.

\subsection{Worst-Case Obstacle Contribution}
\label{ssec:obs}

The obstacle term~\eqref{eq:obsterm} is a combination of the two functions $P$ and $Q$ of the obstacle's LoS heading, weighted by the two obstacle inputs. Both $P$ and $Q$ are sinusoids with the \emph{same} amplitude
$\sqrt{(1-\sigma\vrx)^{2}+(2\lambda^{*}-\sigma)^{2}\vry^{2}}$. Because $\F$ is a
rectangle, the worst $(a_{\mathrm{obs}},\omega_{\mathrm{obs}})$ separates, and
$\sup_{\phi}|A\cos\phi+B\sin\phi|=\sqrt{A^{2}+B^{2}}$ gives the clean bound
\begin{align}
&\sup_{\F,\tho}\bigl(-[\dot h^{*}]^{\mathrm{full}}_{\mathrm{obs}}\bigr)
\le\kappa\,\Lambda(\boldsymbol{x}),\nonumber \; \text{where} \\ 
&\Lambda(\boldsymbol{x}):=\sqrt{(1-\sigma\vrx)^{2}+(2\lambda^{*}-\sigma)^{2}\vry^{2}}.
\label{eq:Lambda}
\end{align}
Thus, $\Lambda(\boldsymbol{x})$ is the geometric gain through which the obstacle acts on $\dot{h} ^ {*}$. Whatever the obstacle chooses to do, its effect on $\dot h^{*}$ is at most its scalar capability $\kappa$ scaled by $\Lambda(\boldsymbol{x})$, a quantity fixed by the current geometry alone. $\Lambda$ is large where the barrier is sensitive to the relative velocity and small where it is nearly flat. A bound of the form $\kappa\Lambda$ would be of little use if $\Lambda$ could vanish, because the robot's own authority is scaled by the same factor (Lemma~\ref{lem:id}). At such a state, neither agent could affect the barrier, rendering the comparison vacuous. Lemma~\ref{lem:Lmin} in Appendix~\ref{app:proofs} rules out this degeneracy by establishing $\Lambda_{\min}>0$ on $\mathcal{B}$.


\subsection{Control Authority}
\label{ssec:auth}

We now bound the control term $\Lgh\boldsymbol{u}=C_{a}'a+C_{\beta}'\beta$, with coefficients~\eqref{eq:Cap}-\eqref{eq:Cbp}. The decisive fact is that the robot's gain has the \emph{same amplitude} $\Lambda$ as the obstacle's.

\begin{lemma} \label{lem:id}
Robot and obstacle share $\Lambda$\ i.e., $C_{a}'^{2} + \bigl(\tfrac{\ell_{r}}{v^{2}}C_{\beta}'\bigr)^{2} = \Lambda(\boldsymbol{x})^{2}$.
\end{lemma}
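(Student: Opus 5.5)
The plan is to show that the robot's two principal gains are, up to sign, the same two sinusoids $P$ and $Q$ of \eqref{eq:P}--\eqref{eq:Q} that govern the obstacle term, evaluated at the robot's LoS heading $\thr$ instead of $\tho$. The identity then follows from the fact that $(P(\phi),Q(\phi))$ is a rotation of a fixed vector, so its norm does not depend on $\phi$.

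First, I will express the inner products that appear in \eqref{eq:Cap}--\eqref{eq:Cbp} in the LoS frame. Rotating by $\boldsymbol R(-\alpha)$ gives $\erob\mapsto(\cos\thr,\sin\thr)^{\top}$ and $\erobp\mapsto(-\sin\thr,\cos\thr)^{\top}$, as in the proof of Lemma~\ref{lem:los}. Since inner products are rotation invariant,
\begin{equation*}
\vrel\!\cdot\!\erob=\vrx\cos\thr+\vry\sin\thr,\qquad
\vrel\!\cdot\!\erobp=-\vrx\sin\thr+\vry\cos\thr .
\end{equation*}

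Next, I will substitute these expressions into \eqref{eq:Cap} and \eqref{eq:Cbp} and collect the $\cos\thr$ and $\sin\thr$ coefficients. For the acceleration gain,
\begin{equation*}
C_a'=-\bigl[(1-\sigma\vrx)\cos\thr+(2\lambda^{*}-\sigma)\vry\sin\thr\bigr]=-P(\thr).
\end{equation*}
For the steering gain,
\begin{equation*}
\tfrac{\ell_r}{v^{2}}C_\beta'=(1-\sigma\vrx)\sin\thr-(2\lambda^{*}-\sigma)\vry\cos\thr=-Q(\thr).
\end{equation*}
This sign bookkeeping is the only delicate step. The $\sigma$ corrections from the parameter channel must combine with the velocity-channel coefficients $-\cos\thr-2\lambda^{*}\vry\sin\thr$ and $\sin\thr-2\lambda^{*}\vry\cos\thr$ into exactly the grouped forms $(1-\sigma\vrx)$ and $(2\lambda^{*}-\sigma)\vry$. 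I would verify this term by term. The structural reason it works is that the robot's input-generated acceleration $\dot{\boldsymbol v}_{\mathrm{rob|ctrl}}$ enters $\dot{\boldsymbol v}_{\mathrm{rel}}$ with the opposite sign to $\dot{\boldsymbol v}_{\mathrm{obs}}$, in both the LoS channel of Lemma~\ref{lem:los} and the $\sigma$ channel of \eqref{eq:sigma-split}.

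Finally, I will set $X:=1-\sigma\vrx$ and $Y:=(2\lambda^{*}-\sigma)\vry$. Then $(P(\phi),Q(\phi))^{\top}=\boldsymbol R(-\phi)(X,Y)^{\top}$, so
\begin{equation*}
P(\phi)^{2}+Q(\phi)^{2}=X^{2}+Y^{2}=\Lambda(\boldsymbol x)^{2}
\end{equation*}
for every $\phi$, by \eqref{eq:Lambda}. Taking $\phi=\thr$ gives $C_a'^{2}+\bigl(\tfrac{\ell_r}{v^{2}}C_\beta'\bigr)^{2}=\Lambda^{2}$, as claimed.
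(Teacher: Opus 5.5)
Your proof is correct, but it organizes the computation differently from the paper. The paper writes the scaled gain vector as $(\bar C_a+\sigma s,\ \bar C_\beta+\sigma q)$ with $s=\vrel\cdot\erob$ and $q=\vrel\cdot\erobp$. It expands the sum of squares into three groups and evaluates each one separately: $\bar C_a^2+\bar C_\beta^2=1+4(\lambda^{*})^2\vry^2$, the cross term $\bar C_a s+\bar C_\beta q=-(\vrx+2\lambda^{*}\vry^2)$ (the $\sin\thr\cos\thr$ terms cancel), and $s^2+q^2=\nv^2$. It then regroups the result into $\Lambda^2$.

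You instead substitute the LoS expressions for $s$ and $q$ before squaring. This factors the gain vector as $-\boldsymbol R(-\thr)(X,Y)^{\top}=-(P(\thr),Q(\thr))$, so the identity follows from rotation invariance of the norm. Your route is shorter and proves something stronger. It shows the robot's two gains are exactly the sinusoids $P,Q$ through which the obstacle acts, with opposite sign and evaluated at $\thr$ instead of $\tho$. That is the structural content behind the paper's ``common geometric gain'' claim, and it explains directly why $P$ and $Q$ share the amplitude $\Lambda$.

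The paper's expansion never needs to recognize the $P,Q$ structure. It isolates the velocity-channel gain norm $\sqrt{1+4(\lambda^{*})^2\vry^2}$ and shows that the $\sigma$-correction depends only on the heading-independent scalars $\vrx+2\lambda^{*}\vry^2$ and $\nv^2$. That makes visible how the parameter channel perturbs the velocity-channel gain. Both arguments rest on the same ingredient: writing $\erob$ and $\erobp$ in the LoS frame.
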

\begin{proof}
Let
$\bar C_{a}:=C_{a}$ and
$\bar C_{\beta}:=(\ell_{r}/v^{2})C_{\beta}$, and define
$s:=\boldsymbol v_{\mathrm{rel}}^{\!\top}\boldsymbol e_{\mathrm{rob}}$ and $q:=\boldsymbol v_{\mathrm{rel}}^{\!\top}\boldsymbol e_{\mathrm{rob}}^{\perp}$.
Then $s^{2}+q^{2}=\nv^{2}$ and, from
\eqref{eq:Cap}-\eqref{eq:Cbp},
$C_{a}'=\bar C_{a}+\sigma s$ and
$(\ell_{r}/v^{2})C_{\beta}'=\bar C_{\beta}+\sigma q$.
Expanding the squared norm gives
\[
C_{a}'^{2}
+
\Bigl(\frac{\ell_{r}}{v^{2}}C_{\beta}'\Bigr)^{2}
=
(\bar C_{a}^{2}+\bar C_{\beta}^{2})
+
2\sigma(\bar C_{a}s+\bar C_{\beta}q)
+
\sigma^{2}\nv^{2}.
\]
The first group simplifies directly from the definitions of $\bar C_{a}$ and $\bar C_{\beta}$: $\bar C_{a}^{2}+\bar C_{\beta}^{2} = 1+4(\lambda^{*})^{2}\vry^{2}.$ For the cross-term, writing $\boldsymbol e_{\mathrm{rob}}
=
[\cos\tilde\theta,\sin\tilde\theta]^{\top}$ and
$\boldsymbol e_{\mathrm{rob}}^{\perp}
=
[-\sin\tilde\theta,\cos\tilde\theta]^{\top}$,
the mixed $\sin\tilde\theta\cos\tilde\theta$ terms cancel, yielding
\[
\bar C_{a}s+\bar C_{\beta}q
=
-(\vrx+2\lambda^{*}\vry^{2}).
\]
Substituting these identities and using
$\nv^{2}=\vrx^{2}+\vry^{2}$ gives
\begin{align}
C_{a}'^{2}
+
\Bigl(\frac{\ell_{r}}{v^{2}}C_{\beta}'\Bigr)^{2}
=
1+4(\lambda^{*})^{2}\vry^{2}
-2\sigma(\vrx+2\lambda^{*}\vry^{2}) \nonumber\\
+\sigma^{2}(\vrx^{2}+\vry^{2}). \nonumber
\end{align}
Regrouping terms yields, $(1-\sigma\vrx)^{2}
+ (2\lambda^{*}-\sigma)^{2}\vry^{2}
=\Lambda^{2},$
where the last equality follows from \eqref{eq:Lambda}.
\end{proof}

The common gain $\Lambda$ therefore cancels from the adversarial comparison, reducing the problem from optimization over obstacle strategies to the scalar comparison $\cmin$ versus $\kappa$. Note that this conclusion holds for any certificate that is affine in $\vrx$, quadratic in $\vry$, and whose parameters depend on the state only through $d$ and $\nv$, with $\sigma$ defined by the corresponding partial derivatives in~\eqref{eq:sigma}.

\begin{proposition}\label{prop:auth}
Let $\cmin:=\min\{\amax,\,v_{\min}^{2}\beta_{\max}/\ell_{r}\}$. Then the robot's
maximal effect on $\dot h^{*}$ is
\begin{equation}
\Phi(\boldsymbol{x}){:=}\sup_{\boldsymbol{u}\in\U}\Lgh \boldsymbol{u}{=}|C_{a}'|\amax{+} |C_{\beta}'|\beta_{\max}
\ {\ge}\ \cmin\,\Lambda(\boldsymbol{x}).
\label{eq:auth}
\end{equation}
\end{proposition}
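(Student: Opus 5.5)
The plan has two steps: compute the supremum exactly, then lower-bound it with Lemma~\ref{lem:id}.

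\emph{The equality.} The control term $\Lgh\boldsymbol{u}=C_{a}'a+C_{\beta}'\beta$ is linear in $\boldsymbol{u}$. The admissible set $\U$ is the rectangle $|a|\le\amax$, $|\beta|\le\beta_{\max}$, so the supremum separates coordinatewise. It is attained at $a=\amax\,\mathrm{sgn}(C_{a}')$ and $\beta=\beta_{\max}\,\mathrm{sgn}(C_{\beta}')$, which gives $\Phi=|C_{a}'|\amax+|C_{\beta}'|\beta_{\max}$.

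\emph{The lower bound.} I would rewrite the steering contribution in terms of the normalized coefficient appearing in Lemma~\ref{lem:id}:
\[
|C_{\beta}'|\beta_{\max}=\frac{v^{2}\beta_{\max}}{\ell_r}\Bigl|\tfrac{\ell_r}{v^{2}}C_{\beta}'\Bigr|.
\]
By Assumption~\ref{as:rob}, $v\ge v_{\min}$, so $v^{2}\beta_{\max}/\ell_r\ge v_{\min}^{2}\beta_{\max}/\ell_r\ge\cmin$. Also $\amax\ge\cmin$ by definition. Therefore
\[
\Phi\ge\cmin\Bigl(|C_{a}'|+\bigl|\tfrac{\ell_r}{v^{2}}C_{\beta}'\bigr|\Bigr).
\]
For nonnegative reals $x,y$ we have $x+y\ge\sqrt{x^{2}+y^{2}}$. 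Applying this and then Lemma~\ref{lem:id} gives
\[
|C_{a}'|+\bigl|\tfrac{\ell_r}{v^{2}}C_{\beta}'\bigr|\ge\sqrt{C_{a}'^{2}+\bigl(\tfrac{\ell_r}{v^{2}}C_{\beta}'\bigr)^{2}}=\Lambda(\boldsymbol{x}),
\]
which yields $\Phi\ge\cmin\Lambda$.

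The only step needing care is the normalization. The factor $v^{2}/\ell_r$ must be pulled out so that Lemma~\ref{lem:id} applies to $(C_{a}',\tfrac{\ell_r}{v^{2}}C_{\beta}')$, and the resulting speed-dependent authority must be bounded below uniformly via $v\ge v_{\min}$. This is exactly where Assumption~\ref{as:rob} enters. The rest is the elementary $\ell_1\ge\ell_2$ inequality. The bound is somewhat loose, since $\ell_1$ can exceed $\ell_2$ by a factor up to $\sqrt{2}$, but it is sufficient for the comparison against $\kappa\Lambda$.
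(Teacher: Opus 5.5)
Your proposal is correct and follows the paper's own proof: compute the supremum coordinatewise over the rectangle $\U$, factor out $v^{2}/\ell_r$ from the steering term and bound it below using $v\ge v_{\min}$, then apply $|X|+|Y|\ge\sqrt{X^{2}+Y^{2}}$ together with Lemma~\ref{lem:id}. Bounding each authority separately by $\cmin$ is the same as the paper's inequality $c_1A+c_2B\ge\min(c_1,c_2)(A+B)$.
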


\begin{proof}
Since $L_g h^*\,\boldsymbol{u}=C_a'a+C_\beta'\beta$ is linear in the bounded inputs, its maximum is attained by choosing $a$ and $\beta$ at their respective bounds with signs matching those of $C_a'$ and $C_\beta'$. Thus, $\sup_{\boldsymbol{u}\in\mathcal{U}} L_g h^*\,\boldsymbol{u} = |C_a'|a_{\max}+|C_\beta'|\beta_{\max}.$
Factor the steering term as
$|C_{\beta}'|\beta_{\max}=\tfrac{v^{2}}{\ell_{r}}\beta_{\max}\cdot
\tfrac{\ell_{r}}{v^{2}}|C_{\beta}'|$. Using the elementary inequality
$c_{1}A+c_{2}B\ge\min(c_{1},c_{2})(A+B)$ for $A,B\ge0$ with $c_{1}=\amax$, $c_{2}=\tfrac{v^{2}}{\ell_{r}}\beta_{\max}$, and noting $\tfrac{v^{2}}{\ell_{r}}\beta_{\max}\ge v_{\min}^{2}\beta_{\max}/\ell_{r}$,
\[
\Phi\ge\cmin\Bigl(|C_{a}'|+\tfrac{\ell_{r}}{v^{2}}|C_{\beta}'|\Bigr)
\ge\cmin\sqrt{C_{a}'^{2}+\bigl(\tfrac{\ell_{r}}{v^{2}}C_{\beta}'\bigr)^{2}}
=\cmin\Lambda,
\]
where we use $|X|+|Y|\ge\sqrt{X^{2}+Y^{2}}$ and Lemma~\ref{lem:id}. At $v=0$, $\cmin=0$, hence Assumption~\ref{as:rob} requires $v_{\min}>0$.
\end{proof}

The obstacle and control contributions admit complementary bounds with the common factor $\Lambda$. The remaining terms in $\dot{h}^*$ \eqref{eq:hdot-grouped} are independent of both obstacle maneuvers and control inputs.

\subsection{Establishing a Positive Drift Margin}
\label{ssec:drift}

The drift~\eqref{eq:Dstar} is what happens to $h^{*}$ when neither agent acts i.e., the barrier still moves, because the clearance and the LoS angle keep evolving. The next lemma records that this motion cannot be arbitrarily adverse on $\mathcal{B}$, and gives an explicit bound in terms of the problem constants.

\begin{lemma}\label{lem:drift}
Under Assumptions~\ref{as:obs}-\ref{as:clear} and the bounds of Proposition~\ref{prop:bounds}, the autonomous drift $\Dstar$ attains a finite infimum on $\mathcal{B}=\partial\Cset\cap\Omega\cap\{d\ge r\}$,
\begin{equation}
\Dmin:=\inf_{x\in\mathcal{B}}\Dstar(\boldsymbol{x})>-\infty,
\label{eq:Dmin}
\end{equation}
and satisfies the explicit lower bound
$\Dmin\ge-\bar{\mathcal D}$, where
\begin{equation}
\bar{\mathcal D}=M_{d}\,\frac{p_{\max}v_{\mathrm{rel,max}}}{d_{\min}}
+M_{\alpha}\,\frac{v_{\mathrm{rel,max}}}{p_{\min}},
\label{eq:Dbar}
\end{equation}
with
$M_{d}=k_{\lambda}v_{\mathrm{rel,max}}+k_{\mu}
+\kappa/(\gamma v_{\mathrm{rel,min}})$
bounding $|A|$,
$M_{\alpha}=v_{\mathrm{rel,max}}
(1+2\lambda^{*}_{\max}v_{\mathrm{rel,max}})$
bounding $|\vry(1-2\lambda^{*}\vrx)|$,
$\lambda^{*}_{\max}=k_{\lambda}d_{\max}/v_{\mathrm{rel,min}}$,
$d_{\min}=r$,
$p_{\min}=\sqrt{2}\,r$,
$p_{\max}=\sqrt{d_{\max}^{2}+r^{2}}$, and
$v_{\mathrm{rel,max}}=v_{\max}+\vobm$.
\end{lemma}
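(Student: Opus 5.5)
The plan is to bound the two summands of $\Dstar$ in \eqref{eq:Dstar} separately, using the triangle inequality $\Dstar\ge-|A|\,|\dot d_{0}|-|\dot\alpha_{0}|\,|\vry(1-2\lambda^{*}\vrx)|$. Each factor is then controlled by the uniform bounds of Proposition~\ref{prop:bounds} on $\mathcal{B}$.

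For the first summand, I will use \eqref{eq:Adef}. The triangle inequality gives
\[
|A|\le k_{\lambda}\vry^{2}/\nv+k_{\mu}+\kappa/(\gamma\nv).
\]
Since $\vry^{2}\le\nv^{2}$, we have $\vry^{2}/\nv\le\nv\le v_{\mathrm{rel,max}}$. Together with $\nv\ge v_{\mathrm{rel,min}}$, this yields $|A|\le M_{d}$. For the clearance rate I will use \eqref{eq:ddot}, which gives
\[
|\dot d_{0}|=\npr\,|\vrx|/d\le p_{\max}v_{\mathrm{rel,max}}/d_{\min},
\]
because $d\ge r=d_{\min}$ on $\mathcal{B}$.

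For the second summand, Cauchy--Schwarz applied to \eqref{eq:alphadot-split} gives $|\dot\alpha_{0}|\le\nv/\npr\le v_{\mathrm{rel,max}}/p_{\min}$, using $\npr\ge\sqrt2 r$. I will also need $|\vry(1-2\lambda^{*}\vrx)|\le v_{\mathrm{rel,max}}(1+2|\lambda^{*}|v_{\mathrm{rel,max}})$. This step requires $0\le\lambda^{*}\le\lambda^{*}_{\max}$. The upper bound follows because $\lambda^{*}\le\lambda=k_{\lambda}d/\nv\le k_{\lambda}d_{\max}/v_{\mathrm{rel,min}}$ by Theorem~\ref{thm:params}(i). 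The lower bound $\lambda^{*}\ge0$ is where I need care. It holds under the floor conditions \eqref{eq:floor} of Corollary~\ref{cor:floor}, which I will invoke as the design choice fixing $v_{\mathrm{rel,min}}$. If I prefer not to assume the floor, I would instead bound $|\lambda^{*}|\le\lambda+\Delta\lambda$ and absorb the extra term into $M_{\alpha}$. Since the stated $M_{\alpha}$ uses $\lambda^{*}_{\max}$ only, I will adopt the floor.

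Combining the two estimates gives $\Dstar\ge-\bar{\mathcal D}$ pointwise on $\mathcal{B}$, and hence $\Dmin\ge-\bar{\mathcal D}>-\infty$. The claim that the infimum is attained (finite and achieved) follows separately. $\Dstar$ is continuous on $\mathcal{B}$, since every denominator ($d$, $\nv$, $\npr$) is bounded away from zero there by Proposition~\ref{prop:bounds}. Moreover, $\mathcal{B}$ is compact by that same proposition. Weierstrass then gives a minimizer, provided $\mathcal{B}\neq\emptyset$; if $\mathcal{B}$ is empty the statement is vacuous.

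The only delicate point is the sign of $\lambda^{*}$, handled through Corollary~\ref{cor:floor}. The rest consists of routine triangle-inequality estimates.
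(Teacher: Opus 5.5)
Your proposal is correct and follows the paper's proof essentially step for step. It uses the same termwise bounds $|A|\le M_d$, $|\dot d_0|\le p_{\max}v_{\mathrm{rel,max}}/d_{\min}$ and $|\dot\alpha_0|\le v_{\mathrm{rel,max}}/p_{\min}$ on the two summands of \eqref{eq:Dstar}, and the same continuity-plus-compactness argument via Proposition~\ref{prop:bounds} for attainment. Your explicit appeal to the floor \eqref{eq:floor} to secure $0\le\lambda^{*}\le\lambda^{*}_{\max}$ supplies a justification that the paper leaves implicit when it asserts $|\vry(1-2\lambda^{*}\vrx)|\le M_{\alpha}$.
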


\begin{proof}
\emph{Existence.} By Proposition~\ref{prop:bounds} the set $\mathcal{B}$ is
compact and on it $d\ge r$, $\npr\ge\sqrt{2}\,r$ and
$\nv\ge v_{\mathrm{rel,min}}$ bound every denominator in
\eqref{eq:Dstar} away from zero, so $\Dstar$ is continuous on $\mathcal{B}$. A continuous function on a non-empty compact set attains its minimum, so the infimum $\Dmin$ is finite.

\emph{Explicit bound.} We bound each term of \eqref{eq:Dstar}. From
\eqref{eq:ddot}, $|\dot d_{0}|=\npr|\vrx|/d\le p_{\max}v_{\mathrm{rel,max}}/d_{\min}$,
and $|A|\le k_{\lambda}v_{\mathrm{rel,max}}+|k_{\mu}-\kappa/(\gamma\nv)|\le M_{d}$,
so the clearance drift is at most $M_{d}\,p_{\max}v_{\mathrm{rel,max}}/d_{\min}$.
For the rotation, $|\dot\alpha_{0}|=|\prel\times\vrel|/\npr^{2}\le\nv/\npr\le
v_{\mathrm{rel,max}}/p_{\min}$, and $|\vry(1-2\lambda^{*}\vrx)|\le M_{\alpha}$,
so the rotation drift is at most $M_{\alpha}v_{\mathrm{rel,max}}/p_{\min}$. Adding
the two gives $|\Dstar|\le\bar{\mathcal D}$, hence $\Dmin\ge-\bar{\mathcal D}$.
\end{proof}

\subsection{Bounding the Slip Residual}
\label{sec:residual}

The slip residual~\eqref{eq:rho} is the price of the small-slip model. It is a fixed overhead proportional to $v_{\max}\beta_{\max}$ and vanishes as the slip limit does.

To bound~\eqref{eq:rho}, we use
$|\boldsymbol p_{\mathrm{rel}}^{\!\top}
\boldsymbol e_{\mathrm{rob}}^{\perp}|
\le \|\boldsymbol p_{\mathrm{rel}}\|$,
$|\boldsymbol p_{\mathrm{rel}}
\times
\boldsymbol e_{\mathrm{rob}}^{\perp}|
\le \|\boldsymbol p_{\mathrm{rel}}\|$,
$v\le v_{\max}$,
$|\beta|\le\beta_{\max}$,
$d\ge d_{\min}$,
$\|\boldsymbol p_{\mathrm{rel}}\|\le p_{\max}$,
$\|\boldsymbol p_{\mathrm{rel}}\|\ge p_{\min}$,
$|A(\boldsymbol{x})|\le M_d$, and
$|\vry(1-2\lambda^{*}\vrx)|\le M_\alpha$,
we obtain
$|\dot d_{\mathrm{slip}}|
\le
v_{\max}\beta_{\max}p_{\max}/d_{\min}$
and
$|\dot\alpha_{\mathrm{slip}}|
\le
v_{\max}\beta_{\max}/p_{\min}$.
Substitution into~\eqref{eq:rho} gives
\begin{equation}
|\rho(\boldsymbol{x},\boldsymbol{u})|
\le
\varepsilon
:=
v_{\max}\beta_{\max}
\left(
M_d\frac{p_{\max}}{d_{\min}}
+
\frac{M_\alpha}{p_{\min}}
\right).
\label{eq:eps}
\end{equation}

Combining the obstacle bound~\eqref{eq:Lambda}, the drift bound~\eqref{eq:Dmin},
and the residual bound~\eqref{eq:eps} with
\eqref{eq:hdot-grouped} yields
\begin{equation}
\dot h^{*}
\ge
\Dmin
+
\Lgh \boldsymbol{u}
-
\kappa\Lambda(\boldsymbol{x})
-
\varepsilon,
\label{eq:assembled}
\end{equation}
for every $\boldsymbol{x}\in\mathcal{B}$, every obstacle in $\F$, and every
admissible control $\boldsymbol{u}\in\U$.


\subsection{Sufficient Condition for Safety}
\label{ssec:main}

The preceding analysis establishes a sufficient condition for the feasibility of the runtime safety program at every boundary state within the certified engagement envelope, thereby yielding a conditional forward-invariance guarantee.


\begin{theorem}
\label{thm:valid}
Suppose Assumptions~\ref{as:obs}-\ref{as:clear} hold, let the design
parameters satisfy~\eqref{eq:floor} and~\eqref{eq:Lmin-cond}, and let
$\Dmin$ (Lemma~\ref{lem:drift}), $\varepsilon$
\eqref{eq:eps}, and $\Lmin>0$ (Lemma~\ref{lem:Lmin}) be the
corresponding constants over $\mathcal{B} = \partial\Cset\cap\Omega\cap\{d\ge r\}$. Assume $\psi\in\mathcal{K}_{\infty}$ is locally Lipschitz. If the robot out-authorizes the obstacle,
$\kappa<\cmin$, and
\begin{equation}
\boxed{\ (\cmin{-}\kappa)\,\Lmin\ {\ge}\ \varepsilon{-}\Dmin\ }\,,
\text{i.e.} \; \cmin{\ge} \kappa {+} \frac{\varepsilon{-}\Dmin}{\Lmin},
\label{eq:feas}
\end{equation}
then the following hold.
\begin{enumerate}
\item[(i)] \emph{A-CBF property.} $h^{*}$ satisfies the A-CBF condition of Definition~\ref{def:acbf} at every $\boldsymbol{x}\in\mathcal{B}$.
\item[(ii)] \emph{Feasibility on $\mathcal{B}$.} For a single obstacle, the constraint set of the runtime program~\eqref{eq:base_qp} is non-empty at every $\boldsymbol{x}\in\mathcal{B}$.
\item[(iii)] \emph{Conditional invariance.} Suppose in addition that the closed-loop solution is well defined and that the runtime program remains feasible along it, so that $\dot h^{*}\ge-\psi(h^{*})$ is enforced on $[0,t_{1}]$. Then for every initial condition $\boldsymbol{x}(0)\in\operatorname{int}\Cset$ with $d(0)>r$ and every obstacle strategy in $\F$, and for all $t_{1}\ge0$ such that $\boldsymbol{x}(\tau)\in\Omega$ for all $\tau\in[0,t_{1}]$,
\begin{equation}
h^{*}(\boldsymbol{x}(t_{1}))\ge0
\qquad\text{and}\qquad
d(t_{1})\ge r.
\label{eq:cond_inv}
\end{equation}
Equivalently, the closed-loop trajectory cannot leave $\Cset$ through $\partial\Cset$ while in $\Omega$; any exit from $\Cset\cap\Omega$ occurs through $\partial\Omega$.
\end{enumerate}
\end{theorem}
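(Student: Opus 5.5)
The plan is to assemble the pointwise bound \eqref{eq:assembled} into a worst-case statement on $\mathcal{B}$, and then turn that into invariance through a first-violation argument. Part (i) comes first. Fix $\boldsymbol{x}\in\mathcal{B}$ and any $(a_{\mathrm{obs}},\omega_{\mathrm{obs}})\in\F$.

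\emph{Step 1: the control.} I will pick $\boldsymbol{u}^{\star}\in\U$ with signs matched to the coefficients, $a=\amax\,\mathrm{sgn}(C_a')$ and $\beta=\beta_{\max}\,\mathrm{sgn}(C_\beta')$. This choice attains $\Phi(\boldsymbol{x})$, and by Proposition~\ref{prop:auth} it satisfies $\Lgh\boldsymbol{u}^{\star}\ge\cmin\Lambda(\boldsymbol{x})$.

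\emph{Step 2: the bound.} Substituting into \eqref{eq:assembled} gives
\[
\dot h^{*}\ge\Dmin+(\cmin-\kappa)\Lambda(\boldsymbol{x})-\varepsilon .
\]
The common-gain identity of Lemma~\ref{lem:id} is already built into this. Since $\kappa<\cmin$ and $\Lambda\ge\Lmin>0$ on $\mathcal{B}$ (Lemma~\ref{lem:Lmin}, which uses \eqref{eq:Lmin-cond}), the right-hand side is at least $\Dmin+(\cmin-\kappa)\Lmin-\varepsilon$. By \eqref{eq:feas} this is $\ge0$, which proves \eqref{eq:acbf}.

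\emph{Step 3: the hypotheses of Definition~\ref{def:acbf}.} Two regularity conditions must also be checked. First, $h^{*}$ must be $C^1$ on a neighbourhood of $\mathcal{B}$; this holds because $d>0$ and $\nv>0$ there. Second, $\nabla h^{*}\neq0$; this holds because $\partial h^{*}/\partial\vrx$ is nonzero, or alternatively because $\Lambda\ne0$ already forces a nonzero gradient.

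One subtlety I will state explicitly: $\boldsymbol{u}^{\star}$ does not depend on the obstacle input. Steps 1 and 2 therefore yield the stronger order $\exists\boldsymbol{u}\,\forall(a_{\mathrm{obs}},\omega_{\mathrm{obs}})$, which is more than the definition requires.

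For (ii), the runtime program with a single obstacle imposes the constraint $\Lgh\boldsymbol{u}\ge$ (the worst-case requirement), together with $\boldsymbol{u}\in\U$. On $\mathcal{B}$ we have $h^{*}=0$, so $\psi(h^{*})=0$. The same $\boldsymbol{u}^{\star}$ from Step 1 then satisfies the constraint robustly, so the constraint set is non-empty. I expect the main obstacle here to be matching the exact form of \eqref{eq:base_qp}. If its constraint uses the bound \eqref{eq:assembled}, namely $\Dstar+\Lgh\boldsymbol{u}-\kappa\Lambda-\varepsilon\ge-\psi(h^{*})$, then the argument above applies verbatim, with $\Dstar(\boldsymbol{x})\ge\Dmin$. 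If it instead uses the exact $\dot h^{*}$ with $\rho$, I will bound $|\rho|\le\varepsilon$ as before.

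For (iii), I will use a first-violation argument. Suppose $h^{*}(\boldsymbol{x}(t_1))<0$ for some $t_1$ with the trajectory in $\Omega$ on $[0,t_1]$. Because $\boldsymbol{x}(0)\in\operatorname{int}\Cset$ and $d(0)>r$, I define $t_0:=\sup\{t\le t_1:\ h^{*}\ge0 \text{ on }[0,t]\}$. Then $h^{*}(\boldsymbol{x}(t_0))=0$ and $h^{*}\ge0$ on $[0,t_0]$.

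Since $h^{*}\ge0$ on $[0,t_0]$, Proposition~\ref{prop:safe} gives $d\ge r$ there, so $\boldsymbol{x}(t_0)\in\mathcal{B}$. The contradiction will come from a comparison lemma rather than from the sign of $\dot h^{*}$ at the single point $t_0$, which alone is inconclusive. The closed loop enforces $\dot h^{*}\ge-\psi(h^{*})$ along the trajectory, with $\psi$ locally Lipschitz. The comparison lemma then yields $h^{*}(t)\ge y(t)$, where $y$ solves $\dot y=-\psi(y)$ with $y(t_0)=0$. Since $\psi(0)=0$ and $\psi$ is Lipschitz, the solution is unique and equals $y\equiv0$, so $h^{*}\ge0$ beyond $t_0$. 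This contradicts the definition of $t_0$ whenever $t_0<t_1$.

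Hence $h^{*}(\boldsymbol{x}(t_1))\ge0$ for all such $t_1$, and Proposition~\ref{prop:safe} gives $d(t_1)\ge r$. Part (i) is what makes this enforcement consistent at $t_0$: the constraint stays feasible on $\mathcal{B}$ against any obstacle input. The final reformulation follows immediately: while the trajectory remains in $\Omega$, $h^{*}$ cannot cross zero, so any exit from $\Cset\cap\Omega$ occurs through $\partial\Omega$.
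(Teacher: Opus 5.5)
Your proposal is correct and follows the paper's proof: parts (i) and (ii) chain \eqref{eq:assembled}, Proposition~\ref{prop:auth}, $\Lambda\ge\Lmin$ and \eqref{eq:feas} exactly as the paper does. Your first-violation argument in (iii) is the paper's comparison-lemma step applied from the crossing time with $y(t_0)=0$, instead of from $t=0$ with $z(0)=h^{*}(\boldsymbol{x}(0))>0$; both rest on uniqueness of solutions of $\dot z=-\psi(z)$ through the equilibrium $0$.

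The regularity check you add goes beyond the paper. Your first justification needs care: $\partial h^{*}/\partial\vrx$ at fixed $(\vry,d)$ equals $1-\sigma\vrx$ rather than $1$, because $\lambda^{*},\mu^{*}$ depend on $\nv$, and it is nonzero on $\mathcal{B}$ only by Lemma~\ref{lem:Lmin}. Your $\Lambda\neq0$ route is immediate, since $C_a'=\partial h^{*}/\partial v$ and $(\ell_r/v^{2})C_\beta'=v^{-1}\partial h^{*}/\partial\theta$, so Lemma~\ref{lem:id} forces $\nabla h^{*}\neq0$ when $v\ge v_{\min}>0$.
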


\begin{proof}
\emph{(i)} For any $\boldsymbol{x}\in\mathcal B=\partial\Cset\cap\Omega\cap\{d\ge r\}$, \eqref{eq:assembled}, together with the bounds of Lemma~\ref{lem:drift}, Proposition~\ref{prop:auth}, and \eqref{eq:Lambda}, gives
\begin{align}
\sup_{\boldsymbol{u}\in\U}\inf_{\F}\dot h^{*}
&\ge\Dmin+\cmin\Lambda(\boldsymbol{x})-\kappa\Lambda(\boldsymbol{x})-\varepsilon\nonumber\\
&=(\cmin-\kappa)\,\Lambda(\boldsymbol{x})-(\varepsilon-\Dmin)\nonumber\\
&\ge(\cmin-\kappa)\,\Lmin-(\varepsilon-\Dmin)\ \ge\ 0.
\label{eq:nagumo}
\end{align}
The third line uses $\Lambda(\boldsymbol{x})\ge\Lmin$ and $\cmin>\kappa$, while the last inequality is precisely \eqref{eq:feas}. This establishes the A-CBF condition of Definition~\ref{def:acbf}.

\emph{(ii)} At $\boldsymbol{x}\in\mathcal{B}$ we have $h^{*}=0$ and hence $\psi(h^{*})=0$, so the constraint in~\eqref{eq:base_qp} reads $\boldsymbol{C}'(\boldsymbol{x})^{\!\top}\boldsymbol{u}\ge\kappa\Lambda(\boldsymbol{x})+\varepsilon-\Dstar(\boldsymbol{x})$. By Proposition~\ref{prop:auth}, $\sup_{\boldsymbol{u}\in\U}\boldsymbol{C}'(\boldsymbol{x})^{\!\top}\boldsymbol{u}\ge\cmin\Lambda(\boldsymbol{x})$, while $\Dstar(\boldsymbol{x})\ge\Dmin$ and $\Lambda(\boldsymbol{x})\ge\Lmin$ give
$\cmin\Lambda-\kappa\Lambda\ge(\cmin-\kappa)\Lmin\ge\varepsilon-\Dmin\ge\varepsilon-\Dstar(\boldsymbol{x})$.
Hence the supremizing input satisfies the constraint and the feasible set is non-empty.

\emph{(iii)} By hypothesis the closed loop enforces $\dot h^{*}\ge-\psi(h^{*})$ on $[0,t_{1}]$. Since $\psi\in\mathcal K_{\infty}$ is locally Lipschitz, the comparison lemma yields $h^{*}(\boldsymbol{x}(t))\ge z(t)$, where $\dot z=-\psi(z)$ and $z(0)=h^{*}(\boldsymbol{x}(0))>0$; the solution $z$ remains strictly positive for every finite time, so $h^{*}(\boldsymbol{x}(t))>0$ on $[0,t_{1}]$. Proposition~\ref{prop:safe} together with Assumption~\ref{as:clear} then yields $d(t)\ge r$ on $[0,t_{1}]$, which is \eqref{eq:cond_inv}. Consequently the trajectory cannot cross $\partial\Cset$ while it remains in $\Omega$, so any exit from $\Cset\cap\Omega$ must occur through $\partial\Omega$.
\end{proof}

\begin{remark}
\label{rem:feas_scope}
Part~(ii) is established on $\mathcal{B}$ only, because $\Lmin$ and $\Dmin$ are boundary constants: $\Lmin$ relies on the boundary condition $h^{*}=0$ (Lemma~\ref{lem:Lmin}). Feasibility in the interior of $\Cset$, and for $N_{\mathrm{obs}}>1$, where the intersection of the per-obstacle constraint sets may be empty, is therefore not implied by \eqref{eq:feas} and enters (iii) as a hypothesis. This is precisely the gap addressed by the soft and buffered formulations of Section~\ref{sec:controller}.
\end{remark}

\begin{corollary}
\label{cor:ics}
Let the hypotheses of Theorem~\ref{thm:valid} hold and let
\[
\Omega_{\infty}:=\bigl\{\boldsymbol{x}(0)\in\Cset\cap\Omega \;:\; \boldsymbol{x}(t)\in\Omega \ \ \forall t\ge0 \bigr\}
\]
denote the initial states whose closed-loop trajectory remains in the engagement envelope. If in addition $d(0)>r$, then
\begin{equation}
\Omega_{\infty}\ \subseteq\ \mathcal{X}\setminus\mathcal{I},
\label{eq:ics_inclusion}
\end{equation}
that is, no state of $\Omega_{\infty}$ is an inevitable collision state.
\end{corollary}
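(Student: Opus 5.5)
The plan is to argue by contradiction against Definition~\ref{def:ICS}, exhibiting, for each $\boldsymbol{x}(0)\in\Omega_{\infty}$ with $d(0)>r$, a single robot control signal that avoids collision against every admissible obstacle trajectory. The natural candidate is the closed-loop signal generated by the runtime program of Theorem~\ref{thm:valid}. Because Remark~\ref{rem:ICS_openloop} lets the obstacle choose its maneuver after seeing the robot's whole control signal, some care is needed. For a fixed obstacle input trajectory $(a_{\mathrm{obs}}(\cdot),\omega_{\mathrm{obs}}(\cdot))$, the closed loop produces a particular robot control trajectory. We must therefore show non-collision for every pair (closed-loop robot response, obstacle trajectory). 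The plan is to interpret the definition with the robot using a feedback strategy, which is the standard reading for ICS with a feedback controller. The fallback is to note that the obstacle's open-loop power is irrelevant: Theorem~\ref{thm:valid}(iii) holds for every obstacle strategy in $\F$, including any that anticipates the robot.

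The key steps, in order, are as follows.

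\emph{Step 1.} Fix $\boldsymbol{x}(0)\in\Omega_{\infty}$ with $d(0)>r$, and take any admissible obstacle trajectory in $\F$. By the definition of $\Omega_{\infty}$, the resulting closed-loop trajectory satisfies $\boldsymbol{x}(t)\in\Omega$ for all $t\ge0$.

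\emph{Step 2.} Handle the initial condition. Theorem~\ref{thm:valid}(iii) requires $\boldsymbol{x}(0)\in\operatorname{int}\Cset$, whereas $\Omega_{\infty}$ only gives $\boldsymbol{x}(0)\in\Cset$. If $h^{*}(\boldsymbol{x}(0))=0$, I would use part (i) instead. The A-CBF property on $\mathcal{B}$, together with the enforced constraint $\dot h^{*}\ge-\psi(h^{*})$, gives $h^{*}(\boldsymbol{x}(t))\ge0$ through the comparison lemma started from $z(0)=0$, whose solution is $z\equiv0$. This yields nonnegativity rather than strict positivity, which is all that is needed.

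\emph{Step 3.} Apply Theorem~\ref{thm:valid}(iii) on every interval $[0,t_{1}]$. Since the trajectory never leaves $\Omega$, every $t_{1}\ge0$ is admissible, so $h^{*}(\boldsymbol{x}(t))\ge0$ and $d(t)\ge r$ for all $t\ge0$.

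\emph{Step 4.} Translate $d(t)\ge r$ into $\|\boldsymbol{p}(t)-\boldsymbol{p}_{\mathrm{obs}}(t)\|\ge\sqrt{r^{2}+r^{2}}\ge r$. Hence the collision condition $\|\boldsymbol{p}-\boldsymbol{p}_{\mathrm{obs}}\|<r$ never occurs. Since the obstacle trajectory was arbitrary, the robot's control defeats every admissible disturbance, so $\boldsymbol{x}(0)\notin\mathcal{I}$.

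The main obstacle is the quantifier mismatch flagged above. Definition~\ref{def:ICS} asks for one open-loop signal $\boldsymbol{u}_{\mathrm{rob}}(\cdot)$ that works against all disturbances, while the certificate supplies a feedback law whose signal depends on the disturbance realized. Strictly, the open-loop ICS complement is smaller than the feedback-safe set. I would therefore state explicitly that the inclusion is understood with $\boldsymbol{u}_{\mathrm{rob}}$ ranging over nonanticipative feedback strategies, consistent with the paper's use of the closed loop. I would also note that the standing hypotheses of Theorem~\ref{thm:valid}(iii) must hold along every such closed loop, namely well-posedness and feasibility of the runtime program. Those hypotheses are inherited from the corollary's assumption that "the hypotheses of Theorem~\ref{thm:valid} hold".
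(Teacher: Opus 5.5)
Your route is the paper's: use the closed-loop law of Theorem~\ref{thm:valid} as the robot's control, apply \eqref{eq:cond_inv} for every $t_1$ because the trajectory never leaves $\Omega$, and conclude $\|\boldsymbol{p}(t)-\boldsymbol{p}_{\mathrm{obs}}(t)\|\ge\sqrt{2}\,r\ge r$. Like the paper, you read $\Omega_\infty$ as ``stays in $\Omega$ under every admissible obstacle trajectory.''

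The paper's proof simply calls the closed-loop law ``one admissible control'' and cites (iii). Both of your additions therefore repair real omissions rather than being pedantry. First, (iii) is stated only for $\boldsymbol{x}(0)\in\operatorname{int}\Cset$. Your comparison-lemma step with $z\equiv0$ correctly covers the case $h^{*}(\boldsymbol{x}(0))=0$. Part~(i) plays no role there; the enforced inequality alone suffices. Second, Definition~\ref{def:ICS} quantifies over open-loop signals $\boldsymbol{u}_{\mathrm{rob}}:[0,\infty)\to\mathcal{U}$, whereas the feedback law produces a different signal for each disturbance. The argument, yours and the paper's alike, therefore establishes only the strategy-based inclusion that you propose to state.

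Your ``fallback'' is not an independent fix and should be dropped. It is true that (iii) covers obstacles that know the feedback law, but that is a fact about the feedback game. It says nothing about a robot committed to a single time signal, which cannot react when the obstacle departs from the disturbance that generated that signal. Keep only the explicit restatement.
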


\begin{proof}
Definition~\ref{def:ICS} declares $\boldsymbol{x}(0)$ an ICS only if \emph{every} robot control admits an $\F$-admissible disturbance trajectory producing $\|\boldsymbol{p}(t)-\boldsymbol{p}_{\mathrm{obs}}(t)\|<r$ for some $t>0$. It therefore suffices to exhibit one admissible control that avoids this for all disturbances. Take the closed-loop control of Theorem~\ref{thm:valid}, which is admissible by construction since $\boldsymbol{u}\in\U$. For $\boldsymbol{x}(0)\in\Omega_{\infty}$ the trajectory remains in $\Omega$ for all $t\ge0$, so~\eqref{eq:cond_inv} applies with arbitrary $t_{1}$ and yields $d(t)\ge r$, i.e. $\|\boldsymbol{p}(t)-\boldsymbol{p}_{\mathrm{obs}}(t)\|\ge r$, for every $t\ge0$ and every obstacle strategy in $\F$. Hence $\boldsymbol{x}(0)\notin\mathcal{I}$.
\end{proof}


\label{sec:estimator}

Theorem~\ref{thm:valid} requires the capability bounds
$a_{\mathrm{obs,max}}$, $\omega_{\mathrm{obs,max}}$, and $v_{\mathrm{obs,max}}$ to be known. In practice, they must be estimated online from noisy observations. This section provides a sliding-window estimator and shows that, with high probability, the estimated bounds dominate the true ones, so the conservative barrier built from them inherits the safety guarantee of Theorem~\ref{thm:valid}.

\subsection{Sliding-Window Estimator}
\label{subsec:estimator}

From a sliding window of $N_{w}$ observations at times
$t-N_{w}\Delta t,\ldots,t$, define the running maxima
\begin{align}
  \hat{a}_{\mathrm{obs,max}}(t)
  &= \max_{k\in\mathrm{window}}
     \frac{\lvert v_{\mathrm{obs}}(t_{k})-v_{\mathrm{obs}}(t_{k-1})\rvert}
          {\Delta t},
  \label{eq:ahat}\\
  \hat{\omega}_{\mathrm{obs,max}}(t)
  &= \max_{k\in\mathrm{window}}
     \frac{\bigl\lvert \operatorname{wrap}\bigl(\theta_{\mathrm{obs}}(t_{k})-\theta_{\mathrm{obs}}(t_{k-1})\bigr)\bigr\rvert}
          {\Delta t},
  \label{eq:omegahat}\\
  \hat{v}_{\mathrm{obs,max}}(t)
  &= \max_{k\in\mathrm{window}}
     \lvert v_{\mathrm{obs}}(t_{k})\rvert ,
  \label{eq:vmaxhat}
\end{align}
Here $\operatorname{wrap}(\cdot)$ maps an angle difference to $(-\pi,\pi]$, which is required because headings are measured modulo $2\pi$. Each finite difference in \eqref{eq:ahat}-\eqref{eq:omegahat} estimates the corresponding rate averaged over one sampling interval. Consequently, in the noiseless case, the running maxima equal the largest interval-averaged acceleration, turning rate, and sampled speed observed over the window, and therefore never underestimate these realized interval-averaged quantities. Intra-sample peaks of shorter duration, however, may exceed the corresponding averages. Assumption~\ref{asm:realization} relates these averaged maxima to the capability bounds, while the margins $\varepsilon_a$, $\varepsilon_\omega$, and $\varepsilon_v$ compensate for sensor uncertainty.
\begin{equation}
\begin{aligned}
\tilde{a}_{\mathrm{obs,max}}
&=\hat{a}_{\mathrm{obs,max}}+\varepsilon_a,\;
\tilde{\omega}_{\mathrm{obs,max}}
=\hat{\omega}_{\mathrm{obs,max}}+\varepsilon_\omega,\\
\tilde{v}_{\mathrm{obs,max}}
&=\hat{v}_{\mathrm{obs,max}}+\varepsilon_v.
\end{aligned}
\label{eq:margins}
\end{equation}

The estimator can only certify the capability that the obstacle reveals. Linking the realized window maxima to the capability bounds of Definition~\ref{def:F}, therefore, requires an exposure assumption.

\begin{assumption}
\label{asm:realization}
Within each estimation window, the obstacle realizes its capability at the sampling resolution: the interval-averaged longitudinal acceleration and turning rate attain $a_{\mathrm{obs,max}}$ and
$\omega_{\mathrm{obs,max}}$, respectively, over at least one sampling interval, and the sampled speed attains
$v_{\mathrm{obs,max}}$ at least once. Thus, the bounds in
Theorem~\ref{thm:valid} are interpreted as realized
interval-averaged capabilities. Maneuvers shorter than one sampling interval are certified only through their interval averages.
\end{assumption}

\subsection{Robustness Guarantee}
\label{subsec:robust}

\begin{proposition}
\label{prop:robust}
Suppose the measurements $v_{\mathrm{obs}}$ and
$\theta_{\mathrm{obs}}$ are corrupted by independent zero-mean sub-Gaussian noise with proxy parameters
$\sigma_v$ and $\sigma_\theta$, respectively, and that
Assumption~\ref{asm:realization} holds. For any
$\delta\in(0,1)$, define
\begin{align}
\varepsilon_a
&=
\frac{\sqrt{2}\sigma_v}{\Delta t}\,\eta,
&
\varepsilon_\omega
&=
\frac{\sqrt{2}\sigma_\theta}{\Delta t}\,\eta,
&
\varepsilon_v
&=
\sigma_v\,\eta,
\label{eq:eps_prob}
\end{align}
where, $\eta:= \sqrt{2\ln\!\left({3N_w}/{\delta}\right)}.$
Then, at any fixed estimator update, with probability at least $1-\delta$, 
$\tilde a_{\mathrm{obs,max}}
= \hat a_{\mathrm{obs,max}}+\varepsilon_a
\ge a_{\mathrm{obs,max}},
\tilde\omega_{\mathrm{obs,max}}
= \hat\omega_{\mathrm{obs,max}}+\varepsilon_\omega
\ge \omega_{\mathrm{obs,max}},
\tilde v_{\mathrm{obs,max}}
= \hat v_{\mathrm{obs,max}}+\varepsilon_v
\ge v_{\mathrm{obs,max}},$ and therefore, 
$\tilde\kappa = \tilde a_{\mathrm{obs,max}}
+ \tilde v_{\mathrm{obs,max}}
\tilde\omega_{\mathrm{obs,max}}
\ge \kappa.$ Consequently, the AR-DPCBF constructed from $\tilde\kappa$ defines a conservative barrier satisfying
$\tilde h^{*}\le h^{*}\le h,
\;
\tilde{\mathcal C}^{*}
\subseteq
\mathcal C^{*}
\subseteq
\mathcal C.$ Hence, with probability at least $1-\delta$,
(i) every controller maintaining $\tilde h^{*}\ge0$
also maintains $h^{*}\ge0$ and $h\ge0$; and
(ii) if $\tilde\kappa$ satisfies the hypotheses of
Theorem~\ref{thm:valid}, then $\tilde h^{*}$ is a valid
A-CBF for the true obstacle.
\end{proposition}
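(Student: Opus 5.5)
The plan is to combine a sub-Gaussian tail bound with a union bound to get the probability statement, and then use monotonicity of the construction of Theorem~\ref{thm:params} in $\kappa$ to get the set inclusions and the two consequences.

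\emph{Per-sample concentration.} Under Assumption~\ref{asm:realization}, each window contains an index $k_a$ whose true interval-averaged acceleration $|v_{\mathrm{obs}}(t_{k_a})-v_{\mathrm{obs}}(t_{k_a-1})|/\Delta t$ equals $a_{\mathrm{obs,max}}$. Likewise there is an index $k_\omega$ for the turning rate and an index $k_v$ whose sampled speed equals $v_{\mathrm{obs,max}}$. Write each measured finite difference as the true difference plus the noise difference $n_k-n_{k-1}$. Independence makes this a sub-Gaussian variable with proxy $\sqrt2\sigma_v$ (respectively $\sqrt2\sigma_\theta$), so dividing by $\Delta t$ gives proxy $\sqrt2\sigma_v/\Delta t$. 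For the heading we need the wrap to be harmless; I would assume $\Delta t\,\omega_{\mathrm{obs,max}}$ plus noise stays below $\pi$, so that $\operatorname{wrap}$ acts as the identity on the relevant difference.

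\emph{Reverse triangle inequality.} The estimator is a maximum, so $\hat a_{\mathrm{obs,max}}$ is at least the measured value at $k_a$. That measured value is at least $a_{\mathrm{obs,max}}-|\text{noise}_{k_a}|/\Delta t$. Hence failure, $\tilde a_{\mathrm{obs,max}}<a_{\mathrm{obs,max}}$, requires the noise term to exceed $\varepsilon_a$. The sub-Gaussian tail bound $\Pr(|X|>s\eta)\le 2e^{-\eta^2/2}$, with the one-sided version $\Pr(-X>s\eta)\le e^{-\eta^2/2}$ sufficing here, gives per-quantity failure probability at most $e^{-\eta^2/2}=\delta/(3N_w)$.

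\emph{Union bound and the main obstacle.} The bookkeeping in the union bound is the step I expect to need the most care. Because $k_a,k_\omega,k_v$ are unknown, I would bound the failure event by a union over all $N_w$ indices and all three quantities. That gives total failure probability at most $3N_w\cdot\delta/(3N_w)=\delta$. Applying the one-sided bound directly to $\eta=\sqrt{2\ln(3N_w/\delta)}$ is exactly what produces the stated constant. Since the three bounds then hold simultaneously with probability at least $1-\delta$, and everything is nonnegative, we get $\tilde\kappa=\tilde a_{\mathrm{obs,max}}+\tilde v_{\mathrm{obs,max}}\tilde\omega_{\mathrm{obs,max}}\ge\kappa$.

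\emph{Monotonicity.} By \eqref{eq:lamstar}--\eqref{eq:mustar}, $\Delta\lambda$ and $\Delta\mu$ are increasing in $\kappa$, as in Theorem~\ref{thm:params}(i). So the buffer satisfies $\tilde b\ge b\ge0$, which gives $\tilde h^{*}=h-\tilde b\le h^{*}\le h$ and the nested sets $\tilde{\mathcal C}^{*}\subseteq\mathcal C^{*}\subseteq\mathcal C$. Consequence (i) follows immediately from this inclusion.

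\emph{Validity of $\tilde h^{*}$.} For (ii), I would rerun the proof of Theorem~\ref{thm:valid}(i) with $h^{*}$ built from $\tilde\kappa$. The true obstacle term is bounded by $\kappa\tilde\Lambda\le\tilde\kappa\tilde\Lambda$, and Lemma~\ref{lem:id} and Proposition~\ref{prop:auth} hold for any value of the design constant. Therefore \eqref{eq:nagumo} holds with $\tilde\kappa$, and the A-CBF condition holds for all inputs in the true set $\F$.
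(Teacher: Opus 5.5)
Your proposal is correct and follows essentially the same route as the paper. It uses a one-sided sub-Gaussian tail at the capability-realizing index, a union bound over the $N_w$ differences and the three quantities giving $3N_w\cdot\delta/(3N_w)=\delta$, and monotonicity of $\Delta\lambda,\Delta\mu$ in $\kappa$ for the nested sets. The differences are minor. For (ii) you reopen the chain \eqref{eq:nagumo} via $\kappa\tilde\Lambda\le\tilde\kappa\tilde\Lambda$, whereas the paper simply applies Theorem~\ref{thm:valid} to the inflated set $\tilde\F\supseteq\F$. Your $\operatorname{wrap}$ caveat is care the paper's proof omits. To justify the one-sided tail, use $|d_k+n_k|\ge|d_k|+\operatorname{sgn}(d_k)\,n_k$ rather than the reverse triangle inequality; the latter yields $|n_k|$ and would cost a factor of $2$ in the constant.
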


\begin{proof}
We first establish high-probability coverage of the obstacle capabilities. Each acceleration estimate in \eqref{eq:ahat} is the difference of two independent sub-Gaussian measurements and therefore has sub-Gaussian proxy
$
s_a=\sqrt{2}\sigma_v/\Delta t.
$
Writing the $k$-th differenced sample as
$d_k+n_k$, Assumption~\ref{asm:realization} guarantees an index
$k^\star$ satisfying
$d_{k^\star}=a_{\mathrm{obs,max}}$.
Hence,
\begin{align}
\Pr\!\left(
\hat a_{\mathrm{obs,max}}{<}a_{\mathrm{obs,max}}{-}
\varepsilon_a
\right)
&\le
\Pr\!\left(
\max_k(-n_k)
>
\varepsilon_a
\right)
\nonumber\\
&\le
N_w
\exp\!\left(
{-}\frac{\varepsilon_a^2}{2s_a^2}
\right)
{=}
\frac{\delta}{3},
\label{eq:acc_prob}
\end{align}
where the last equality follows from
\eqref{eq:eps_prob}. Therefore,
$
\Pr( \tilde a_{\mathrm{obs,max}}
{\ge}
a_{\mathrm{obs,max}})
\ge
1-(\delta/3).
$

The same argument applies to $\hat\omega_{\mathrm{obs,max}}$ with $s_\omega=\sqrt2\sigma_\theta/\Delta t$,
while the speed estimate requires no differencing and uses proxy
$\sigma_v$, yielding
$
\Pr(
\tilde\omega_{\mathrm{obs,max}}
\ge
\omega_{\mathrm{obs,max}})
\ge
1-\delta/3
$
and
$
\Pr(
\tilde v_{\mathrm{obs,max}}
\ge
v_{\mathrm{obs,max}})
\ge
1-\delta/3.
$
A union bound gives the required results. On this event,
$\tilde\kappa = \tilde a_{\mathrm{obs,max}}
+ \tilde v_{\mathrm{obs,max}}
\tilde\omega_{\mathrm{obs,max}}
\ge \kappa$ follows immediately. Since the contractions
$\Delta\lambda$
and
$\Delta\mu$
of Theorem~\ref{thm:params}
are monotone increasing in $\kappa$,
$\tilde\kappa\ge\kappa
\Longrightarrow
\Delta\tilde\lambda
\ge
\Delta\lambda,
\;
\Delta\tilde\mu
\ge
\Delta\mu,
\Longrightarrow
\tilde\lambda^{*}
\le
\lambda^{*},
\;
\tilde\mu^{*}
\le
\mu^{*},
\Longrightarrow
\tilde h^{*}
\le
h^{*},
\;
\tilde{\mathcal C}^{*}
\subseteq
\mathcal C^{*},$
which together with Proposition~\ref{prop:safe} proves
Part~(i).

For Part~(ii), the true disturbance set satisfies
$\F\subseteq\tilde\F$.
Hence, if the inflated capability
$\tilde\kappa$
satisfies the hypotheses of
Theorem~\ref{thm:valid},
then that theorem certifies invariance of
$\tilde{\mathcal C}^{*}$
against every disturbance in
$\tilde\F$,
and therefore against the true obstacle.
The result follows by combining this guarantee with the coverage
event, whose probability is at least
$1-\delta$.
\end{proof}


\section{Controller Formulation and Variants}
\label{sec:controller}
This section presents the certified adversarial controller as a single QP, along with two soft variants that trade formal guarantees for feasibility. 

\subsection{Base AR-DPCBF Quadratic Program}
\label{subsec:base}

For each obstacle $j$, let $\kappa_j$ denote a conservative upper bound on its maneuvering capability. Using $\kappa_j$, the contracted DPCBF parameters $(\lambda_j^*,\mu_j^*)$ are computed according to Theorem~\ref{thm:params}, yielding the adversarial barrier, $h_j^*(\boldsymbol{x}) = \tilde v_{{\rm rel},x}^{\,j} + \lambda_j^*
\big(\tilde v_{{\rm rel},y}^{\,j}\big)^2 + \mu_j^*$. The control input is obtained by,
\begin{align}
\boldsymbol{u}^*
&=
\arg\min_{\boldsymbol{u}\in\mathcal U}
\;
\|\boldsymbol{u}-\boldsymbol{u}_{\rm ref}\|^2
\label{eq:base_qp}\\
\text{s.t.}\quad
&
\boldsymbol{C}_j'(\boldsymbol{x})^{\!\top}\boldsymbol{u}
\ge
\kappa_j\Lambda_j(\boldsymbol{x})
+
\varepsilon_\rho
-
D_j^*(\boldsymbol{x})
-
\psi\!\left(h_j^*(\boldsymbol{x})\right),
\nonumber\\
&
\hfill j=1,\ldots,N_{\rm obs},
\nonumber
\end{align}
where $\Lambda_j(\boldsymbol{x})$ and $D_j^*(\boldsymbol{x})$ are given by \eqref{eq:Lambda} and \eqref{eq:Dstar}, respectively, and
$\varepsilon_\rho:=\varepsilon$ is the residual bound
\eqref{eq:eps}. The resulting optimization is a convex quadratic program with two decision variables and $N_{\rm obs}$ linear constraints, referred to as the \emph{Hard AR-DPCBF}.

When the obstacle capability is unknown,
$\kappa_j$ is replaced by the online estimate
$\tilde{\kappa}_j=
\tilde a_{\mathrm{obs,max}}^{\,j} +
\tilde v_{\mathrm{obs,max}}^{\,j}
\tilde\omega_{\mathrm{obs,max}}^{\,j}$,
computed by the estimator of Section~\ref{sec:estimator}. By
Proposition~\ref{prop:robust}, this substitution preserves the safety guarantee with probability at least $1-\delta$.

Algorithm~\ref{alg:ardpcbf} summarizes the resulting control cycle,
which is common to all three AR-DPCBF variants and differs only in the
optimization problem solved at the final step.

\begin{algorithm}[t]
\caption{AR-DPCBF control cycle}
\label{alg:ardpcbf}
\begin{algorithmic}[1]
\REQUIRE Robot state $\boldsymbol{x}$, obstacle observations
$\{(\boldsymbol{p}_{\mathrm{obs}}^j,v_{\mathrm{obs}}^j,
\theta_{\mathrm{obs}}^j)\}_{j=1}^{N_{\mathrm{obs}}}$,
reference $\boldsymbol{u}_{\mathrm{ref}}$, variant
$V\in\{\textsc{Hard},\textsc{Soft},\textsc{Buffer}\}$

\ENSURE Control $\boldsymbol{u}^*=(a^*,\beta^*)$

\FOR{$j=1,\ldots,N_{\mathrm{obs}}$}
    \STATE Update the capability estimate $\tilde{\kappa}_j$
    using the sliding window and confidence margins
    \STATE Compute the LoS relative state and contracted parameters
    $(\lambda_j^*,\mu_j^*)$
    \STATE Construct the contracted barrier $h_j^*$ 
\ENDFOR

\IF{$V=\textsc{Hard}$}
    \STATE $\boldsymbol{u}^*\leftarrow$ solve~\eqref{eq:base_qp}
    \COMMENT{adversarial constraint enforced}
\ELSIF{$V=\textsc{Soft}$}
    \STATE $\boldsymbol{u}^*\leftarrow$ solve~\eqref{eq:soft_qp}
    \COMMENT{adversarial residual penalized}
\ELSE
    \STATE $\boldsymbol{u}^*\leftarrow$ solve~\eqref{eq:buffer_qp}
    \COMMENT{buffered penalty, width $\varepsilon_b$}
\ENDIF

\RETURN $\boldsymbol{u}^*$
\end{algorithmic}
\end{algorithm}

\begin{remark}
\label{rem:complexity}
Lines~3-4 of Algorithm~\ref{alg:ardpcbf} are $O(1)$ arithmetic per obstacle.
The sliding-window update (line~2) is $O(N_w)$ per obstacle, giving
$O(N_{\mathrm{obs}}N_w)$ for the estimator. The QP (lines~6-12) has
$N_{\mathrm{obs}}$ linear constraints in two decision variables and is solvable
in $O(N_{\mathrm{obs}})$ with warm-starting. The total per-cycle cost is
therefore $O(N_{\mathrm{obs}}N_w)$, identical in order to the DPCBF controller.
\end{remark}

\subsection{Conservatism of Hard AR-DPCBF}
\label{subsec:hard_conservatism}

Since $\lambda_j^*\le\lambda_j$ and $\mu_j^*\le\mu_j$, with the contraction increasing monotonically with $\kappa_j$ (Theorem~\ref{thm:params}), the adversarial safe set
$\{h_j^*\ge0\}$ is a strict subset of the nominal DPCBF safe set whenever $\kappa_j>0$. Consequently, the robust constraint in \eqref{eq:base_qp} is more restrictive than its DPCBF counterpart. Let $\mathcal{K}_{\mathrm{cbf},j}^*(\boldsymbol{x})$ denote the corresponding admissible-input set for obstacle $j$. In dense multi-obstacle environments, the intersection $\bigcap_j\mathcal{K}_{\mathrm{cbf},j}^*(\boldsymbol{x})$ may become empty, rendering \eqref{eq:base_qp} infeasible. In contrast, the nominal DPCBF remains feasible by construction because it neglects adversarial obstacle maneuvers. This conservatism motivates the soft AR-DPCBF formulations presented next, which preserve adversarial awareness while recovering feasibility whenever the hard formulation becomes infeasible.

\subsection{Soft AR-DPCBF}
\label{subsec:soft}

Since hard AR-DPCBF is likely to become infeasible, we therefore retain the nominal DPCBF constraint as a hard safety constraint and relax only the adversarial condition by penalizing its residual in the objective. For each obstacle, define the robust residual
\begin{equation}
\label{eq:robust_residual}
g_j(\boldsymbol{x},\boldsymbol{u})
{:=}
\boldsymbol{C}_j'(\boldsymbol{x})^{\!\top}\boldsymbol{u}
{+}
D_j^*(\boldsymbol{x})
{-}
\kappa_j\Lambda_j(\boldsymbol{x})
{-}
\varepsilon_\rho
{+}
\psi\!\left(h_j^*(\boldsymbol{x})\right),
\end{equation}
which is affine in $\boldsymbol{u}$ and satisfies
$g_j(\boldsymbol{x},\boldsymbol{u})\ge0$
if and only if the adversarial CBF constraint in
\eqref{eq:base_qp} holds.

\begin{definition}[Soft AR-DPCBF]
\label{def:soft}
The Soft AR-DPCBF controller solves
\begin{eqnarray}
\label{eq:soft_qp}
\boldsymbol{u}^* = \arg\min_{\boldsymbol{u}\in\mathcal U}
\;
\|\boldsymbol{u}-\boldsymbol{u}_{\rm ref}\|^2
+ w \sum_{j=1}^{N_{\rm obs}}
\max\!\left(0,-g_j(\boldsymbol{x}, \boldsymbol{u})\right)^2 \nonumber
\\
\text{s.t.}\;
L_fh_j(\boldsymbol{x})
+
L_gh_j(\boldsymbol{x})\,\boldsymbol{u}
\ge
-\psi\!\left(h_j(\boldsymbol{x})\right),
\quad
\forall j,
\end{eqnarray}
where $h_j$ is the nominal DPCBF barrier and $w>0$ is the penalty weight.
\end{definition}

\begin{remark}
\label{rem:soft_zero_grad}
Soft AR-DPCBF does not satisfy the A-CBF condition of
Definition~\ref{def:acbf}, since violations of the adversarial constraint are permitted. Moreover, the penalty is inactive whenever $g_j(\boldsymbol{x}, \boldsymbol{u})>0$, providing no incentive to
increase the adversarial margin before the constraint becomes active. This motivates the proactive formulation introduced next.
\end{remark}

\subsection{Buffer Soft AR-DPCBF}
\label{subsec:buffer}

To enable proactive control, we replace the one-sided quadratic penalty with a buffered Huber penalty that becomes active before the certified adversarial margin is exhausted.

\begin{definition}[Buffer Soft AR-DPCBF]
\label{def:buffer}

For a buffer width $\varepsilon_b>0$, define
\begin{equation}
\label{eq:huber}
\phi(s;\varepsilon_b)=
\begin{cases}
0, & s>\varepsilon_b,\\[2pt]
{(\varepsilon_b-s)^2}/({2\varepsilon_b}),
&0<s\le\varepsilon_b,\\[8pt]
-s+({\varepsilon_b}/{2}),
&s\le0.
\end{cases}
\end{equation}
The function $\phi(\cdot;\varepsilon_b)$ is continuously differentiable on $\mathbb{R}$. The Buffer Soft AR-DPCBF controller solves
\begin{align}
\boldsymbol{u}^*
=
\arg\min_{\boldsymbol{u}\in\mathcal U}
\;&
\|\boldsymbol{u}-\boldsymbol{u}_{\rm ref}\|^2
+ w \sum_{j=1}^{N_{\rm obs}}
\phi\!\left(g_j(\boldsymbol{x},\boldsymbol{u});\varepsilon_b\right)
\label{eq:buffer_qp}
\\
\text{s.t.}\quad
&
L_fh_j(\boldsymbol{x})
+
L_gh_j(\boldsymbol{x})\,\boldsymbol{u}
\ge
-\psi\!\left(h_j(\boldsymbol{x})\right),
\quad
\forall j,
\nonumber
\end{align}
where $g_j$ is the robust residual \eqref{eq:robust_residual}, $h_j$ is the nominal DPCBF barrier, $w>0$ is the penalty weight, and $\varepsilon_b>0$ is the buffer width.
\end{definition}

\begin{remark}
\label{rem:buffer}
The buffered penalty has three operating regions. For $g_j>\varepsilon_b$, the penalty is inactive, and the controller coincides with the nominal DPCBF. Within the buffer,
$0<g_j\le\varepsilon_b$, the quadratic branch provides a nonzero steering signal before the adversarial condition is violated, encouraging proactive recovery of the certified margin. For $g_j\le0$, the linear branch limits the restoring effort, improving numerical conditioning while continuing to penalize violations. As $\varepsilon_b\rightarrow0$, \eqref{eq:buffer_qp} reduces to the Soft AR-DPCBF formulation.
\end{remark}

\begin{proposition}
\label{prop:soft_buffer_feasible}
The feasible sets of both the Soft~\eqref{eq:soft_qp} and Buffer AR-DPCBF
\eqref{eq:buffer_qp} coincide with that of the nominal DPCBF controller. Consequently, both formulations are feasible whenever the nominal DPCBF QP is feasible.
\end{proposition}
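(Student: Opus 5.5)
The argument is a direct comparison of constraint sets; all the adversarial content has been moved into the objective. Fix the state $\boldsymbol{x}$ and write
\[
\mathcal{K}_{\mathrm{nom}}(\boldsymbol{x}) := \bigl\{\boldsymbol{u}\in\U \;:\; L_fh_j(\boldsymbol{x})+L_gh_j(\boldsymbol{x})\boldsymbol{u}\ge-\psi(h_j(\boldsymbol{x})),\ j=1,\ldots,N_{\rm obs}\bigr\}
\]
for the feasible set of the nominal DPCBF QP, which is \eqref{eq:cbf_qp} imposed jointly for all obstacles.

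\textbf{Step 1: the constraint sets are identical.} The constraints of \eqref{eq:soft_qp} and \eqref{eq:buffer_qp} consist of $\boldsymbol{u}\in\U$ together with exactly these nominal DPCBF inequalities, for the same $h_j$, the same $\psi$ and the same index set. The robust residual $g_j$ of \eqref{eq:robust_residual} appears only inside the penalty terms $w\max(0,-g_j)^2$ and $w\,\phi(g_j;\varepsilon_b)$. Penalties in the objective impose no restriction on admissible $\boldsymbol{u}$. Reading off the two programs therefore shows that both have feasible set exactly $\mathcal{K}_{\mathrm{nom}}(\boldsymbol{x})$.

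\textbf{Step 2: feasibility implies a minimizer exists.} Suppose the nominal QP is feasible, so $\mathcal{K}_{\mathrm{nom}}(\boldsymbol{x})\neq\emptyset$.

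\begin{enumerate}
\item[(a)] The set $\mathcal{K}_{\mathrm{nom}}(\boldsymbol{x})$ is closed and convex, because it is an intersection of half-spaces, each affine in $\boldsymbol{u}$, with the box $\U$. It is also bounded, since $\U$ is bounded.
\item[(b)] Each $g_j$ is affine in $\boldsymbol{u}$. The maps $s\mapsto\max(0,-s)^2$ and $s\mapsto\phi(s;\varepsilon_b)$ are convex and continuous on $\mathbb{R}$: $\phi$ is $C^1$ with a nondecreasing derivative. Composing them with affine maps keeps them convex and continuous.
\item[(c)] The term $\|\boldsymbol{u}-\boldsymbol{u}_{\rm ref}\|^2$ is strictly convex. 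Adding $w>0$ times the convex penalties keeps the total objective strictly convex.
\end{enumerate}

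By Weierstrass, a continuous function on a nonempty compact set attains its minimum. Strict convexity makes that minimizer unique. So both programs are well posed, with $\boldsymbol{u}^*$ in \eqref{eq:soft_qp} and \eqref{eq:buffer_qp} well defined.

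\textbf{Main obstacle.} The only delicate point is that the programs are ``QPs'' only loosely. The Buffer variant has a piecewise objective, and feasibility alone does not guarantee a solution in general. Compactness of $\U$ together with the convexity of $\phi$ closes this gap.

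The convexity of $\phi$ is checked piece by piece:
\begin{enumerate}
\item[(i)] On $s>\varepsilon_b$, $\phi'(s)=0$.
\item[(ii)] On $0<s\le\varepsilon_b$, $\phi'(s)=-(\varepsilon_b-s)/\varepsilon_b$, which rises from $-1$ to $0$.
\item[(iii)] On $s\le0$, $\phi'(s)=-1$.
\end{enumerate}
The derivative $\phi'$ is continuous and nondecreasing, so $\phi$ is convex.
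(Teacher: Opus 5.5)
Your Step~1 is exactly the paper's proof: the robust residual $g_j$ enters only the objectives of \eqref{eq:soft_qp} and \eqref{eq:buffer_qp}, so both programs have the nominal DPCBF constraint set, which is all the proposition claims. Step~2 is correct but goes beyond the statement. It proves existence and uniqueness of $\boldsymbol{u}^*$ via compactness of $\mathcal{U}$, convexity of $\phi$ and of $\max(0,-s)^2$, and strict convexity of the objective, and the paper does not address this well-posedness.
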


\begin{proof}
In both formulations, the adversarial robustness term appears only in the objective, while the constraints are identical to those of the nominal DPCBF controller. Hence the feasible set is unchanged.
\end{proof}






\begin{figure*}[t]
  \centering
  \includegraphics[width=0.95\textwidth]{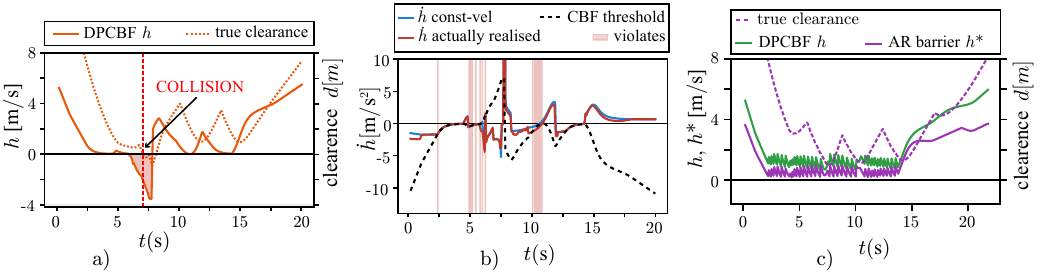}
 \caption{Illustration of the silent-failure mechanism under maneuvering obstacles. 
(a)~The nominal DPCBF certificate is violated, resulting in collision.
(b)~Silent failure: although the QP satisfies the nominal CBF constraint based on the assumed obstacle dynamics, the actual barrier derivative $\dot h$ falls below $-\psi(h)$ due to the unmodeled obstacle maneuver; shaded regions indicate time intervals during which the nominal CBF condition is satisfied while the corresponding true condition is violated. 
(c)~Under the same scenario, Buffer AR-DPCBF maintains both $h$ and $h^*$ positive throughout the trajectory.}
  \label{fig:silent}
\end{figure*}

\begin{table}[t]
\caption{Simulation parameters inherited
from~\cite[Table~I]{park2026dpcbf}. Bold entries are specific to the adversarial extension.}
\label{tab:params}
\centering
\footnotesize
\begin{tabularx}{\columnwidth}{l c X}
\toprule
Parameter & Value & Source / Reason \\
\midrule
$\ell_r$ & $0.20$ m & Rear-axle length \\
$a_{\max}$ & $5.0$ m\,s$^{-2}$ & Longitudinal acceleration limit \\
$\beta_{\max}$ & $0.28$ rad & Slip-angle limit \\
$v_{\max}$ & $3.5$ m\,s$^{-1}$ & Maximum speed \\
$v_{\mathrm{des}}$ & $2.5$ m\,s$^{-1}$ & Cruise reference speed \\
$r$ & $1.0$ m & Combined safety radius \\
$R_{\mathrm{sense}}$ & $15$ m & Sensing radius \\
$k_\lambda$ & $0.144$ & DPCBF gain \\
$k_\mu$ & $0.505$ & DPCBF gain \\
\midrule
$\boldsymbol{v_{\min}}$ & $\boldsymbol{1.0}$ m\,s$^{-1}$ &
Positive speed floor required for steering authority
(Assumption~\ref{as:rob}); gives $c_{\min}=1.40$ \\
$\boldsymbol{\gamma}$ & $\boldsymbol{3.0}$ & Pre-emption gain \\
$\boldsymbol{w}$ & $\boldsymbol{10.0}$ & Penalty weight for soft variants \\
$\boldsymbol{\varepsilon_b}$ & $\boldsymbol{0.3}$ & Buffer width (Buffer AR-DPCBF) \\
\bottomrule
\end{tabularx}
\end{table}

\section{Results}
\label{sec:results}

We evaluate the proposed AR-DPCBF framework through a series of simulation studies designed to validate the theoretical guarantees, characterize its robustness to adversarial obstacle maneuvers, and assess its practical performance.

\subsection{Simulation Setup}
\label{sec:setup}

\paragraph{Platform and scenario}
The robot is the kinematic bicycle~\eqref{eq:bicycle} with the parameters of
Table~\ref{tab:params}. The adversarial extension requires a positive speed
floor because steering authority scales as $v^2/\ell_r$ and vanishes at rest. We
set $v_{\min}=1.0$\,m\,s$^{-1}$, giving
$c_{\min}=\min\{a_{\max},v_{\min}^2\beta_{\max}/\ell_r\}=1.40$. The robot
navigates from $(2,12)$ to a goal at $x=50$\,m through $N$ obstacles in a
$62\times24$\,m arena, integrating at $\Delta t=0.06$\,s with RK4 over a $40$\,s
horizon. The representative operating point is $N=10$, $\nadv=5$,
$\kappa=0.98$\,m\,s$^{-2}$ (split as $a_{\mathrm{obs,max}}=0.5$,
$\omega_{\mathrm{obs,max}}=0.4$, $v_{\mathrm{obs,nom}}=1.2$), comfortably inside
the validity envelope $\kappa<c_{\min}$.

\paragraph{Obstacle behavior model}
All obstacles are unicycle agents with radii sampled from
$\mathcal U(0.1,0.7)$\,m. At each control step the $\nadv$ obstacles \emph{currently} closest to the robot are designated adversarial, assigned capability $\kappa=\Kmax$, and controlled to minimize the instantaneous barrier rate. The designation is recomputed every cycle, so obstacles switch roles as the
robot advances. The remaining $(N-\nadv)$ obstacles execute smooth motions with bounded accelerations and turn rates drawn from the same capability set
$\mathcal F$. Every obstacle therefore violates DPCBF's constant-velocity premise, while only the nearest $\nadv$ actively pursue the worst-case interaction.

\paragraph{Baselines}
Besides DPCBF and the three AR variants (Hard, Soft, Buffer), we compare against a robust-CBF baseline, \mbox{R-DPCBF}, which receives the identical capability set $\mathcal F$ but applies it pointwise to the barrier \emph{rate} instead of contracting the barrier \emph{parameters}. Its construction is given in Appendix~\ref{app:rdpcbf}.

\paragraph{Metrics}
\emph{Barrier violation}, the event $\min_t h(\boldsymbol{x}(t))<0$, is the primary metric. It is exactly the silent-failure event of Remark~\ref{rem:gap} and is detectable
even when no collision occurs. \emph{Collision} is the geometric event $d(t)<0$, and \emph{QP infeasibility} the per-cycle rate at which a controller's safety program admits no input, normalized by executed steps.  The adversary-count and capability sweeps use
$n_{seed}=100$ seeds per condition, the remaining sweeps $n_{seed}=25$-$50$ as stated per figure. Uncertainty is $95\%$ bootstrap CIs ($4000$ resamples). All controllers see identical seeds and scenarios, so paired comparisons use the exact two-sided McNemar test on discordant pairs.

\subsection{Breakdown of the Nominal Safety Certificate}
\label{sec:core}

\paragraph{Anatomy of a silent failure}
Figure~\ref{fig:silent} isolates the failure mechanism on a deterministic scenario ($N=10$, $\nadv=5$). DPCBF loses the safety certificate ($\min_t h=-3.53$) and collides (Fig.~\ref{fig:silent}(a)). The failure is \emph{silent}: although the QP satisfies its nominal CBF constraint on $92\%$ of control cycles, the true barrier rate falls below $-\psi(h)$ on $6.9\%$ of them because of the uncompensated disturbance $\Delta(t)$ (Fig.~\ref{fig:silent}(b)). Thus, the controller continues enforcing the nominal safety condition even though the true certificate has already been violated, rather than becoming infeasible. On the identical scenario, Buffer AR-DPCBF maintains both barriers positive throughout ($\min_t h=0.64$, $\min_t h^*=0.18$) and preserves a minimum clearance of $0.93$ m (Fig.~\ref{fig:silent}(c)).

\begin{figure*}[t]
  \centering
  \includegraphics[width=\textwidth]{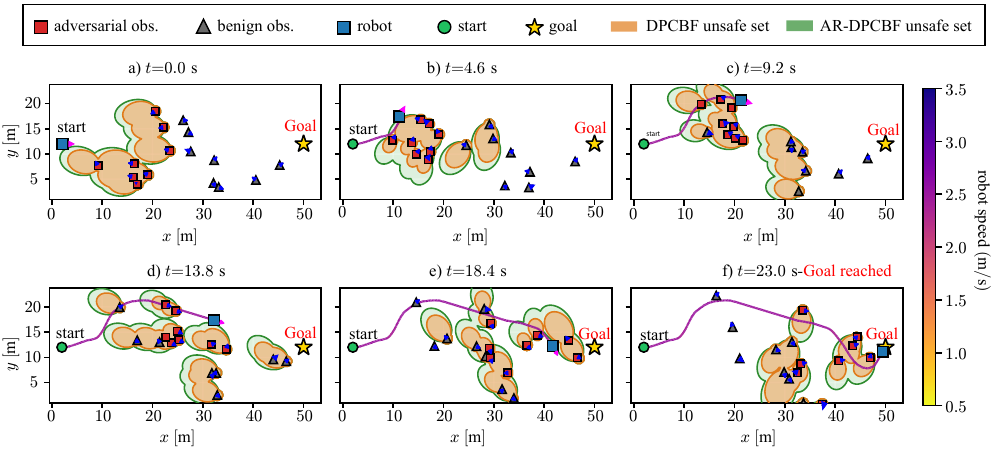}
  \caption{Buffer Soft AR-DPCBF navigation timelapse ($\min_t h^*=+0.21$). The unsafe sets are drawn with their exact $\{h=0\}$   (orange) and $\{h^*=0\}$ (green) boundaries; the green set is the uniformly  wider adversarial widening. The proactive buffer keeps the robot outside the contracted set throughout.}
  \label{fig:tl_buf}
\end{figure*}

\paragraph{Qualitative navigation}
Figure~\ref{fig:tl_buf} shows Buffer AR-DPCBF traversing a dense field with the
exact $\{h=0\}$ and $\{h^*=0\}$ boundaries drawn. The contracted set is the
uniformly wider adversarial widening of Theorem~\ref{thm:params}, and the
proactive buffer keeps the robot outside it for the entire run ($\min_t h^*=+0.21$). Soft AR-DPCBF on a comparable scenario also reaches the goal with $h>0$ throughout, but briefly enters the contracted set
($\min_t h^*=-0.08$) because its penalty activates only once $h^*<0$. The two variants, therefore, differ in \emph{when} they respond.

\paragraph{Adversarial fraction}
Figure~\ref{fig:adv} sweeps $\nadv\in\{0,\dots,15\}$ at fixed obstacle count $N=15$, isolating the effect of adversarial intent because every obstacle
maneuvers. Even at $\nadv=0$, DPCBF violates its certificate in $54\%$ of trials despite only $10\%$ collisions, exposing the silent-failure mode caused
by unmodeled obstacle maneuvers. As $\nadv$ increases, DPCBF violations rapidly saturate while collisions rise steadily, showing that the nominal certificate degrades under adversarial behavior. In contrast, all AR-DPCBF variants consistently reduce both metrics, preserving the ordering $\mathrm{Buffer}<\mathrm{Soft}<\mathrm{Hard}<\mathrm{DPCBF}$ across the entire sweep. Hard AR-DPCBF closely matches the pointwise-robustified baseline
R-DPCBF, demonstrating that robustifying the barrier derivative alone cannot replace contraction of the barrier geometry. Although enforcing the contracted barrier exactly reduces feasibility at high adversarial fractions, the soft variants recover near-nominal feasibility while retaining most of the robustness benefit. Goal attainment remains above $99\%$ for all methods, confirming that improved safety is achieved without sacrificing task completion.

\begin{figure*}[t]
  \centering
  \includegraphics[width=0.9\textwidth]{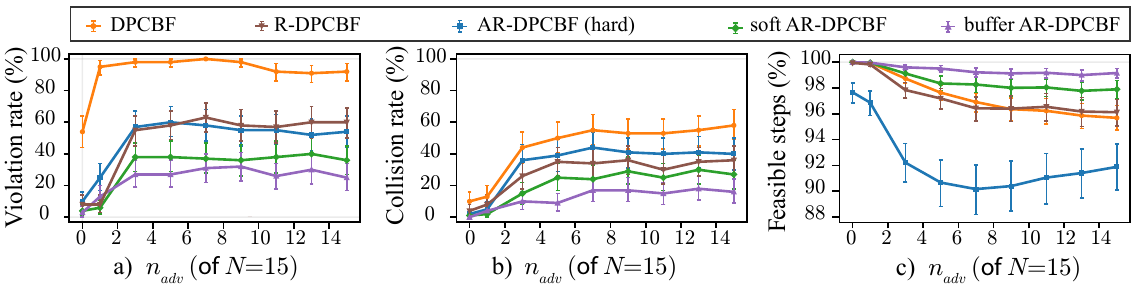}
  \caption{Safety versus adversarial fraction at high density ($N=15$,
  $\kappa=0.98$, $n=100$, $95\%$ bootstrap CIs). (a)~barrier violation, the
  silent-failure metric; (b)~collision, its downstream consequence; (c)~QP
  feasibility. The gap between (a) and (b) for DPCBF is the signature of the
  silent failure: the certificate is lost far more often than contact occurs.}
  \label{fig:adv}
\end{figure*}

\paragraph{Statistical significance}
A paired evaluation over the six densest scenarios ($N=15$,
$\nadv\in\{5,7,9,11,13,15\}$; $600$ paired trials) confirms that the safety
improvements observed in Fig.~\ref{fig:adv} are systematic rather than scenario-dependent. Relative to DPCBF, every AR variant reduces both collisions and barrier violations, with Buffer achieving the largest reduction, followed by Soft, Hard, and R-DPCBF. Exact two-sided McNemar tests show that all pairwise differences are statistically significant ($p<10^{-3}$), except Hard versus R-DPCBF on barrier violations ($p=0.041$), reflecting their similar performance. More importantly, the comparisons against DPCBF satisfies an even stronger property: across all $600$ paired trials, no AR variant violates the barrier on a seed where DPCBF remains safe. Thus, the improvement is not merely statistical but a strict containment relation.

\subsection{Robustness Across Adversary Capability}
\label{sec:capability}
We next evaluate performance under changes in the adversary's maneuvering capability. Specifically, we examine robustness under increasing obstacle capability, sensitivity to capability misestimation, and the tightness of the theoretical feasibility condition, thereby assessing both the practical robustness and the validity of the proposed analysis.

\paragraph{Robustness to Increasing Obstacle Capability}
Figure~\ref{fig:cap} fixes $n_{\mathrm{adv}}=5$ and sweeps the obstacle capability $\kappa\in[0,c_{\min}]$ ($N=10$, $n_{\mathrm{seed}}=100$). At $\kappa=0$, the adversarial contraction vanishes ($h^*\equiv h$), and DPCBF, Hard AR-DPCBF, and R-DPCBF produce identical outcomes on every seed, exactly as predicted by
Theorem~\ref{thm:params}(ii). As $\kappa$ increases, DPCBF rapidly loses both its safety certificate and collision avoidance performance. Hard AR-DPCBF and R-DPCBF substantially reduce barrier violations and
collisions relative to DPCBF, and exhibit comparable robustness, consistent with both controllers accounting for the same capability set $\mathcal F$. The soft AR-DPCBF variants consistently outperform both, yielding the ordering $\mathrm{Buffer}<\mathrm{Soft}<\{\mathrm{Hard}\approx
\mathrm{R\text{-}DPCBF}\}<\mathrm{DPCBF}$ throughout the sweep. 
The primary distinction is feasibility (Fig.~\ref{fig:cap}c). R-DPCBF retains near-nominal feasible-step rates comparable to DPCBF and the
soft variants by reverting to the nominal DPCBF program whenever its pointwise robust constraint becomes infeasible. In contrast, Hard AR-DPCBF progressively loses feasible QP solutions as $\kappa$ increases because it enforces the contracted barrier as a hard constraint. Consequently, while both Hard AR-DPCBF and R-DPCBF achieve similar robustness, the soft variants combine near-nominal feasibility with robustness gains beyond those attainable by R-DPCBF's pointwise treatment of the adversary.

\begin{figure*}[t]
  \centering
  \includegraphics[width=0.95\textwidth]{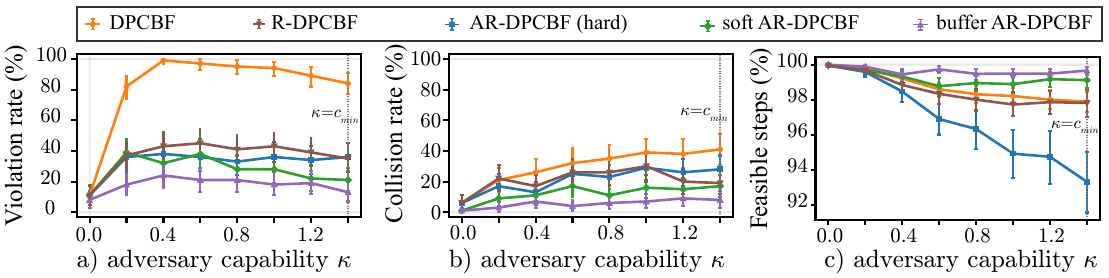}
  \caption{Safety versus adversary capability $\kappa$ ($N=10$, $\nadv=5$,  $n_{seed}=100$, $95\%$ bootstrap CIs). (a)~violation, (b)~collision, 
  (c)~QP  feasibility. DPCBF, Hard AR and R-DPCBF coincide exactly at $\kappa=0$  (Theorem~\ref{thm:params}(ii)). The dotted line is the validity limit
  $\kappa=c_{\min}=1.40$.}
  \label{fig:cap}
\end{figure*}

\paragraph{Robustness to Capability Misestimation}
Figure~\ref{fig:misspec} fixes the capability assumed by the controller at $\kappa_{\rm asm}=0.98$ and varies the obstacle's true capability, $\kappa_{\rm true}\in[0.3,2.0]$ ($n_{\mathrm{seed}}=25$), thereby isolating the effect of capability misestimation. When $\kappa_{\rm true}<\kappa_{\rm asm}$, the controller is conservative, but its safety performance is essentially unchanged. As $\kappa_{\rm true}>\kappa_{\rm asm}$, the obstacle becomes increasingly more capable than assumed, yet Buffer AR-DPCBF degrades gradually rather than exhibiting a failure threshold, substantially outperforming DPCBF even when the true capability exceeds the assumed value by more than a factor of two. Soft AR-DPCBF follows the same trend but loses robustness more rapidly. Thus, capability underestimation results in graceful degradation rather than abrupt loss of safety.

\begin{figure}[t]
  \centering
  \includegraphics[width=\columnwidth]{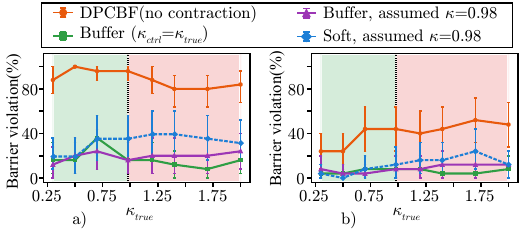}
  \caption{Robustness to capability mismatch ($n_{seed}=25$). (a)~violation,
  (b)~collision versus true capability $\kappa_{\mathrm{true}}$, with the  controller assuming $\kappa_{\mathrm{asm}}=0.98$\,m\,s$^{-2}$ except for the
  matched Buffer baseline $\kappa_{\mathrm{asm}} =\kappa_{\mathrm{true}}$).
  Green: over-provisioned; red: under-provisioned.}
  \label{fig:misspec}
\end{figure}

\paragraph{The feasibility condition is sound but conservative}
Theorem~\ref{thm:valid} guarantees maintainability of the adversarial certificate whenever $c_{\min}>\kappa$. To assess the tightness of this condition, we evaluate the worst-case Nagumo condition $\sup_{\boldsymbol{u}\in U}\inf_{\mathcal F}\dot h^*\ge0$ at sampled boundary states ($h^*=0$) with a single matched adversary, thereby isolating certificate maintainability from multi-obstacle interactions and closed-loop effects.
Figure~\ref{fig:mcnemar_bound}(b) shows that every sampled state within the certified region is maintainable, with no counterexamples, confirming the soundness of Theorem~\ref{thm:valid}. However, the empirical maintainability boundary lies well beyond the theoretical one: the transition, where roughly half of the sampled boundary states become unmaintainable, occurs near $c_{\min}\approx0.4$-$0.9$ even for $\kappa$ up to $2.5$. The conservatism, therefore, arises from the worst-case bounds $(\Lambda_{\min}, D_{\min})$ and the $1/\gamma$ barrier contraction rather than from the underlying geometry. Together with Fig.~\ref{fig:misspec}, these results show that the certificate remains maintainable well beyond its guaranteed region, while the closed-loop controller degrades gracefully once the guarantee no longer applies.

\begin{figure}[t]
  \centering
  \includegraphics[width=\columnwidth]{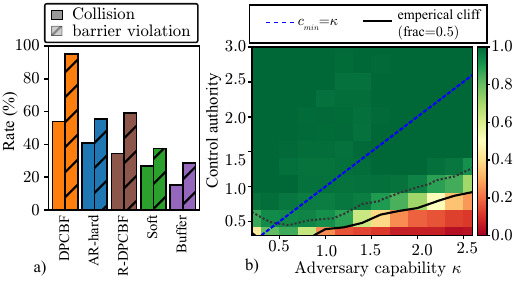}
\caption{(a)~Pooled collision (solid) and barrier-violation (hatched) rates across six dense conditions ($N=15$, $600$ paired trials). (b)~Empirical validation of Theorem~\ref{thm:valid}: colour indicates the fraction of boundary states satisfying the worst-case Nagumo condition. The region above the dashed line, $c_{\min}=\kappa$, is certified as uniformly maintainable, while the empirical feasibility cliff occurs beyond this threshold.}
  \label{fig:mcnemar_bound}
\end{figure}

\subsection{Mechanistic and Practical Validation}
\label{sec:rcbf}
This section examines the practical cost of its added robustness and its extension to unknown obstacle capabilities via online estimation.

\paragraph{Necessity of Barrier Contraction}
R-DPCBF isolates the role of barrier contraction by using the same capability set $\mathcal{F} $ as AR-DPCBF while applying robustness only to the instantaneous barrier derivative. Across all experiments, it consistently outperforms DPCBF but remains well below the soft AR-DPCBF variants, demonstrating that pointwise robustification alone cannot recover the robustness achieved by contracting the barrier geometry. The distinction is structural rather than numerical: pointwise worst-case guarantees $\dot h\ge0$ only at the current state, whereas barrier contraction reserves safety margin over the entire engagement, preventing the system from entering states in which no admissible control can maintain the certificate. This is precisely the mechanism established by Theorem~\ref{thm:params}.

R-DPCBF nevertheless illustrates an important trade-off. Because it leaves the nominal barrier unchanged, it preserves a larger feasible control set and achieves lower computational overhead than Hard AR-DPCBF. However, this efficiency comes at the expense of robustness: its collision performance plateaus as obstacle aggressiveness increases, whereas the contracted soft barriers continue to improve safety. Thus, the limitation of pointwise robustification is not computational efficiency but the absence of an anticipatory margin, highlighting barrier contraction as the essential ingredient for adversarial robustness.

\paragraph{The Price of Robustness}
\label{sec:price}

Figure~\ref{fig:frontier} illustrates the trade-off between safety and
efficiency by plotting collision rate against path overhead for $\nadv=5$ and $\nadv=15$. As expected, increased robustness incurs additional path length because larger safety margins require more conservative trajectories. However, the increase in overhead is modest relative to the reduction in collisions, with Buffer providing the best safety-efficiency trade-off. Increasing the adversarial fraction shifts the frontier primarily toward higher path overhead while leaving collision rates largely unchanged, indicating that denser adversarial environments increase the cost of safe navigation more than they reduce controller effectiveness. Goal attainment remains essentially unchanged across all controllers.

\begin{figure}[t]
  \centering
  \includegraphics[width=\columnwidth]{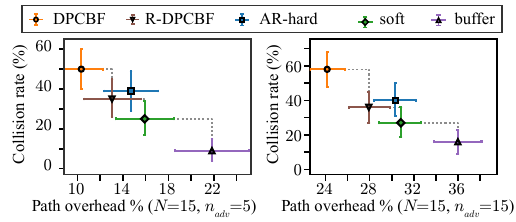}
  \caption{Safety-performance frontier at $N=15$ for $\nadv=5$ (left) and  $\nadv=15$ (right), $n_{seed}=100$. Path overhead is computed on the paired subset of seeds on which every controller is collision-free.}
  \label{fig:frontier}
\end{figure}

\paragraph{Sliding-Window Capability Estimation}
\label{sec:estimated}

The preceding experiments assume a known obstacle capability. Figure~\ref{fig:closedloop} instead estimates $\kappa$ online with the
sliding-window estimator of Section~\ref{subsec:estimator}, supplying the estimate to the controller each cycle and comparing against an oracle using the true value. Across all noise levels, the estimated-capability controller keeps violations and collisions below $17\%$ and $7\%$, respectively, substantially outperforming DPCBF ($93\%$, $43\%$) and matching or exceeding the oracle. The improvement arises because the estimator upper-bounds the true capability with empirical coverage $P(\tilde\kappa\ge\kappa)=0.95$-$0.99$, adding conservatism under measurement uncertainty; as noise decreases, the margin vanishes and the estimator approaches the oracle, consistent with Proposition~\ref{prop:robust}. The estimator is, therefore, a practical mechanism for deploying AR-DPCBF when obstacle capabilities are unknown.

\begin{figure}[t]
  \centering
  \includegraphics[width=\columnwidth]{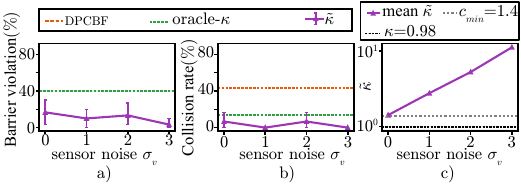}
  \caption{Closed-loop safety with the online-estimated capability
  (Section~\ref{subsec:estimator}, Proposition~\ref{prop:robust}; Soft  AR-DPCBF, $n_{seed}=30$). (a)~violation and (b)~collision versus sensor noise, with  DPCBF and oracle-$\kappa$ as references; (c)~the estimate $\tilde\kappa$  over-covers the true $\kappa$ with coverage  $P[\tilde\kappa\ge\kappa]\ge0.95$.}
  \label{fig:closedloop}
\end{figure}

\section{Conclusion}
\label{sec:conc}

We identified a structural limitation of CBF-based dynamic obstacle avoidance: under the common constant-velocity assumption, the safety certificate can remain satisfied even as maneuvering obstacles violate it. To address this, we proposed AR-DPCBF, which models obstacles as bounded-authority adversaries and reserves a certified safety margin through a state-dependent contraction of the DPCBF certificate. Exploiting the structure of the LoS dynamics yields a closed-form adversarial barrier, explicit feasibility conditions relating robot and obstacle capabilities, and practical controller formulations that recover
feasibility in dense environments and accommodate unknown obstacles
capabilities through online estimation. Extensive simulations validate the proposed framework across varying adversarial fractions, obstacle capabilities, and capability mismatch, demonstrating consistent elimination of the silent-failure mode of nominal DPCBF. Comparisons with a pointwise robust baseline further establish that contracting the barrier geometry, rather than merely robustifying its instantaneous derivative, is the essential mechanism for anticipatory safety against maneuvering obstacles.

Future work includes deriving controllers that jointly guarantee safety and feasibility, extending the framework to richer vehicle and obstacle dynamics.


\bibliographystyle{ieeetr}
\bibliography{citation.bib}

@inproceedings{park2026dpcbf, 
	  author    = {Park, Hun Kuk and Kim, Taekyung and Panagou, Dimitra},
	  title     = {Beyond Collision Cones: Dynamic Obstacle Avoidance for Nonholonomic Robots via Dynamic Parabolic Control Barrier Functions},
    booktitle = {IEEE International Conference on Robotics and Automation (ICRA)},
    shorttitle = {DPCBF},
    year      = {2026}
}

@article{ames2017cbf,
  author  = {Ames, Aaron D. and Xu, Xiangru and Grizzle, Jessy W. and Tabuada, Paulo},
  title   = {Control Barrier Function Based Quadratic Programs for Safety Critical Systems},
  journal = {IEEE Transactions on Automatic Control},
  volume  = {62}, number = {8}, pages = {3861--3876}, year = {2017},
  doi     = {10.1109/TAC.2016.2638961}
}

@inproceedings{xiao2019hocbf,
  author    = {Xiao, Wei and Belta, Calin},
  title     = {Control Barrier Functions for Systems with High Relative Degree},
  booktitle = {Proc. IEEE 58th Conf. on Decision and Control (CDC)},
  pages     = {474--479}, year = {2019},
  doi       = {10.1109/CDC40024.2019.9029455}
}

@article{xiao2022hocbf,
  author  = {Xiao, Wei and Belta, Calin},
  title   = {High-Order Control Barrier Functions},
  journal = {IEEE Transactions on Automatic Control},
  volume  = {67}, number = {7}, pages = {3655--3662}, year = {2022},
  doi     = {10.1109/TAC.2021.3105491}
}

@article{fiorini1998vo,
  author  = {Fiorini, Paolo and Shiller, Zvi},
  title   = {Motion Planning in Dynamic Environments Using Velocity Obstacles},
  journal = {The International Journal of Robotics Research},
  volume  = {17}, number = {7}, pages = {760--772}, year = {1998},
  doi     = {10.1177/027836499801700706}
}

@inproceedings{tayal2024c3bf,
  author    = {Goswami, Bhavya Giri and Tayal, Manan and Rajgopal, Karthik and
               Jagtap, Pushpak and Kolathaya, Shishir},
  title     = {Collision Cone Control Barrier Functions: Experimental Validation
               on {UGV}s for Kinematic Obstacle Avoidance},
  booktitle = {Proc. American Control Conference (ACC)},
  pages     = {325--331}, year = {2024},
  doi       = {10.23919/ACC60939.2024.10644338}
}

@inproceedings{bansal2017hj,
  author    = {Bansal, Somil and Chen, Mo and Herbert, Sylvia and Tomlin, Claire J.},
  title     = {{Hamilton-Jacobi} Reachability: A Brief Overview and Recent Advances},
  booktitle = {Proc. IEEE 56th Conf. on Decision and Control (CDC)},
  pages     = {2242--2253}, year = {2017},
  doi       = {10.1109/CDC.2017.8263977}
}

@article{fraichard2004ics,
  author    = {Fraichard, Thierry and Asama, Hajime},
  title     = {Inevitable Collision States --- A Step Towards Safer Robots?},
  journal   = {Advanced Robotics},
  volume    = {18}, number = {10}, pages = {1001--1024}, year = {2004},
  doi       = {10.1163/1568553042674662}
}

@article{xu2015robustcbf,
  author  = {Xu, Xiangru and Tabuada, Paulo and Grizzle, Jessy W. and Ames, Aaron D.},
  title   = {Robustness of Control Barrier Functions for Safety Critical Control},
  journal = {IFAC-PapersOnLine}, volume = {48}, number = {27}, pages = {54--61},
  year    = {2015}, doi = {10.1016/j.ifacol.2015.11.152}
}

@article{alan2023paramcbf,
  author  = {Alan, Anil and Molnar, Tamas G. and Ames, Aaron D. and Orosz, Gabor},
  title   = {Parameterized Barrier Functions to Guarantee Safety under Uncertainty},
  journal = {IEEE Control Systems Letters}, volume = {7}, pages = {2077--2082},
  year    = {2023}, doi = {10.1109/LCSYS.2023.3284857}
}

@inproceedings{polack2017kinematic,
  title={The kinematic bicycle model: A consistent model for planning feasible trajectories for autonomous vehicles?},
  author={Polack, Philip and Altch{\'e}, Florent and d'Andr{\'e}a-Novel, Brigitte and de La Fortelle, Arnaud},
  booktitle={2017 IEEE intelligent vehicles symposium (IV)},
  pages={812--818},
  year={2017},
  organization={IEEE}
}

@inproceedings{dacs2022robust,
  title={Robust safe control synthesis with disturbance observer-based control barrier functions},
  author={Da{\c{s}}, Ersin and Murray, Richard M},
  booktitle={2022 IEEE 61st conference on decision and control (CDC)},
  pages={5566--5573},
  year={2022},
  organization={IEEE}
}

@article{dacs2025robust,
  title={Robust control barrier functions using uncertainty estimation with application to mobile robots},
  author={Da{\c{s}}, Ersin and Burdick, Joel W},
  journal={IEEE Transactions on Automatic Control},
  volume={70},
  number={7},
  pages={4766--4773},
  year={2025},
  publisher={IEEE}
}

@article{quan2025observer,
  title={Observer-based environment robust control barrier functions for safety-critical control with dynamic obstacles},
  author={Quan, Ying Shuai and Zhou, Jian and Frisk, Erik and Chung, Chung Choo},
  journal={IEEE Control Systems Letters},
  year={2025},
  publisher={IEEE}
}

@inproceedings{kim2025robust,
  title={Robust control barrier function design for high relative degree systems: Application to unknown moving obstacle collision avoidance},
  author={Kim, Kwang Hak and Diagne, Mamadou and Krsti{\'c}, Miroslav},
  booktitle={2025 American Control Conference (ACC)},
  pages={355--360},
  year={2025},
  organization={IEEE}
}

@inproceedings{choi2021robust,
  title={Robust control barrier--value functions for safety-critical control},
  author={Choi, Jason J and Lee, Donggun and Sreenath, Koushil and Tomlin, Claire J and Herbert, Sylvia L},
  booktitle={2021 60th IEEE Conference on Decision and Control (CDC)},
  pages={6814--6821},
  year={2021},
  organization={IEEE}
}

@article{mitchell2005hji,
  title={A time-dependent Hamilton-Jacobi formulation of reachable sets for continuous dynamic games},
  author={Mitchell, Ian M and Bayen, Alexandre M and Tomlin, Claire J},
  journal={IEEE Transactions on automatic control},
  volume={50},
  number={7},
  pages={947--957},
  year={2005},
  publisher={IEEE}
}

@inproceedings{haraldsen2024safety,
  title={Safety-critical control of nonholonomic vehicles in dynamic environments using velocity obstacles},
  author={Haraldsen, Aurora and Wiig, Martin S and Ames, Aaron D and Pettersen, Kristin Y},
  booktitle={2024 American Control Conference (ACC)},
  pages={3152--3159},
  year={2024},
  organization={IEEE}
}

@article{huang2025dynamic,
  title={Dynamic collision avoidance using velocity obstacle-based control barrier functions},
  author={Huang, Jihao and Zeng, Jun and Chi, Xuemin and Sreenath, Koushil and Liu, Zhitao and Su, Hongye},
  journal={IEEE Transactions on Control Systems Technology},
  volume={33},
  number={5},
  pages={1601--1615},
  year={2025},
  publisher={IEEE}
}

@article{jankovic2018robust,
  author  = {Jankovic, Mrdjan},
  title   = {Robust Control Barrier Functions for Constrained Stabilization of
             Nonlinear Systems},
  journal = {Automatica}, volume = {96}, pages = {359--367}, year = {2018},
  doi     = {10.1016/j.automatica.2018.07.004}
}

@article{kolathaya2019issf,
  author  = {Kolathaya, Shishir and Ames, Aaron D.},
  title   = {Input-to-State Safety with Control Barrier Functions},
  journal = {IEEE Control Systems Letters}, volume = {3}, number = {1},
  pages   = {108--113}, year = {2019}
}

@article{alan2023issf,
  author  = {Alan, Anil and Taylor, Andrew J. and He, Chaozhe R. and
             Ames, Aaron D. and Orosz, G{\'a}bor},
  title   = {Control Barrier Functions and Input-to-State Safety with Application
             to Automated Vehicles},
  journal = {IEEE Transactions on Control Systems Technology},
  volume  = {31}, number = {6}, pages = {2744--2759}, year = {2023}
}

@article{nguyen2022robust,
  author  = {Nguyen, Quan and Sreenath, Koushil},
  title   = {Robust Safety-Critical Control for Dynamic Robotics},
  journal = {IEEE Transactions on Automatic Control},
  volume  = {67}, number = {3}, pages = {1073--1088}, year = {2022}
}

@article{breeden2023robust,
  author  = {Breeden, Joseph and Panagou, Dimitra},
  title   = {Robust Control Barrier Functions under High Relative Degree and Input
             Constraints for Satellite Trajectories},
  journal = {Automatica}, volume = {155}, pages = {111109}, year = {2023}
}

\appendices
\section{Pointwise Robustification Baseline (R-DPCBF)}
\label{app:rdpcbf}

As a baseline, we consider a robustified version of DPCBF that accounts for bounded obstacle maneuvers directly in the barrier-rate constraint. The controller assumes the same obstacle capability set $\mathcal F$ as AR-DPCBF but treats the unknown obstacle inputs as a
disturbance acting on $\dot h$.

Since the DPCBF barrier $h$ depends on the relative velocity, it has relative degree one with respect to the obstacle inputs. Its time derivative can therefore be decomposed as
\begin{equation}
\begin{aligned}
\dot h
&=
L_{\boldsymbol{f}} h
+
L_{\boldsymbol{g}} h\,\boldsymbol{u}
+
(\nabla_{\boldsymbol p_o} h)^\top \boldsymbol{v}_o
+
\Delta(\boldsymbol{x},a_o,\omega_o),
\end{aligned}
\label{eq:hdot_split}
\end{equation}
where $\Delta(\boldsymbol{x},a_o,\omega_o) = ({\partial h}/{\partial \boldsymbol{v}_o})a_o + ({\partial h}/{\partial\theta_o})\omega_o .$
For $(a_o,\omega_o)\in\mathcal F$, the disturbance satisfies $\Delta(\boldsymbol{x},a_o,\omega_o)\ge -m(\boldsymbol{x})$, where
$m(\boldsymbol{x}) =
a_{\mathrm{obs,max}}\|\nabla_{\boldsymbol v_o}h\|
+\omega_{\mathrm{obs,max}}|\partial_{\theta_o}h|.$
Substituting into $\dot h+\psi(h)\ge0$ yields,
\begin{equation}
L_{\boldsymbol{f}} h
+
L_{\boldsymbol{g}} h\,\boldsymbol{u}
+
(\nabla_{\boldsymbol p_o} h)^\top \boldsymbol{v}_o
-
m(\boldsymbol{x})
\ge
-\psi(h),
\label{eq:rdpcbf_constraint}
\end{equation}
which guarantees $\dot h\ge-\psi(h)$
for every admissible obstacle input. Since $m(\boldsymbol{x})$ depends only on the state, \eqref{eq:rdpcbf_constraint}
remains affine in the control input. The resulting controller therefore retains the same quadratic-program structure as DPCBF.

\section{Additional Theoretical Proofs}
\label{app:proofs}

\begin{lemma}\label{lem:Lmin}
Suppose~\eqref{eq:floor} holds, so that $\lambda^{*},\mu^{*}\ge0$ on
$\mathcal{B}=\partial\Cset\cap\Omega\cap\{d\ge r\}$
(Corollary~\ref{cor:floor}). Let
$\Sigma^{+}:=(k_{\mu}d_{\max})^{2}/(4v_{\mathrm{rel,min}}^{2})$. If
\begin{equation}
k_{\mu}\,d_{\max}<2\,v_{\mathrm{rel,min}},
\label{eq:Lmin-cond}
\end{equation}
then $\Lambda(\boldsymbol{x})\ge1-\Sigma^{+}>0$ for every $\boldsymbol{x}\in\mathcal{B}$, so
$\Lmin:=\inf_{\mathcal{B}}\Lambda\ge1-\Sigma^{+}>0$.
\end{lemma}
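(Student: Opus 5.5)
The plan is to discard the lateral term in $\Lambda$ and bound the longitudinal factor from below using the boundary identity $h^{*}=0$. Since $\Lambda(\boldsymbol{x})=\sqrt{(1-\sigma\vrx)^{2}+(2\lambda^{*}-\sigma)^{2}\vry^{2}}\ge|1-\sigma\vrx|\ge 1-\sigma\vrx$, it suffices to show
\[
1-\sigma\vrx\ge 1-\Sigma^{+}\quad\text{on }\mathcal{B}.
\]
Positivity of $1-\Sigma^{+}$ is then exactly condition~\eqref{eq:Lmin-cond}.

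\emph{Rewrite $1-\sigma\vrx$ on the boundary.} At $\boldsymbol{x}\in\mathcal{B}$ we have $h^{*}=0$. Setting $X:=\lambda^{*}\vry^{2}$, we have $X\ge0$ by Corollary~\ref{cor:floor}, and the boundary condition reads $\vrx=-(X+\mu^{*})$. The second term of $\sigma$ in~\eqref{eq:sigma} is $\kappa d/(\gamma\nv^{3})=\Delta\mu/\nv^{2}$, by~\eqref{eq:mustar}. Hence $\sigma=(X-\Delta\mu)/\nv^{2}$, and therefore
\[
1-\sigma\vrx \;=\; 1+\frac{(X-\Delta\mu)(X+\mu^{*})}{\nv^{2}}.
\]
This is exact, and it shows the only way the factor can drop below $1$ is when $X<\Delta\mu$.

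\emph{Minimize over $X$.} The product $(X-\Delta\mu)(X+\mu^{*})$ is a quadratic in $X$ whose roots are $\Delta\mu$ and $-\mu^{*}$. Its global minimum over $\mathbb{R}$ is $-(\Delta\mu+\mu^{*})^{2}/4$. By~\eqref{eq:mustar}, $\Delta\mu+\mu^{*}=k_{\mu}d$, so this minimum equals $-(k_{\mu}d)^{2}/4$. The constraint $X\ge0$ can only raise the minimum, so no case split is needed; the corner $X=0$ is covered by AM--GM, since $\Delta\mu\,\mu^{*}\le (k_\mu d)^2/4$. This cancellation, namely that the contraction $\Delta\mu$ and the retained $\mu^{*}$ sum back to the nominal $k_{\mu}d$, is the crux of the argument. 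It is also why the final bound does not depend on $\kappa$ or $\gamma$.

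\emph{Conclude.} Combining the previous two steps gives
\[
1-\sigma\vrx\ge 1-\frac{(k_{\mu}d)^{2}}{4\nv^{2}}.
\]
On $\mathcal{B}\subseteq\Omega$ we have $d\le d_{\max}$ and $\nv\ge v_{\mathrm{rel,min}}$ (Proposition~\ref{prop:bounds}), so the right-hand side is at least $1-\Sigma^{+}$. Since $\Sigma^{+}<1$ by~\eqref{eq:Lmin-cond}, we get $1-\sigma\vrx>0$, and hence $\Lambda\ge|1-\sigma\vrx|\ge1-\Sigma^{+}$. Taking the infimum over $\mathcal{B}$ gives $\Lmin\ge1-\Sigma^{+}>0$.

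The main obstacle I expect is spotting the substitution $\sigma=(X-\Delta\mu)/\nv^{2}$ together with the identity $\Delta\mu+\mu^{*}=k_\mu d$. A naive bound on $|\sigma|$ and $|\vrx|$ separately would leave an uncontrolled $\kappa$-dependence, which this route avoids.
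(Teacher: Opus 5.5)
Your proof is correct and follows the paper's route: you drop the lateral term, use $h^{*}=0$ together with $\Delta\mu+\mu^{*}=k_{\mu}d$ to write $\sigma\vrx$ as a quadratic, and bound that quadratic by its extremum $(k_{\mu}d)^{2}/(4\nv^{2})$. The only difference is that you parametrize by $X=\lambda^{*}\vry^{2}$ where the paper uses $\chi=\vrx/\nv$ (an affine change of variable), and you take the unconstrained extremum over $\mathbb{R}$, which makes the paper's restriction $\chi\in[-1,0]$, its check $\chi^{*}\in(-1,0)$, and hence its use of the floor $\lambda^{*},\mu^{*}\ge0$ unnecessary for this inequality.
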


\begin{proof}
From \eqref{eq:Lambda} and the inequality
$\sqrt{a^{2}+b^{2}}\ge |a|$, we have
$\Lambda\ge |1-\sigma\vrx|$. It therefore suffices to obtain an upper
bound on $\sigma \vrx$. From \eqref{eq:mustar}, $\kappa d/(\gamma\nv)=k_{\mu}d-\mu^{*}$. Substituting this into
\eqref{eq:sigma} and multiplying by $\nv^{2}$ gives
$\sigma\nv^{2}
=
\lambda^{*}\vry^{2}
+\mu^{*}
-k_{\mu}d.$ On $\partial\Cset$, the boundary condition
$h^{*}=\vrx+\lambda^{*}\vry^{2}+\mu^{*}=0$ implies
$\lambda^{*}\vry^{2}+\mu^{*}=-\vrx$, and hence
$\sigma\nv^{2}
=
-\vrx-k_{\mu}d.$ Since $\lambda^{*},\mu^{*}\ge0$ on $\mathcal{B}$, the boundary condition also gives $\vrx=-(\lambda^{*} \vry^{2}+\mu^{*})\le0$.
Multiplying by $\vrx/\nv^{2}$ and introducing
$\chi:=\vrx/\nv$, we obtain
$\sigma\vrx
=
-\chi^{2}
-({k_{\mu}d}/{\nv})\chi,$
with $\chi\in[-1,0]$ because $|\vrx|\le\nv$ and $\vrx\le0$. The right-hand side is a concave quadratic in $\chi$ with stationary point
$\chi^{*}=-k_{\mu}d/(2\nv)$. Moreover, on $\mathcal{B}$ we have
$d\le d_{\max}$ and $\nv\ge v_{\mathrm{rel,min}}$, so
\[
\frac{k_{\mu}d}{2\nv}
\le
\frac{k_{\mu}d_{\max}}
     {2v_{\mathrm{rel,min}}}
<1
\]
by \eqref{eq:Lmin-cond}, hence $\chi^{*}\in(-1,0)$ and the maximum of
the quadratic over $[-1,0]$ is attained at $\chi^{*}$. Evaluating the
quadratic at $\chi^{*}$ yields
\[
\sigma\vrx
\le
\frac{1}{4}
\left(
\frac{k_{\mu}d}{\nv}
\right)^{2}
\le
\frac{1}{4}
\left(
\frac{k_{\mu}d_{\max}}
     {v_{\mathrm{rel,min}}}
\right)^{2}
=
\Sigma^{+}.
\]
Under \eqref{eq:Lmin-cond}, $\Sigma^{+}<1$, and therefore
$1-\sigma\vrx\ge1-\Sigma^{+}>0$. Consequently, $\Lambda
\ge
|1-\sigma\vrx|
=
1-\sigma\vrx
\ge
1-\Sigma^{+}.$ Taking the infimum over $\mathcal{B}$ yields
$\Lmin=\inf_{\mathcal{B}}\Lambda\ge1-\Sigma^{+}>0$.
\end{proof}

\end{document}